\documentclass{article} %
\usepackage[utf8]{inputenc}

\usepackage{etoolbox}
\newtoggle{nips}
\global\toggletrue{nips} %
\newcommand{\arxiv}[1]{\iftoggle{nips}{}{#1}}
\newcommand{\nips}[1]{\iftoggle{nips}{#1}{}}

\usepackage{txie}

\makeatletter
\AtBeginDocument{%
  \let\cref@genuine@label\label
  \g@addto@macro\@arrayparboxrestore{\let\label\cref@genuine@label}%
}
\makeatother
\nips{\usepackage[main, final]{neurips_2026}}

\usepackage[suppress]{color-edits}

\addauthor[Tengyang]{tx}{teal}
\addauthor[Haoqun]{hq}{blue}
\addauthor[Yurun]{yy}{purple}

\arxiv{
\usepackage{geometry}
\geometry{letterpaper, margin=1in}
}

\makeatletter
\arxiv{
\usepackage{parskip}
\def\thm@space@setup{%
  \thm@preskip=\parskip
  \thm@postskip=\parskip
}
}
\makeatother

\usepackage[T1]{fontenc}
\arxiv{\usepackage{lmodern}}  %
\usepackage[scaled=.91]{helvet}
\usepackage{inconsolata}
\usepackage{microtype}
\usepackage{bm}
\usepackage{float}

\allowdisplaybreaks
\arxiv{\emergencystretch=2em}

\title{Anatomy of Off-Policy Policy Gradient: Importance Sampling, KL Regularization, and Baselines}

\author{%
  Haoqun Cao\quad Yurun Yuan\quad Tengyang Xie\\
  University of Wisconsin--Madison\\
  \texttt{hcao65@wisc.edu, yurun\_yuan@cs.wisc.edu, tx@cs.wisc.edu}
}

\date{}

\begin{document}

\maketitle

\begin{abstract}
Reinforcement learning has become a central tool for large language model (LLM) post-training, where policy gradient methods are routinely deployed off-policy, even though vanilla policy gradient assumes on-policy sampling. We study when such off-policy deployment is justified, and what roles its core ingredients, including importance sampling, KL regularization, and baselines, play in handling the resulting distribution shift. We show that for a general class of off-policy policy gradient updates, consistency, in the sense that the optimal policy remains a stationary point, requires one of two structures at the optimal policy: trajectory-level importance weighting, which suffers from the well-known curse of horizon, or KL regularization with a baseline equal to the optimal soft value, instantiated by the grouped mean. Focusing on the KL route, we identify it with the group-relative policy gradient (GRPG) update, establish an equivalence to a trajectory-level Bellman residual minimization objective, and obtain a finite-sample regret bound under all-policy coverage. This also gives a theoretical account of a variant of the empirically successful GRPO objective. We further uncover a hidden role of baselines in the off-policy setting: a constant shift in the baseline implements pessimism, the standard remedy in offline RL for partial data coverage. We instantiate this via the asymmetric REINFORCE objective, and connect it to the game-theoretic offline RL framework, showing a regret bound with single-policy coverage.  Together, these results offer a unified theoretical view of off-policy policy gradient in LLM post-training and connect it to classical ideas in offline RL.
\end{abstract}

\section{Introduction}

The recent years have witnessed a tremendous amount of work using reinforcement learning (RL) for large language models (LLMs), with applications ranging from instruction following and human preference alignment \txreplace{\citep{rafailov2023direct}}{\citep{ziegler2019fine,ouyang2022training,bai2022training}} to mathematical and code reasoning \citep{shao2024deepseekmath,guo2025deepseek}. The task of next-token prediction \txedit{is }formulated as  a long-horizon sequential decision problem, and the language model is trained as a policy through standard policy optimization techniques.

Among policy optimization methods in RL, policy gradient \citep{williams1992simple, sutton1999policy} is a fundamental approach and a building block of many modern algorithms. It has also been widely adopted in LLM post-training \citep{ahmadian2024back}. In its vanilla form, policy gradient requires on-policy sampling to construct an unbiased gradient estimate; in practice, however, it is routinely deployed off-policy, with various modifications, due to considerations such as replay buffers \citep{wang2017sample}, distributed actor-learner asynchrony \citep{espeholt2018impala}, or the cost of regenerating rollouts after each update \citep{schulman2017proximal, shao2024deepseekmath}. \textbf{A fundamental question is then: when does off-policy policy gradient work for LLM post-training, and how do its core ingredients --- importance sampling, KL regularization, and baselines --- play their roles?} To our knowledge, this question has not been systematically studied in the LLM literature, and this work aims to fill the gap.

When data is sampled from a behavior policy rather than the current learner, one inherits the setting of offline (or batch) RL \citep{lange2012batch, fujimoto2019off,levine2020offline}, where a long line of work has converged on the \emph{pessimism principle}: under partial data coverage, the learner should be conservative on regions where data is sparse, in order to avoid overestimation \citep{jin2021pessimism,xie2021bellman}. \textbf{A second focus of this work is to bridge the empirical practice of off-policy LLM post-training with the principles developed in the offline RL literature.}

Classical off-policy policy gradient methods make the update valid by recovering a quantity of the target policy from behavior data: expectations under $\pi$ via importance weighting \citep{precup2000eligibility,jiang2016doubly,wang2017sample,espeholt2018impala}, the state distribution $d^\pi$ via density-ratio estimation \citep{liu2019off}, or the action value $Q^\pi$ via an off-policy critic \citep{degris2012off}. None of these scales comfortably to LLM post-training: trajectory-level importance weights have variance exponential in the horizon, density ratios over text states are hard to estimate, and a critic adds substantial overhead. The off-policy objectives that work in practice use none of them, relying only on KL regularization and a grouped-mean baseline; this paper asks why that minimal set of ingredients suffices (see \cref{app:related_work} for further discussion).

\txedit{The off-policy objectives we study \citep[e.g.,][]{shao2024deepseekmath,yuan2025trajectory,arnal2025asymmetric} and related works in this direction with similar objectives \citep[e.g.,][]{ji2024enhancing,wang2025offline,tang2025rl,wang2026off} have been extensively studied empirically in recent LLM post-training work; our goal in this paper is to provide a unified theoretical understanding of when and why they remain consistent under off-policy distribution shift, what role each ingredient (importance sampling, KL, baseline) plays in this, and how the resulting guarantees connect to the offline RL literature.}

\paragraph{Contributions}
This work develops a theoretical framework for off-policy policy gradient in LLM post-training.

\textit{(1) A characterization of consistent off-policy PG updates.} 
We show that for a general class of off-policy PG updates in the autoregressive setting, a consistent update, i.e., one that preserves the optimal policy as a stationary point, must at the optimal policy either use \emph{trajectory-level importance weights} or use  \emph{KL regularization paired with a baseline equal to the optimal soft value} (Proposition~\ref{prop:neccesary_condition_correcting_oppg}). The importance-sampling route suffers from the well-known curse of horizon, so we focus on the KL route, with the grouped-mean baseline, in the rest of the paper.

\textit{(2) Analysis of KL regularization with a grouped-mean baseline.} We \txreplace{instantiate}{identify} the KL route \txreplace{as}{with} the \emph{group-relative policy gradient} (GRPG) update, develop the intuition behind its correctness, and establish an equivalence to a trajectory-level Bellman residual minimization \txreplace{objective}{(TBRM) objective of \citet{yuan2025trajectory}} (\cref{prop:grpg_tbrm_equivalence}). This connects GRPG\txreplace{ or the similarempirically}{, a variant of the empirically} successful GRPO objective \citep{shao2024deepseekmath}\txreplace{ -to}{, to} classical value-based RL principles. We further give a finite-sample regret bound under all-policy coverage (\cref{thm:bound_all_policy_coverage}).

\textit{(3) A hidden role of baselines: implementing pessimism.} Within the KL route, we show that a constant shift in the baseline has a surprising effect beyond variance reduction: it implements \emph{pessimism}, the standard remedy in offline RL for partial coverage. \txreplace{We instantiate this via the asymmetric REINFORCE (AsymRE) objective \citep{arnal2025asymmetric}, give it}{We connect this insight to the asymmetric REINFORCE (AsymRE) objective of \citet{arnal2025asymmetric}, providing} a game-theoretic interpretation connecting it to the adversarial offline RL framework of \citet{cheng2022adversarially}, and upgrade the regret bound from all-policy to single-policy coverage (\cref{thm:bound_single_policy_coverage}).

Taken together, these three parts trace a single path from the off-policy policy-gradient update template to the two finite-sample guarantees. \cref{fig:roadmap} lays out that path, together with the equivalences and the coverage conditions encountered along the way, and may be useful as a map while reading the rest of the paper.

\begin{figure}[t]
    \centering
    \includegraphics[width=\linewidth]{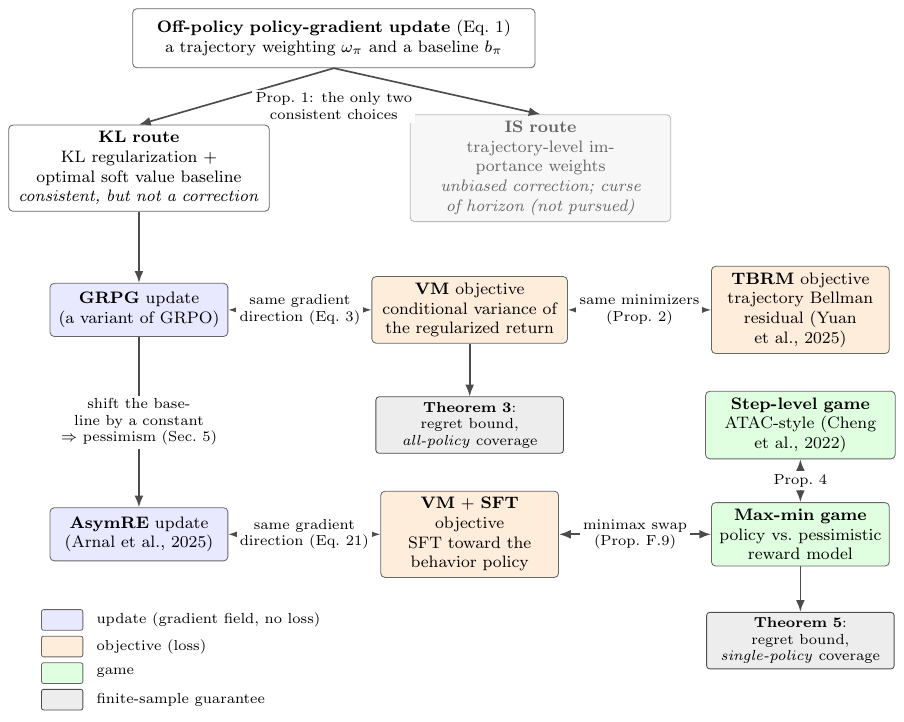}
    \caption{Roadmap of the paper: the off-policy policy-gradient update template, the two consistent routes it admits, the chain of equivalent objectives and games along the KL route, and the two finite-sample guarantees.}
    \label{fig:roadmap}
\end{figure}

\section{Preliminaries}
This section provides the necessary background for our work. We first review the fundamentals of episodic Markov Decision Process (MDP) and KL-regularized reinforcement learning, then discuss the policy gradient methods.

\subsection{Episodic MDP and KL-regularized RL}

\paragraph{MDP Fundamentals}
We consider a finite-horizon MDP represented by $M=(H, \mathcal S,\mathcal A, T, r^*,\rho)$ consisting of a state space $\mathcal S$, an action space $\mathcal A$, a reward function $r^*=\{r^*_h\}, r^*_h:\mathcal S\times \mathcal A\rightarrow [0,1]$, an initial state distribution $\rho\in \Delta(\mathcal S)$, and a \textbf{deterministic transition} function $T=\{T_h\}, T_h:\mathcal S\times \mathcal A\rightarrow \mathcal S$, which we assume throughout the paper and which is central to our analysis. A policy $\pi=\{\pi_h\},\pi_h:\mathcal S\rightarrow \Delta(\mathcal A)$ maps states to distributions over
actions and induces a distribution over trajectories $\tau=(s_1,a_1,\ldots, s_H, a_H)$, where we write $s_h(\tau)$ and $a_h(\tau)$ for the state and action of $\tau$ at step $h$; the conditional distribution of $\tau$ given $s_1$ induced by policy $\pi$ is denoted as $\mathbb P^\pi(\cdot|s_1)$. We also define the trajectory level reward function as $ R^*(\tau):=\sum_{h=1}^H r^*_h(s_h,a_h)$. We use $\mathbb E_{\pi}[\cdot]$ to denote the expectation under such a data generating process induced by $\pi$: $s_1\sim \rho, a_h\sim \pi_h(\cdot|s_h), s_{h+1}=T_h(s_h,a_h), h=1,\ldots H$, or equivalently $s_1\sim\rho, \tau\sim \mathbb P^\pi(\cdot|s_1)$. For any policy $\pi$, we define  $d^\pi_h(\cdot, \cdot)\in \Delta(\mathcal S\times \mathcal A)$ as the induced marginal state-action distribution at step $h$. The goal of RL is to find a policy $\pi$ that maximizes the expected cumulative reward $J(\pi):=\mathbb E_{\pi}[R^*(\tau)]$.

\paragraph{KL-regularized RL}
In many applications, a KL regularization term is incorporated to encourage the learned policy to stay close to a reference policy $\pi_{\textnormal{ref}}$. This gives rise to the KL-regularized return and objective:
\begin{align*}
    R^*_{\pi,\beta}(\tau)
    := R^*(\tau)-\beta \sum_{h=1}^H \log\frac{\pi_h(a_h|s_h)}{\pi_{\textnormal{ref},h}(a_h|s_h)},
    \qquad
    J_\beta(\pi):=\mathbb E_{\tau\sim\pi}\big[R^*_{\pi,\beta}(\tau)\big].
\end{align*}
Throughout the paper, we let $\pi^*$ denote the optimal policy of $J(\pi)$ in the unregularized case and of $J_\beta(\pi)$ in the KL-regularized case; the relevant case will be clear from context.

\paragraph{Soft value functions}
We now define the soft value functions via backward recursion under the deterministic transition $s_{h+1}=T_h(s_h,a_h)$. Set $V^\pi_{\beta,H+1}(s):=0$. For $h=H,H-1,\ldots,1$, define
\begin{align*}
    Q^\pi_{\beta,h}(s,a)
    &:=
    r_h^*(s,a)+V^\pi_{\beta,h+1}\big(T_h(s,a)\big),\\
    V^\pi_{\beta,h}(s)
    &:=
    \mathbb E_{a\sim \pi_h(\cdot|s)}\!\left[
        Q^\pi_{\beta,h}(s,a)
        -\beta\log\frac{\pi_h(a|s)}{\pi_{\textnormal{ref},h}(a|s)}
    \right].
\end{align*}
We write $V^*_{\beta,h}, Q^*_{\beta,h}$ for the soft value functions of the optimal policy. In the unregularized case, we only need the optimal value at the initial state, $V^*_1(s_1):=\max_{\tau}R^*(\tau)$, where the maximum is over trajectories from $s_1$.

\paragraph{Coverage}
We will also discuss \emph{coverage conditions}, which measure how well the offline data distribution covers the distributions induced by the learner. The strongest such requirement, \emph{all-policy coverage}, asks that the data cover every policy in the policy class; this is typically needed when the algorithm has no built-in protection against overestimation. A weaker and more refined notion is \emph{single-policy coverage}: for any comparator policy $\pi$ that is well covered by the data, the algorithm is guaranteed to perform at least as well as $\pi$. Throughout the paper, the specific coverage condition required by each algorithm is stated in its assumptions, typically as a bound on the density ratio between $d^\pi$ (or $d^{\pi^*}$) and $d^\mu$, or on the $\chi^2$-divergence between them, denoted $\chi^2(P\|Q)$.

\subsection{Policy Gradient Method}
Throughout, policies are parameterized, $\pi=\pi_\theta$ with $\theta\in\Theta$, and $\nabla$ denotes the gradient with respect to $\theta$; we suppress $\theta$ when it is clear from context. Policy gradient methods are motivated by the well-known policy gradient lemma, which directly expresses the gradient of $J(\pi)$ in terms of $\pi$:
\begin{equation*}
    \nabla J(\pi)=\mathbb E_\pi\left[R^*(\tau)\sum_{h=1}^H \nabla \log \pi_h(a_h|s_h)\right].
\end{equation*}
Computing this gradient requires sampling from the current policy (i.e., on-policy sampling) and running stochastic gradient descent. In this paper, we instead consider the off-policy or offline setting, where data is collected by a fixed behavior policy $\mu$. We say that a policy $\pi$ is \emph{supported on $\pi_{\textnormal{ref}}$} if $\textnormal{supp}(\pi_h(\cdot|s))\subseteq \textnormal{supp}(\pi_{\textnormal{ref},h}(\cdot|s))$ for all $(s,h)$, and assume throughout that $\mu$ is supported on $\pi_{\textnormal{ref}}$. In this case, simply replacing the measure $\pi$ with $\mu$ in the above expectation no longer yields $\nabla J(\pi)$.

\section{Two Consistent Routes for Off-Policy Policy Gradient}
\label{sec:correct-off-policy}
In the off-policy case, the naive substitution therefore has to be modified. In this work, we consider off-policy updates of the following form:
\begin{equation}\label{eq:off_policy_PG_template}
    \nabla\mathcal E_{\mu,\textrm{PG}}(\pi):=
    -\mathbb E_{\tau\sim \mu}\left[
    \omega_\pi(\tau) A_\pi(\tau)
    \sum_{h=1}^H \nabla \log \pi_h(a_h|s_h)
    \right],
\end{equation}
where
\begin{align*}
    A_\pi(\tau)
    :=&R^*(\tau)-b_\pi(s_1),
    \qquad \textrm{without KL regularization},\\
    A_\pi(\tau)
    :=&R^*_{\pi,\beta}(\tau)-b_\pi(s_1),
    \qquad \textrm{with KL regularization}.
\end{align*}
Here $b_\pi$ is a policy-dependent baseline depending only on the initial state $s_1$, and $\omega_\pi$ is a policy-dependent weighting function defined over the entire trajectory. With a slight abuse of notation, $\nabla\mathcal E_{\mu,\textrm{PG}}(\pi)$ denotes the update direction itself and need not be the gradient of a function of $\pi$; the same convention applies to the updates introduced below. This formulation captures a broad class of off-policy policy-gradient objectives and incorporates the key ingredients commonly used in practice to mitigate off-policy distribution shift:
\begin{enumerate}
    \item Importance weighting, represented by $\omega_\pi(\tau)$.
    \item KL regularization.
    \item A baseline, represented by $b_\pi(s_1)$.
\end{enumerate}
We will show that, for $\pi^*$ to be a stationary point of this update, $\omega_\pi$ and $A_\pi$ must take particular forms.

We say that $\pi$ is a \emph{stationary point} of the update \cref{eq:off_policy_PG_template} if for every  $h\in[H]$, $s\in\mathcal S$, and $a\in\mathcal A$,
\begin{equation}\label{eq:stationarity_condition}
    \mathbb E_{\tau\sim\mu}\Big[\omega_\pi(\tau)A_\pi(\tau)\Big(\mathbf 1\{s_h(\tau)=s,\ a_h(\tau)=a\}-\mathbf 1\{s_h(\tau)=s\}\pi_h(a|s)\Big)\Big]=0.
\end{equation}
Under the tabular softmax parameterization $\pi_{\theta,h}(a|s)=\exp(\theta_h(s,a))/\sum_{a'}\exp(\theta_h(s,a'))$, we have $\frac{\partial}{\partial\theta_h(s,a)}\log\mathbb P^{\pi_\theta}(\tau|s_1)=\mathbf 1\{s_h(\tau)=s,\ a_h(\tau)=a\}-\mathbf 1\{s_h(\tau)=s\}\pi_{\theta,h}(a|s)$, so \cref{eq:stationarity_condition} is the componentwise form of $\nabla\mathcal E_{\mu,\textnormal{PG}}(\pi_\theta)=0$ with respect to $\theta$; unlike the latter, it remains well defined for policies on the boundary of the simplex, such as the optimal policy without KL regularization.

We say that the update \cref{eq:off_policy_PG_template} is \emph{consistent} if $\pi^*$ is a stationary point of it, i.e., if the update preserves stationarity of the optimal policy. Consistency is weaker than unbiasedness: a consistent update need not recover the on-policy gradient $\nabla J(\pi)$ at other policies. The resulting forms of $\omega_\pi$ and $A_\pi$ give rise to two provably consistent routes for off-policy policy optimization: one driven by importance sampling (IS) weights, and the other by KL regularization.
\begin{proposition}\label{prop:neccesary_condition_correcting_oppg}
Consider the autoregressive setting $s_h=(s_1,a_1,\ldots,a_{h-1})$ with discrete initial-state and action spaces. If $\pi^*$ satisfies \cref{eq:stationarity_condition}, then for every $s_1\in\textnormal{supp}(\rho)$:
    \begin{enumerate}
        \item Without KL regularization, there exists $K(s_1)$ such that
        \begin{equation*}
            \omega_{\pi^*}(\tau)=K(s_1)\prod_{h=1}^H \frac{\pi_h^*(a_h|s_h)}{\mu_h(a_h|s_h)}
        \end{equation*}
        for every $\tau\in\textnormal{supp}(\mathbb P^\mu(\cdot|s_1))$ with $R^*(\tau)\neq b_{\pi^*}(s_1)$.
        \item With KL regularization, at least one of the following holds: there exists $K(s_1)$ such that
        \begin{equation*}
            \omega_{\pi^*}(\tau)=K(s_1)\prod_{h=1}^H \frac{\pi_h^*(a_h|s_h)}{\mu_h(a_h|s_h)}\quad\text{for every }\tau\in\textnormal{supp}(\mathbb P^\mu(\cdot|s_1)),
        \end{equation*}
        or
        \begin{equation*}
            b_{\pi^*}(s_1)=V^*_{\beta,1}(s_1)=\mathbb E_{\mu}\left[R^*_{\pi^*,\beta}(\tau) |s_1\right].
        \end{equation*}
    \end{enumerate}
\end{proposition}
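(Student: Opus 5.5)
The plan is to exploit the tree structure of the autoregressive setting: every state $s_h=(s_1,a_1,\ldots,a_{h-1})$ is a prefix, so the event $\{s_h(\tau)=s\}$ is ``$\tau$ passes through the node $s$'', and $\{s_h(\tau)=s,a_h(\tau)=a\}$ is ``$\tau$ passes through the child node $(s,a)$''. For a node $s$ define the flux
\begin{equation*}
F(s):=\mathbb E_{\tau\sim\mu}\big[\omega_{\pi^*}(\tau)A_{\pi^*}(\tau)\mathbf 1\{\tau \text{ passes through } s\}\big].
\end{equation*}
The stationarity condition \cref{eq:stationarity_condition} at $\pi^*$ then reads exactly $F(s\circ a)=\pi^*_h(a|s)F(s)$ for all $h,s,a$, where $s\circ a$ denotes the child node. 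Iterating from the root $s_1$ down to depth $H$ gives, for every complete trajectory $\tau$ starting at $s_1$,
\begin{equation*}
F(\tau)=F(s_1)\prod_{h=1}^H\pi^*_h(a_h|s_h).
\end{equation*}
Since a complete trajectory is a leaf, $F(\tau)=\rho(s_1)\mathbb P^\mu(\tau|s_1)\,\omega_{\pi^*}(\tau)A_{\pi^*}(\tau)$. Dividing by $\rho(s_1)\mathbb P^\mu(\tau|s_1)>0$ for $s_1\in\textnormal{supp}(\rho)$ and $\tau\in\textnormal{supp}(\mathbb P^\mu(\cdot|s_1))$ yields the key identity
\begin{equation*}
\omega_{\pi^*}(\tau)A_{\pi^*}(\tau)=C(s_1)\prod_{h=1}^H\frac{\pi^*_h(a_h|s_h)}{\mu_h(a_h|s_h)},\qquad C(s_1):=F(s_1)/\rho(s_1).
\end{equation*}
Here I use $\mathbb P^\mu(\tau|s_1)=\prod_h\mu_h(a_h|s_h)$, which holds because transitions are deterministic.

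Both claims then follow by analyzing $A_{\pi^*}$ on $\textnormal{supp}(\mathbb P^\mu(\cdot|s_1))$.

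\textbf{Without KL.} The optimal policy $\pi^*$ puts mass only on trajectories with $R^*(\tau)=V^*_1(s_1)$. Take $\tau$ with $R^*(\tau)\neq b_{\pi^*}(s_1)$.
\begin{itemize}
\item If $\tau$ is not optimal, then $\prod_h\pi^*_h(a_h|s_h)=0$. The identity forces $\omega_{\pi^*}(\tau)(R^*(\tau)-b_{\pi^*}(s_1))=0$, hence $\omega_{\pi^*}(\tau)=0$, which matches the claimed form for any $K(s_1)$.
\item If $\tau$ is optimal, then $R^*(\tau)=V^*_1(s_1)\neq b_{\pi^*}(s_1)$. In this case $K(s_1):=C(s_1)/(V^*_1(s_1)-b_{\pi^*}(s_1))$ is a single constant working for all such $\tau$.
\end{itemize}

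\textbf{With KL.} First I show that $R^*_{\pi^*,\beta}(\tau)=V^*_{\beta,1}(s_1)$ for every $\tau$ in the support of $\pi_{\textnormal{ref}}$. This uses the closed form $\pi^*_h(a|s)=\pi_{\textnormal{ref},h}(a|s)\exp\big((Q^*_{\beta,h}(s,a)-V^*_{\beta,h}(s))/\beta\big)$, obtained from the soft Bellman recursion. With it, $r^*_h-\beta\log(\pi^*_h/\pi_{\textnormal{ref},h})=V^*_{\beta,h}(s_h)-V^*_{\beta,h+1}(s_{h+1})$, and the sum over $h$ telescopes.

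Since $\mu$ is supported on $\pi_{\textnormal{ref}}$, it follows that $A_{\pi^*}(\tau)=c(s_1):=V^*_{\beta,1}(s_1)-b_{\pi^*}(s_1)$ is constant on $\textnormal{supp}(\mathbb P^\mu(\cdot|s_1))$.
\begin{itemize}
\item If $c(s_1)\neq0$, dividing the key identity gives the importance-weight form with $K(s_1)=C(s_1)/c(s_1)$, for every $\tau$ in the support.
\item If $c(s_1)=0$, then $b_{\pi^*}(s_1)=V^*_{\beta,1}(s_1)$. Moreover, this equals $\mathbb E_\mu[R^*_{\pi^*,\beta}(\tau)|s_1]$, because the integrand is the constant $V^*_{\beta,1}(s_1)$ on the support of $\mu$.
\end{itemize}

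\textbf{Expected obstacle.} I expect the main care to be needed in two places. The first is the tree-recursion step: justifying $F(s\circ a)=\pi^*(a|s)F(s)$ as the literal content of \cref{eq:stationarity_condition}, and handling nodes of $\mu$-probability zero, where both sides vanish, so that the product formula is asserted only on $\textnormal{supp}(\mathbb P^\mu)$. The second is the telescoping identity for the regularized return, which needs the Gibbs form of $\pi^*$ to hold at every prefix reachable under $\pi_{\textnormal{ref}}$; I would derive that form directly from the backward-recursive definition of $V^*_{\beta,h}$.
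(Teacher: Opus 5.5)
Your proposal is correct and takes essentially the same route as the paper. Your flux $F(s)$ is the paper's prefix mass $m(s)$ scaled by $\rho(s_1)$, and iterating $F(s\circ a)=\pi^*_h(a|s)F(s)$ to the leaf is exactly the paper's identity $M(\tau)=m(s_1)\mathbb P^{\pi^*}(\tau|s_1)$. The case analysis (optimal vs.\ non-optimal trajectories without KL; constant advantage $V^*_{\beta,1}(s_1)-b_{\pi^*}(s_1)$ with KL) is the same, and the only cosmetic difference is that the paper conditions on $s_1$ and divides by $\rho(s_1)$ up front, whereas you keep $\rho(s_1)$ inside $F$ and divide at the end.
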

See proof in \cref{proof_of_two_paths_prop}. The above proposition shows that there are only two candidates at $\pi^*$ for a consistent update: either through trajectory-level importance weights or through KL regularization (with baseline as the soft optimal value). In particular, trajectory-level importance weight has the potential to address the off-policy PG objective, both with and without KL regularization. And under KL regularization, there is an alternative, e.g., by using conditional mean as the baseline (which is the most natural choice and which we adopt throughout).  Note that this proposition only states \textbf{necessary conditions}; we will show subsequently that both choices are indeed consistent. At the same time, the necessity also implies that many alternative choices, or arbitrary combinations of these ingredients, may fail: in such cases, $\pi^*$ may not even be a stationary point of the resulting objective. The two routes differ in nature. Trajectory-level importance weighting is a \emph{correction} in the classical sense: it recovers the on-policy gradient in expectation, and its sufficiency is well known \citep{precup2000eligibility,jiang2016doubly}; the content of \cref{prop:neccesary_condition_correcting_oppg} is the converse. The KL route recovers no on-policy quantity; it is consistent because it changes the optimality condition so that the advantage vanishes at $\pi^*$, as we explain next. Now, we talk about the first route through trajectory-level importance weights.

\paragraph{The IS Route: Curse of Horizon}
Setting $\omega_\pi(\tau)=\prod_{h=1}^H \pi_h(a_h|s_h)/\mu_h(a_h|s_h)$ yields the classical unbiased correction: the update equals $-\nabla J(\pi)$ (resp.\ $-\nabla J_\beta(\pi)$) at the population level. However, since the weight is a product of $H$ per-step policy ratios, its variance typically grows exponentially in the horizon $H$ — the well-known \emph{curse of horizon} \citep{jiang2016doubly,wang2017sample,liu2019off}. This makes the trajectory-level IS route impractical in long-horizon settings, including LLM training. We therefore do not pursue the IS route further. 
Instead, we focus on the KL route, which we develop in the next section.

\section{The KL Route: Grouped Mean Baseline}
\label{sec:kl-route}

KL regularization is one of the hidden gems for off-policy policy optimization. It provides a different route to consistency: unlike trajectory-level importance weighting, it does not recover the on-policy gradient, and so is not a correction in the classical sense; instead, it changes the optimality condition itself. We now explain how KL regularization enables a properly chosen advantage function to satisfy $A_{\pi^*}(\tau)=0$ at the optimum, thus making $\pi^*$ a stationary point.

Fix an initial state $s_1$, without KL regularization, the objective is linear in the trajectory distribution:
\begin{equation*}
    \max_{\pi}
    \mathbb E_{\tau\sim\mathbb P^\pi(\cdot|s_1)}[R^*(\tau)].
\end{equation*}
Hence, the optimum typically lies on the boundary of the probability simplex, and the optimality condition is an inequality: $R^*(\tau)\leq V_1^*(s_1)$, with equality only on selected optimal trajectories. Thus, unless all rewards are tied, the raw advantage cannot be zero for every trajectory.

With KL regularization, the objective becomes
\begin{equation*}
    \max_{\pi}
    \mathbb E_{\tau\sim\mathbb P^\pi(\cdot|s_1)}
    \left[
    R^*(\tau)-\beta\log
    \frac{\mathbb P^\pi(\tau|s_1)}
    {\mathbb P^{\pi_{\textnormal{ref}}}(\tau|s_1)}
    \right].
\end{equation*}
The KL term makes the objective strictly concave in the trajectory distribution and forces the optimum to have full support over $\pi_{\textnormal{ref}}$. As a result, the optimality condition becomes a pointwise equality: for every trajectory $\tau$ in the support of $\mathbb P^{\pi_{\textnormal{ref}}}$,
\begin{equation*}
    R^*(\tau)-\beta\log
    \frac{\mathbb P^{\pi^*}(\tau|s_1)}
    {\mathbb P^{\pi_{\textnormal{ref}}}(\tau|s_1)}
    =V^*_{\beta,1}(s_1)=\mathbb E_\mu\!\left[R^*_{\pi^*,\beta}(\tau)\,\big|\,s_1\right].
\end{equation*}
A derivation of this identity is given in \cref{sec:revisit_KL_RL}. This means the KL-regularized return $R^*_{\pi^*,\beta}(\tau)$ is constant along every trajectory in the reference support. Consequently, choosing the grouped mean baseline $b_{\pi}(s_1)=\mathbb E_\mu[R^*_{\pi,\beta}(\tau)|s_1]$ yields $A_{\pi^*}(\tau)=0$ for every $\tau$ in the support, which makes $\pi^*$ a stationary point of the off-policy policy-gradient objective.
\begin{remark}
    The argument relies on the optimality condition holding along every trajectory, which is where \textbf{deterministic transitions} are used. Under stochastic transitions it holds only in expectation over the next state, so the advantage does not vanish pathwise and $\pi^*$ is in general not a stationary point of the KL route. The same trajectory-level structure underlies previous literature \citep{nachum2017bridging,yuan2025trajectory}.
\end{remark}

\subsection{Value-Based RL interpretation}
Surprisingly, we show that KL regularization, together with the grouped mean baseline, also admits an equivalent value-based RL interpretation. The off-policy PG update we study is \cref{eq:off_policy_PG_template} with $\omega_\pi\equiv 1$ and the grouped-mean baseline,
\begin{equation*}
    \nabla\mathcal E_{\mu, \textnormal{GRPG}}(\pi):=-\mathbb E_{\mu}\left[\left(R^*_{\pi,\beta}(\tau)-\mathbb E_\mu[R^*_{\pi,\beta}(\tau)\mid s_1]\right)\sum_{h=1}^H  \nabla\log \pi_h(a_h|s_h)\right],
\end{equation*}
where GRPG abbreviates \emph{group-relative policy gradient}.

Building on the equivalence framework of soft Q learning and REINFORCE in \citet{schulman2017equivalence}, we first connect the GRPG update to the following conditional variance minimization (VM) objective,
\begin{align*}
    \mathcal E_{\mu,\textnormal{VM}}(\pi):=&\mathbb E_{s_1\sim\rho}\left[\textnormal{Var}_\mu\left(R^*_{\pi,\beta}(\tau)\big |s_1\right)\right].
\end{align*}
The conditional variance objective itself was previously studied as a standalone loss in the contextual-bandit setting by \citet{tang2025rl}; here we show that its gradient coincides with the GRPG update up to a positive constant. A direct calculation shows
\begin{equation}\label{eq:PG_TO_VM}
    \nabla  \mathcal E_{\mu,\textnormal{VM}}(\pi)= 2\beta\nabla\mathcal E_{\mu,\textnormal{GRPG}}(\pi).
\end{equation}
Therefore, following the GRPG update is the same as running gradient descent on $\mathcal E_{\mu,\textnormal{VM}}$, up to the positive scaling constant $2\beta$; the same holds for the corresponding empirical estimators. We now connect $\mathcal E_{\mu,\textnormal{VM}}(\pi)$ to value-based RL theory and show that it shares its minimizing policy with a trajectory-level Bellman residual minimization objective.

Let us consider a $Q$ function class $\mathcal F\subset \{f_h:\mathcal S\times \mathcal A\rightarrow \mathbb R\}$ and the following trajectory Bellman residual minimization (TBRM, originally from \citet{yuan2025trajectory}) objective,
\begin{equation*}
    \mathcal E_{\mu,\textnormal{TBRM}}(f)
    :=
    \mathbb E_{\tau\sim\mu}\left[
    \left(\sum_{h=1}^H \big(f_h(s_h,a_h)-(\mathcal T_{\beta,h}^* f)(s_h,a_h)\big)\right)^{2}
    \right],
\end{equation*}
where $\mathcal T^*_{\beta}$ is the soft Bellman optimality operator induced by the KL-regularized MDP. Under the deterministic transition $s_{h+1}=T_h(s_h,a_h)$, it admits the closed form
\begin{equation*}
    (\mathcal T_{\beta,h}^* f)(s_h,a_h)
    =
    r_h^*(s_h,a_h)
    +
    \beta\log \!\int_{\mathcal A}\!\pi_{\textnormal{ref},h+1}(a|s_{h+1})\exp\!\left(\tfrac{f_{h+1}(s_{h+1},a)}{\beta}\right)\!da,
\end{equation*}
with the second term set to $0$ at $h=H$. After an $f\in\mathcal F$ is learned, the corresponding policy is the soft-greedy policy $\pi_f$ with respect to $f$,
\begin{equation*}
    \pi_{f,h}(a|s)
    \propto
    \pi_{\textnormal{ref},h}(a|s)\exp\!\left(f_h(s,a)/\beta\right).
\end{equation*}
Detailed derivations of $\mathcal T^*_\beta$ and $\pi_{f}$ are deferred to \cref{sec:revisit_KL_RL}. We now state the equivalence between VM and TBRM.
\begin{proposition}\label{prop:grpg_tbrm_equivalence}
    Let $\mathcal F$ be a value function class that is closed under first-step, state-dependent, action-independent shifts, in the sense that for any $f\in\mathcal F$ and any $c:\mathcal S\to\mathbb R$ depending only on $s_1$, the function $f^c$ defined by $f^c_1(s_1,a_1)=f_1(s_1,a_1)+c(s_1)$ and $f^c_h=f_h$ for all $h\ge 2$ also belongs to $\mathcal F$. Denote $\Pi_{\mathcal F}:=\{\pi_f : f\in \mathcal F\}$. Then
    \begin{align*}
        \argmin_{\pi\in \Pi_{\mathcal F}}\mathcal E_{\mu,\textnormal{VM}}(\pi)
        =
        \left\{\pi_{f^*} : f^*\in\argmin_{f\in\mathcal F} \mathcal E_{\mu,\textnormal{TBRM}}(f)\right\}.
    \end{align*}
\end{proposition}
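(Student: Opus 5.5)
Given $f\in\mathcal F$, I will rewrite the TBRM residual along a trajectory as a telescoping sum. Write $V_{f,h}(s):=\beta\log\int\pi_{\textnormal{ref},h}(a|s)\exp(f_h(s,a)/\beta)\,da$, so that $(\mathcal T^*_{\beta,h}f)(s_h,a_h)=r^*_h(s_h,a_h)+V_{f,h+1}(s_{h+1})$ with $V_{f,H+1}\equiv 0$, and $f_h(s,a)=V_{f,h}(s)+\beta\log\frac{\pi_{f,h}(a|s)}{\pi_{\textnormal{ref},h}(a|s)}$ by the definition of the soft-greedy policy. Deterministic transitions make $s_{h+1}=T_h(s_h,a_h)$ hold along each $\tau$. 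Substituting and telescoping should give
\begin{equation*}
\sum_{h=1}^H\big(f_h-\mathcal T^*_{\beta,h}f\big)(s_h,a_h)=V_{f,1}(s_1)-R^*_{\pi_f,\beta}(\tau).
\end{equation*}
Hence $\mathcal E_{\mu,\textnormal{TBRM}}(f)=\mathbb E_{s_1\sim\rho}\mathbb E_\mu[(R^*_{\pi_f,\beta}(\tau)-V_{f,1}(s_1))^2\mid s_1]$. The bias--variance decomposition then gives
\begin{equation*}
\mathcal E_{\mu,\textnormal{TBRM}}(f)=\mathcal E_{\mu,\textnormal{VM}}(\pi_f)+\mathbb E_{s_1\sim\rho}\big[(\mathbb E_\mu[R^*_{\pi_f,\beta}\mid s_1]-V_{f,1}(s_1))^2\big]\ \ge\ \mathcal E_{\mu,\textnormal{VM}}(\pi_f).
\end{equation*}

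Next I will use the shift-closure assumption to make the second term vanish without changing the policy. Replacing $f$ by $f^c$ with $c(s_1)$ depending only on $s_1$ leaves $\pi_{f,1}(\cdot|s_1)$ unchanged, since the softmax is shift-invariant. It also leaves $\pi_{f,h}$ for $h\ge2$ unchanged, because $f_h$ is untouched there. The first-step value becomes $V_{f^c,1}=V_{f,1}+c$. Choosing $c(s_1)=\mathbb E_\mu[R^*_{\pi_f,\beta}\mid s_1]-V_{f,1}(s_1)$ therefore gives $f^c\in\mathcal F$, $\pi_{f^c}=\pi_f$, and $\mathcal E_{\mu,\textnormal{TBRM}}(f^c)=\mathcal E_{\mu,\textnormal{VM}}(\pi_f)$. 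Combining this with the inequality above yields
\begin{equation*}
\min_{f\in\mathcal F}\mathcal E_{\mu,\textnormal{TBRM}}(f)=\min_{\pi\in\Pi_{\mathcal F}}\mathcal E_{\mu,\textnormal{VM}}(\pi)=:m,
\end{equation*}
interpreted as infima if needed, with attainment transferring in both directions.

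For the set equality, I take each inclusion in turn.
\begin{itemize}
\item[($\supseteq$)] If $f^*$ minimizes TBRM, then $\mathcal E_{\mu,\textnormal{VM}}(\pi_{f^*})\le\mathcal E_{\mu,\textnormal{TBRM}}(f^*)=m$, so $\pi_{f^*}$ is a VM minimizer.
\item[($\subseteq$)] If $\pi\in\Pi_{\mathcal F}$ minimizes VM, pick any $f$ with $\pi=\pi_f$ and apply the shift above. The resulting $f^c\in\mathcal F$ has $\mathcal E_{\mu,\textnormal{TBRM}}(f^c)=m$, so it is a TBRM minimizer with $\pi_{f^c}=\pi$.
\end{itemize}

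I expect the main obstacle to be the telescoping identity in the first paragraph. It requires the conventions to line up exactly: the value at $h=H+1$ is zero, the soft-greedy normalizer is exactly $V_{f,h}$, and the identity holds only along trajectories where $\mu$ is supported on $\pi_{\textnormal{ref}}$, so that the log-ratios are finite. Deterministic transitions are essential here, since they make the telescoping pathwise rather than only in expectation.
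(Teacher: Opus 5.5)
Your proposal is correct and follows the paper's proof essentially step for step. It uses the same telescoping identity $\sum_h(f_h-\mathcal T^*_{\beta,h}f)(s_h,a_h)=V_{f,1}(s_1)-R^*_{\pi_f,\beta}(\tau)$, the same bias--variance split giving $\mathcal E_{\mu,\textnormal{TBRM}}(f)\ge\mathcal E_{\mu,\textnormal{VM}}(\pi_f)$, the same first-step shift $c(s_1)=\mathbb E_\mu[R^*_{\pi_f,\beta}\mid s_1]-V_{f,1}(s_1)$ that closes the gap without changing $\pi_f$, and the same two inclusions; reading the minima as infima is a harmless, slightly more careful refinement.
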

The intuition is that the TBRM objective contains a free degree of freedom that does not affect the induced policy: shifting any $f$ by a function of $s_1$ leaves $\pi_f$ unchanged but allows the squared Bellman residual to be reduced. When $\mathcal F$ is closed under such shifts, the optimal shift is precisely the conditional mean over $s_1$, which turns the squared error into the conditional variance and yields the equivalence. Therefore, the grouped-mean off-policy policy-gradient update can also be viewed as minimizing a trajectory-level Bellman residual, providing a value-based interpretation of the method and connecting it to Bellman-residual and value-iteration principles in reinforcement learning.

\subsection{Finite Sample Analysis}
Finally, we provide a finite-sample analysis of the GRPG update, showing that the KL route with the grouped-mean baseline provably learns the optimal policy under a coverage condition. We also discuss a potential limitation of this algorithm from the perspective of offline RL. Let
$D=\{s_1^i, a_{1:H}^{ij}, s_{2:H+1}^{ij}\}_{i=1,\,j=1}^{i=n,\,j=m}$,
where $s_1^i\sim \rho$ i.i.d.\ and, conditioning on $s_1^i$, $\tau^{ij}\sim \mathbb P^\mu(\cdot|s_1^i)$ i.i.d.\ across $j$. Define the empirical GRPG update
\begin{equation*}
\begin{aligned}
    \nabla\mathcal E_{D,\textnormal{GRPG}}(\pi)
    :=
    -\frac{1}{n}\sum_{i=1}^n \frac{1}{m-1}\sum_{j=1}^m
    &\left(
        R^*_{\pi,\beta}(\tau^{ij})
        -\frac{1}{m}\sum_{k=1}^m R^*_{\pi,\beta}(\tau^{ik})
    \right)\\
    &\times\left(\sum_{h=1}^H \nabla\log \pi_h(a_h^{ij}|s_h^{ij})\right).
\end{aligned}
\end{equation*}
Notice that this is a variant of the gradient of the GRPO objective in \citet{shao2024deepseekmath}, which differs in details such as the importance ratio, the normalization, and how the KL term enters. The empirical counterpart of $\mathcal E_{\mu,\textnormal{VM}}$ is
\begin{equation}\label{eq:empirical_VM}
    \mathcal E_{D,\textnormal{VM}}(\pi):=\frac{1}{n}\sum_{i=1}^n\frac{1}{m-1}\sum_{j=1}^m\left(R^*_{\pi,\beta}(\tau^{ij})-\frac{1}{m}\sum_{k=1}^m R^*_{\pi,\beta}(\tau^{ik})\right)^2,
\end{equation}
and a direct computation (\cref{eq:propto_gra_GRPG_VM}) gives $\nabla\mathcal E_{D,\textnormal{VM}}(\pi)=2\beta\nabla\mathcal E_{D,\textnormal{GRPG}}(\pi)$, so the GRPG update is the gradient field of $\mathcal E_{D,\textnormal{VM}}$. We therefore analyze its minimizer. Let $\Pi\subset \{\pi_h:\mathcal S\to \Delta(\mathcal A)\}$ be a policy class, and define
\begin{equation*}
    \hat \pi\in \argmin_{\pi\in\Pi}\mathcal E_{D,\textnormal{VM}}(\pi).
\end{equation*}
We make the following assumptions on data coverage and boundedness.
\begin{assumption}\label{assume:all_policy_state_action_coverage}
There exist constants $C_{\chi^2}, B<\infty$ such that
\begin{equation*}
\begin{aligned}
    \text{\emph{(All-policy coverage)}}\quad
    &\sup_{\pi\in \Pi}\sum_{h=1}^H\chi^2\left(d^\pi_h\big\|d^{\mu}_h\right)\leq C_{\chi^2},\\
    \text{\emph{(Bounded trajectory log-ratio)}}\quad
    &\sup_{\pi\in\Pi,\ s_1,\ \tau\in\textnormal{supp}(\mathbb P^{\pi_{\textnormal{ref}}}(\cdot|s_1))}\left|\beta\log\frac{\mathbb P^\pi(\tau|s_1)}{\mathbb P^{\pi_{\textnormal{ref}}}(\tau|s_1)}\right|\leq B.
\end{aligned}
\end{equation*}
\end{assumption}
We then obtain the following guarantee.
\begin{theorem}\label{thm:bound_all_policy_coverage}
    Suppose \cref{assume:all_policy_state_action_coverage} holds, $\pi^*\in \Pi$, and $|\Pi|<\infty$. Then with probability at least $1-\zeta$,
    \begin{equation*}
        J_\beta(\pi^*)-J_\beta(\hat \pi)
        \lesssim
        (H+B)\sqrt{\frac{(1+C_{\chi^2})\log(|\Pi|/\zeta)}{n}}.
    \end{equation*}
\end{theorem}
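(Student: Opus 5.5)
The argument has three steps: a uniform concentration bound for the empirical VM objective, a translation of small VM into pathwise control of $\log(\mathbb P^{\hat\pi}/\mathbb P^{\pi^*})$ on $\mu$'s support, and a transfer of that control to the regret via all-policy coverage.

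\textbf{Step 1 (uniform concentration).} For each fixed $\pi\in\Pi$, the inner quantity in \cref{eq:empirical_VM},
\[
\frac{1}{m-1}\sum_{j=1}^m \Bigl(R^*_{\pi,\beta}(\tau^{ij})-\tfrac1m\sum_k R^*_{\pi,\beta}(\tau^{ik})\Bigr)^2,
\]
is the unbiased sample variance of $m$ i.i.d.\ draws given $s_1^i$. Hence its expectation equals $\mathcal E_{\mu,\textnormal{VM}}(\pi)$. It is bounded by $O((H+B)^2)$, since $|R^*_{\pi,\beta}|\le H+B$ by \cref{assume:all_policy_state_action_coverage}. The $n$ groups are i.i.d.\ across $i$. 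I would apply Bernstein's inequality with a union bound over $\Pi$. The variance of each summand is at most $(H+B)^2$ times its mean, which is the usual self-bounding property of a nonnegative bounded loss.

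Because $\pi^*\in\Pi$ has $\mathcal E_{\mu,\textnormal{VM}}(\pi^*)=0$ (its KL-regularized return is constant on the support), the empirical minimizer satisfies $\mathcal E_{D,\textnormal{VM}}(\hat\pi)\le\mathcal E_{D,\textnormal{VM}}(\pi^*)=0$. This zero-variance property also gives a fast rate, but only the slow rate is needed here:
\[
\mathcal E_{\mu,\textnormal{VM}}(\hat\pi)\lesssim (H+B)^2\sqrt{\log(|\Pi|/\zeta)/n}.
\]
I expect the fast rate $\mathcal E_{\mu,\textnormal{VM}}(\hat\pi)\lesssim (H+B)^2\log(|\Pi|/\zeta)/n$ to be attainable and to match the stated $\sqrt{1/n}$ after the square root in Step 2.

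\textbf{Step 2 (from VM to a pathwise residual).} Define $\Delta(\tau):=\beta\log\frac{\mathbb P^{\hat\pi}(\tau|s_1)}{\mathbb P^{\pi^*}(\tau|s_1)}$. By the pointwise optimality condition of \cref{sec:kl-route}, $R^*_{\hat\pi,\beta}(\tau)=V^*_{\beta,1}(s_1)-\Delta(\tau)$. Therefore $\mathcal E_{\mu,\textnormal{VM}}(\hat\pi)=\mathbb E_{s_1}\textnormal{Var}_\mu(\Delta\mid s_1)$, so $\Delta$ is nearly a function $c(s_1)$ in $L^2(\mu)$.

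Write $\Delta=\sum_h \delta_h(s_h,a_h)$ with $\delta_h=\beta\log(\hat\pi_h/\pi^*_h)$. I would then use a soft performance-difference identity,
\[
J_\beta(\pi^*)-J_\beta(\hat\pi)=\beta\,\mathbb E_{s_1}\textnormal{KL}\bigl(\mathbb P^{\hat\pi}(\cdot|s_1)\,\|\,\mathbb P^{\pi^*}(\cdot|s_1)\bigr)=\mathbb E_{\hat\pi}[\Delta].
\]
This holds because $J_\beta(\pi)=\mathbb E_{s_1}[V^*_{\beta,1}(s_1)]-\beta\,\mathbb E\,\textnormal{KL}(\mathbb P^\pi\|\mathbb P^{\pi^*})$. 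Since $\mathbb P^{\hat\pi}$ is normalized, I would subtract the constant $c(s_1)=\mathbb E_\mu[\Delta\mid s_1]$, which should be nonpositive or controllable, and use $\mathbb E_{\hat\pi}[\Delta]\le \mathbb E_{\hat\pi}[\Delta-c(s_1)]+\mathbb E[c(s_1)]$. For the second term I would bound $c$ through $\mathbb E_{\pi^*}[e^{\Delta/\beta}\mid s_1]=1$, or more simply by the symmetric argument $-\mathbb E_{\pi^*}[\Delta]\ge 0$.

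\textbf{Step 3 (coverage transfer), the main obstacle.} The remaining task is to move $\mathbb E_{\hat\pi}[\Delta-c]$ to $\mu$. Doing this at the trajectory level would require a trajectory density ratio, which is exactly the curse of horizon. Instead I would decompose $\Delta-c$ into per-step centered increments, in the style of TBRM and using \cref{prop:grpg_tbrm_equivalence} with $f$ the soft-$Q$ generating $\hat\pi$, as $\sum_h g_h(s_h,a_h)$. Each step is then transferred by Cauchy--Schwarz:
\[
\mathbb E_{d^{\hat\pi}_h}[g_h]\le\sqrt{(1+\chi^2(d^{\hat\pi}_h\|d^\mu_h))\,\mathbb E_{d^\mu_h}[g_h^2]}.
\]
Summing over $h$ and applying Cauchy--Schwarz again gives $\sqrt{(1+C_{\chi^2})\sum_h\|g_h\|^2_{\mu}}$.

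The crux is showing that $\sum_h\|g_h\|_\mu^2\lesssim\textnormal{Var}_\mu(\Delta)$, up to constant or $(H+B)$-dependent factors, so that per-step residuals are controlled by the trajectory-level variance. I would try the martingale (Doob) decomposition of $\Delta$ under $\mu$: with $g_h=\mathbb E_\mu[\Delta\mid s_{h+1}]-\mathbb E_\mu[\Delta\mid s_h]$, orthogonality gives exactly $\sum_h\|g_h\|^2=\textnormal{Var}_\mu(\Delta\mid s_1)$. The price is that the $g_h$ are now $\mu$-centered rather than $\hat\pi$-centered. Any mismatch terms would be absorbed by the boundedness constant $B$, which is where the $(H+B)$ prefactor arises. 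Combining the three steps gives the bound $(H+B)\sqrt{(1+C_{\chi^2})\log(|\Pi|/\zeta)/n}$.
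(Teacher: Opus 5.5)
Your route is essentially the paper's. It uses the soft performance-difference identity $J_\beta(\pi^*)-J_\beta(\hat\pi)=\mathbb E_{\hat\pi}[\Delta]$ (\cref{lemma:reward_to_KL}). It then uses the split $\mathbb E_{\hat\pi}[\Delta]\le\mathbb E_{\hat\pi}[\Delta-c]-\mathbb E_{\pi^*}[\Delta-c]$, which follows from $\mathbb E_{\pi^*}[\Delta\mid s_1]\le 0$. That is your ``symmetric argument'', and it is genuinely needed: $c(s_1)$ itself has no sign, so a second transfer with $\pi^*$ in place of $\hat\pi$ is required. The remaining pieces are a per-step change of measure to $\mu$, the identity $\mathcal E_{\mu,\textnormal{VM}}(\hat\pi)=\mathbb E_{s_1}\textnormal{Var}_\mu(\Delta\mid s_1)$, and a Bernstein bound exploiting $\mathcal E_{D,\textnormal{VM}}(\hat\pi)=0$ (\cref{lemma:KL_to_variance,lemma:excess_risk_concentration}). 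Your Doob decomposition is in effect a proof of the change-of-trajectory lemma (\cref{lemma:change_of_trajectory}), which the paper cites rather than proves. As written, though, three steps do not yet deliver the stated bound.

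\emph{First, the fast rate is required.} The slow rate $\mathcal E_{\mu,\textnormal{VM}}(\hat\pi)\lesssim(H+B)^2\sqrt{\log(|\Pi|/\zeta)/n}$ gives only $n^{-1/4}$ regret after the square root. Your own ingredients give the fast rate. Bernstein with the self-bounding variance and a union bound yields $\mathcal E_{\mu,\textnormal{VM}}(\pi)\le 2\,\mathcal E_{D,\textnormal{VM}}(\pi)+O\big((H+B)^2\log(|\Pi|/\zeta)/n\big)$ for all $\pi\in\Pi$; then plug in $\mathcal E_{D,\textnormal{VM}}(\hat\pi)=0$.

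\emph{Second, the coverage constant is off by up to $\sqrt H$.} The per-step factors $1+\chi^2(d^{\hat\pi}_h\|d^\mu_h)$ sum to $H+\sum_h\chi^2(d^{\hat\pi}_h\|d^\mu_h)$, not $1+C_{\chi^2}$. The $\mu$-centering you call a price is precisely the fix. Since $\mathbb E_{a\sim\mu_h(\cdot|s)}[g_h(s,a)]=0$, we have $\mathbb E_{d^\mu_h}[g_h]=0$, so
\[
\mathbb E_{d^\nu_h}[g_h]=\mathbb E_{d^\mu_h}\Big[\Big(\frac{d^\nu_h}{d^\mu_h}-1\Big)g_h\Big]\le\sqrt{\chi^2(d^\nu_h\|d^\mu_h)\,\mathbb E_{d^\mu_h}[g_h^2]}.
\]
Cauchy--Schwarz over $h$ plus orthogonality then gives $|\mathbb E_\nu[\Delta-c]|\le\sqrt{C_{\chi^2}\,\mathcal E_{\mu,\textnormal{VM}}(\hat\pi)}$ for $\nu\in\{\hat\pi,\pi^*\}$.

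\emph{Third, there are no mismatch terms to absorb.} The identity $\Delta-c=\sum_h g_h$ holds pathwise on $\textnormal{supp}(\mathbb P^\mu)$. That support contains those of $\mathbb P^{\hat\pi}$ and $\mathbb P^{\pi^*}$, because $C_{\chi^2}<\infty$ forces per-step absolute continuity and $\mu$ is Markov. Absorbing anything into $B$ would leave an error that does not vanish with $n$. The factor $(H+B)$ comes only from the range $|R^*_{\pi,\beta}|\le H+B$ in the concentration step.

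Outside the autoregressive case, conditioning on $s_{h+1}$ alone does not telescope to $\Delta-c$; condition on the prefix instead. By additivity of $\Delta$ and Markovianity of $\mu$, the increment is $\delta_h(s_h,a_h)+U_{h+1}(T_h(s_h,a_h))-U_h(s_h)$ with $U_h(s):=\mathbb E_\mu[\sum_{h'\ge h}\delta_{h'}\mid s_h=s]$. This is still a function of $(s_h,a_h)$.
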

\begin{proof}[Proof sketch]
By a soft performance-difference identity (\cref{lemma:reward_to_KL}), $J_\beta(\pi^*)-J_\beta(\hat\pi)=\beta\,\mathbb E_{s_1\sim\rho}\big[\textnormal{KL}\big(\mathbb P^{\hat\pi}(\cdot|s_1)\,\|\,\mathbb P^{\pi^*}(\cdot|s_1)\big)\big]$. A change of measure from $\hat\pi$ and $\pi^*$ to $\mu$ (\cref{lemma:KL_to_variance}), which is where all-policy coverage enters, bounds this expected KL by $\sqrt{(1+C_{\chi^2})\,\mathcal E_{\mu,\textnormal{VM}}(\hat\pi)}$ up to constants, because $R^*_{\pi,\beta}(\tau)$ and $-\beta\log\frac{\mathbb P^\pi(\tau|s_1)}{\mathbb P^{\pi^*}(\tau|s_1)}$ differ only by a function of $s_1$ and hence have the same conditional variance. Finally, since $R^*_{\pi^*,\beta}$ is constant on the support, $\mathcal E_{D,\textnormal{VM}}(\pi^*)=0$ pathwise, and a Bernstein-type excess-risk bound (\cref{lemma:excess_risk_concentration}) gives $\mathcal E_{\mu,\textnormal{VM}}(\hat\pi)\lesssim (H+B)^2\log(|\Pi|/\zeta)/n$. See \cref{sec:proof_thm_bound_all_policy}.
\end{proof}

\begin{remark}
The first condition of \cref{assume:all_policy_state_action_coverage} is the standard \textbf{all-policy coverage} assumption in offline RL: the offline data must cover the distributions induced by the entire policy class, not just $\pi^*$. Such a strong requirement is typical for algorithms with no built-in pessimism against overestimation; in the next section, we show that a constant shift in the baseline implements exactly this remedy.
\end{remark}

\begin{remark}
The second condition of \cref{assume:all_policy_state_action_coverage} is needed to simplify the concentration analysis: it bounds the KL-regularized return, $|R^*_{\pi,\beta}(\tau)|\leq H+B$, in \cref{lemma:excess_risk_concentration}. It does not conflict with realizability: by \cref{eq:trajectory_reward_KL_optimality}, $\beta\log\frac{\mathbb P^{\pi^*}(\tau|s_1)}{\mathbb P^{\pi_{\textnormal{ref}}}(\tau|s_1)}=R^*(\tau)-V^*_{\beta,1}(s_1)\in[-H,H]$ since both terms lie in $[0,H]$, so $\pi^*$ automatically satisfies the condition with $B=H$. Hence, for any policy class $\Pi$ containing $\pi^*$, the subclass of policies satisfying the condition with $B=H$ still contains $\pi^*$, and \cref{thm:bound_all_policy_coverage} applies to it as is --- although a generic policy class used in LLM training does not automatically satisfy the condition. A condition of the same form also appears in \cref{thm:bound_single_policy_coverage}.
\end{remark}

\section{A Hidden Role of Baseline in Off-Policy Policy Gradient}
\label{sec:baseline-const-shift}
In on-policy policy gradient, a baseline is a control variate that reduces gradient variance without changing the population gradient \citep{williams1992simple,peters2008reinforcement}. In the off-policy setting the baseline plays a different role: as we have seen in \cref{prop:neccesary_condition_correcting_oppg}, along the KL route it is no longer free to choose: its value at $\pi^*$ must equal the optimal soft value, which the grouped mean achieves. In this section, we show that the baseline carries an even more interesting role beyond ensuring stationarity: a properly shifted baseline implicitly induces \emph{pessimism}, the standard remedy in offline RL to mitigate overestimation with partial coverage.

\txreplace{Concretely, consider adding a constant shift to the GRPG objective:}{Concretely, the asymmetric REINFORCE (AsymRE) update of \citet{arnal2025asymmetric} adds a constant shift to the grouped-mean baseline of GRPG:}
\begin{equation*}
    \nabla\mathcal E_{\mu,\textnormal{AsymRE}}(\pi):=-\mathbb E_{\mu}\left[\left(R^*_{\pi,\beta}(\tau)-\left( \mathbb E_\mu[R^*_{\pi,\beta}(\tau)| s_1]-\delta\right)\right)\sum_{h=1}^H  \nabla\log \pi_h(a_h|s_h)\right],
\end{equation*}
where $\delta>0$. \citet{arnal2025asymmetric} study this update with the unregularized return $R^*(\tau)$ and a constant baseline, and discuss KL regularization only as an extension. The KL term is essential for our analysis, since without it the grouped-mean baseline is not consistent (\cref{prop:neccesary_condition_correcting_oppg}) and the VM identity \cref{eq:PG_TO_VM} does not hold. We nevertheless keep the name AsymRE for the KL-regularized form above, since the constant baseline shift is the feature we study and introducing a new name would only add notation. \txreplace{This objective is motivated by asymmetric REINFORCE \citep{arnal2025asymmetric}. Intuitively,}{Intuitively,} subtracting an extra $\delta$ amplifies the weight on high-reward trajectories and weakens the push to suppress low-reward ones, since failures sampled from a behavior policy $\mu\neq\pi$ are not particularly worth learning from. Beyond this heuristic, we show below that the shift admits a principled interpretation: it is equivalent to adversarially choosing a value function or reward model that incorporates pessimism.

Notice that the AsymRE update can be decomposed as
\begin{align*}
    \nabla\mathcal E_{\mu,\textnormal{AsymRE}}(\pi)
    &=\nabla\mathcal E_{\mu,\textnormal{GRPG}}(\pi)+\delta\cdot \underbrace{\left(-\mathbb E_\mu\left[\sum_{h=1}^H\nabla \log \pi_h(a_h|s_h)\right]\right)}_{:=\nabla \mathcal L_{\mu,\textnormal{SFT}}(\pi)}\\
    &=\frac{1}{2\beta}\nabla \mathcal E_{\mu,\textnormal{VM}}(\pi)+\delta\cdot \nabla \mathcal L_{\mu,\textnormal{SFT}}(\pi),
\end{align*}
where the second equality uses $\nabla\mathcal E_{\mu,\textnormal{VM}}(\pi)=2\beta\nabla\mathcal E_{\mu,\textnormal{GRPG}}(\pi)$ from \cref{eq:PG_TO_VM}. Therefore, the AsymRE update is essentially adding an SFT regularizer (SFT towards the behavior policy $\mu$) that prevents the policy from deviating too far from the coverage of the offline data. Beyond this, following \citet{liu2024provably}, we can show that this SFT term also serves as an adversarial regularizer that prevents the model from choosing trajectories with high reward as estimated by its own reward model.

To further elaborate this point, let us consider $\mathcal R\subset \{R: R(\tau)=\sum_{h=1}^H r_h(s_h,a_h),\ r_h:\mathcal S\times \mathcal A\rightarrow \mathbb R\}$ as a reward class, and its element as a reward model $R$. Recall $\Pi$ is the policy class. We define
\begin{equation*}
    \begin{aligned}
        \mathcal L_\mu(\pi, R)
        :=&\mathbb E_{s_1\sim \rho}\Big[\mathbb E_{\tau\sim \mathbb P^\pi(\cdot|s_1)}[R(\tau)]-\mathbb E_{\tau\sim \mathbb P^\mu(\cdot|s_1)}[R(\tau)]\\
        &\qquad\quad-\beta\,\textnormal{KL}\!\left(\mathbb P^\pi(\cdot|s_1)\,\|\,\mathbb P^{\pi_{\textnormal{ref}}}(\cdot|s_1)\right)+\beta\,\textnormal{KL}\!\left(\mathbb P^\mu(\cdot|s_1)\,\|\,\mathbb P^{\pi_{\textnormal{ref}}}(\cdot|s_1)\right)\Big],\\
        {\mathcal E}_\mu(R)
        :=&\mathbb E_{s_1\sim\rho}\textnormal{Var}_{\tau\sim \mathbb P^\mu(\cdot|s_1)}\!\left[R^*(\tau)-R(\tau)\,\big|\,s_1\right].
    \end{aligned}
\end{equation*}
Consider the following trajectory-level max-min game,
\begin{equation}\label{eq:bandit_like_maxmin_game}
    \pi^\dagger\in \argmax_{\pi\in\Pi}\min_{R\in \mathcal R}\left(\mathcal L_\mu(\pi, R)+\eta\cdot {\mathcal E}_{\mu}(R)\right).
\end{equation}
In \cref{sec:proof_thm_single_policy}, we show that under regularity conditions and with an appropriate choice of $\eta$ (depending on $\delta$ and $m$), optimizing the VM+SFT objective (thus following the AsymRE update) is equivalent to solving this max-min game with $\mathcal R$ taken to be a reward class induced by $\Pi$. The structure of this max-min game reveals why pessimism emerges. In the inner minimization, ${\mathcal E}_\mu$ keeps the reward model consistent with the offline data, while $\mathcal L_\mu(\pi, R)$ adversarially picks the most pessimistic reward model under $\mathbb P^\pi$; the outer maximization then maximizes against this adversarially chosen reward model. In this way, the AsymRE loss overcomes overestimation of the reward, and also leads to a guarantee with single-policy coverage as we will show later.

\paragraph{Connection to Offline RL Literature}
Beyond this trajectory-level contextual bandit-type formulation, the use of game-theoretic objectives for offline pessimism is well rooted in the offline RL literature \citep{xie2021bellman, cheng2022adversarially}. We will show that the AsymRE objective also admits a formulation along these lines. Following \citet{cheng2022adversarially}, for $f\in\mathcal F\subset  \{f_h:\mathcal S\times \mathcal A\rightarrow \mathbb R\}$ and a policy $\pi$, we first define
\begin{align*}
    \tilde{\mathcal L}_\mu(\pi, f)&:=\sum_{h=1}^H \mathbb E_{s_h\sim d^\mu_h}\!\left[V^\pi_{f,h}(s_h)-V^\mu_{f,h}(s_h)\right],\\
    V_{f,h}^\pi(s_h)&:=\mathbb E_{a\sim \pi_h(\cdot|s_h)}\!\left[f_h(s_h,a)-\beta\log\frac{\pi_h(a|s_h)}{\pi_{\textnormal{ref},h}(a|s_h)}\right],
\end{align*}
with $V^\pi_{f,H+1}:=0$, together with
\begin{align*}
    \tilde{\mathcal E}_\mu(\pi,f)&:=\mathbb E_{s_1\sim\rho}\!\left[\textnormal{Var}_\mu\!\left(\sum_{h=1}^H(f_h-\mathcal T_{\beta,h}^\pi f)(s_h,a_h)\,\Big|\,s_1\right)\right],
\end{align*}
where $\mathcal T_{\beta,h}^\pi$ is the soft Bellman operator,
\begin{equation*}
    (\mathcal T_{\beta,h}^\pi f)(s_h,a_h):=r^*_h(s_h,a_h)+V^\pi_{f,h+1}\big(T_h(s_h,a_h)\big).
\end{equation*}
Following \citet{cheng2022adversarially}, $\tilde{\mathcal L}_\mu(\pi, f)$ is designed to estimate $J_\beta(\pi)-J_\beta(\mu)$, while $\tilde{\mathcal E}_\mu(\pi,f)$ performs policy evaluation through a conditional variance objective rather than a squared Bellman error; the two are equivalent in the same sense as discussed in the previous section. We can now cast the offline policy optimization problem as the following Stackelberg game \citep{von2010market}:
\begin{equation}\label{eq:ATAC_GAME}
    \begin{aligned}
        \pi^\dagger&\in \argmax_{\pi\in \Pi} \tilde{\mathcal L}_\mu(\pi, f^\pi)\quad\text{s.t.}\\
        f^\pi&\in \argmin_{f\in \mathcal F}\!\left(\tilde{\mathcal L}_\mu(\pi, f)+\eta\cdot \tilde{\mathcal E}_\mu(\pi, f)\right).
    \end{aligned}
\end{equation}
The idea is to select $\pi^\dagger$ together with an adversarial value function that is pessimistic about $\pi^\dagger$ while remaining consistent with the offline data through a low Bellman error. 
The Stackelberg formulation in \cref{eq:ATAC_GAME} is conceptually clean but bilevel, hence computationally inconvenient. A surrogate objective is a max-min game objective
\begin{equation}\label{eq:maxmin_atac_game}
    \pi^\dagger\in \argmax_{\pi\in\Pi}\min_{f\in \mathcal F}\left(\tilde{\mathcal L}_\mu(\pi, f)+\eta\,\tilde{\mathcal E}_\mu(\pi,f)\right).
\end{equation}
We now show that this step-level, MDP-style max-min game is, under a suitable condition on $\mathcal F$, equivalent to the trajectory-level, bandit-style max-min game \cref{eq:bandit_like_maxmin_game} introduced earlier, and hence to the AsymRE objective itself.
\begin{proposition}\label{prop:equi_between_zhaoran_and_maxmin}
    For each $f\in\mathcal F$ and policy $\pi$, define the induced reward model
    \begin{align*}
        r^\pi_{f,h}(s_h,a_h):=f_h(s_h,a_h)-V^\pi_{f,h+1}(T_h(s_h,a_h)),
        \qquad
        \mathcal R(\pi,\mathcal F):=\{R^\pi_f=\sum_{h}r^\pi_{f,h} : f\in \mathcal F\}.
    \end{align*}
    Suppose $\mathcal R(\pi,\mathcal F)=\mathcal R$ for every $\pi\in\Pi$, then
    \begin{equation*}
        \argmax_{\pi\in \Pi}\min_{R\in\mathcal R}\!\left(\mathcal L_\mu(\pi, R)+\eta\,\mathcal E_\mu(R)\right)
        =
        \argmax_{\pi\in\Pi}\min_{f\in \mathcal F}\!\left(\tilde{\mathcal L}_\mu(\pi, f)+\eta\,\tilde{\mathcal E}_\mu(\pi,f)\right).
    \end{equation*}
\end{proposition}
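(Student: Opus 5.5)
The plan is to show that, for every fixed $\pi\in\Pi$, the inner objectives coincide: $\tilde{\mathcal L}_\mu(\pi,f)=\mathcal L_\mu(\pi,R^\pi_f)$ and $\tilde{\mathcal E}_\mu(\pi,f)=\mathcal E_\mu(R^\pi_f)$ for every $f\in\mathcal F$. Given these two identities, the inner minimum over $f\in\mathcal F$ equals the inner minimum over $R\in\mathcal R(\pi,\mathcal F)$, since $f\mapsto R^\pi_f$ maps $\mathcal F$ onto $\mathcal R(\pi,\mathcal F)$. The hypothesis $\mathcal R(\pi,\mathcal F)=\mathcal R$ then makes the two inner values equal as functions of $\pi$ on all of $\Pi$, so the outer argmax sets coincide.

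\textbf{The loss term.} Fix $s_1$ and a policy $\nu\in\{\pi,\mu\}$, and use the deterministic transitions. Along any trajectory, telescoping with $V^\pi_{f,H+1}=0$ gives
\[
R^\pi_f(\tau)=\sum_h\big(f_h(s_h,a_h)-V^\pi_{f,h+1}(s_{h+1})\big)=V^\pi_{f,1}(s_1)+\sum_h\big(f_h(s_h,a_h)-V^\pi_{f,h}(s_h)\big).
\]
We take expectations under $\mathbb P^\pi(\cdot|s_1)$. By the definition of $V^\pi_{f,h}$, we have $\mathbb E_{a\sim\pi_h}[f_h-V^\pi_{f,h}]=\beta\,\mathbb E_{a\sim\pi_h}\log(\pi_h/\pi_{\textnormal{ref},h})$. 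Hence $\mathbb E_\pi[R^\pi_f]-\beta\textnormal{KL}(\mathbb P^\pi\|\mathbb P^{\pi_{\textnormal{ref}}})=V^\pi_{f,1}(s_1)$.

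Under $\mathbb P^\mu(\cdot|s_1)$ the same telescoping gives
\[
\mathbb E_\mu[R^\pi_f]-\beta\textnormal{KL}(\mathbb P^\mu\|\mathbb P^{\pi_{\textnormal{ref}}})=V^\pi_{f,1}(s_1)-\sum_h\mathbb E_{s_h\sim d^\mu_h}\big[V^\pi_{f,h}(s_h)-V^\mu_{f,h}(s_h)\big]+\big(V^\pi_{f,1}-V^\mu_{f,1}\big)(s_1)\text{-type terms}.
\]
To verify this, I will write $\mathbb E_{a\sim\mu_h}[f_h(s_h,a)]-\beta\,\mathbb E_{\mu_h}\log(\mu_h/\pi_{\textnormal{ref},h})=V^\mu_{f,h}(s_h)$, so each step contributes $V^\mu_{f,h}(s_h)-V^\pi_{f,h}(s_h)$. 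Summing over $h$ and taking the expectation over $s_1\sim\rho$ then yields $\mathcal L_\mu(\pi,R^\pi_f)=\sum_h\mathbb E_{d^\mu_h}[V^\pi_{f,h}-V^\mu_{f,h}]=\tilde{\mathcal L}_\mu(\pi,f)$. Here $V^\mu_{f,h}$ is the one-step quantity defined with $\mu$ in place of $\pi$, as in the paper. I expect the bookkeeping in this step, making sure the boundary terms at $h=1$ and $h=H+1$ cancel exactly, to be the main obstacle. I will organize it by writing both sides as $\sum_h$ of per-step expectations under $d^\mu_h$ and comparing them term by term.

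\textbf{The variance term.} From the definition of $\mathcal T^\pi_{\beta,h}$,
\[
f_h(s_h,a_h)-(\mathcal T^\pi_{\beta,h}f)(s_h,a_h)=f_h(s_h,a_h)-V^\pi_{f,h+1}(s_{h+1})-r^*_h(s_h,a_h)=r^\pi_{f,h}(s_h,a_h)-r^*_h(s_h,a_h).
\]
Summing over $h$, the total residual is $R^\pi_f(\tau)-R^*(\tau)$. Its conditional variance under $\mu$ given $s_1$ equals that of $R^*(\tau)-R^\pi_f(\tau)$, which gives $\tilde{\mathcal E}_\mu(\pi,f)=\mathcal E_\mu(R^\pi_f)$ directly.

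Combining the two identities, $\min_{f\in\mathcal F}\big(\tilde{\mathcal L}_\mu(\pi,f)+\eta\tilde{\mathcal E}_\mu(\pi,f)\big)=\min_{R\in\mathcal R}\big(\mathcal L_\mu(\pi,R)+\eta\mathcal E_\mu(R)\big)$ for every $\pi\in\Pi$. Taking the argmax over $\Pi$ on both sides completes the proof.
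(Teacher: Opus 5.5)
Your proposal is correct and follows the paper's proof essentially verbatim. The paper likewise establishes $\mathcal L_\mu(\pi,R^\pi_f)=\tilde{\mathcal L}_\mu(\pi,f)$ and $\mathcal E_\mu(R^\pi_f)=\tilde{\mathcal E}_\mu(\pi,f)$ by the same telescoping under $\mathbb P^\pi$ and $\mathbb P^\mu$ (the $\pi$-part gives $\mathbb E_{s_1\sim\rho}[V^\pi_{f,1}(s_1)]$, and the $\mu$-part gives that plus $\mathbb E_\mu[\sum_h(V^\mu_{f,h}-V^\pi_{f,h})(s_h)]$), and then uses $\mathcal R(\pi,\mathcal F)=\mathcal R$ to equate the inner minima for every $\pi\in\Pi$.

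One small cleanup: your displayed $\mu$-identity should have no extra ``$(V^\pi_{f,1}-V^\mu_{f,1})(s_1)$-type terms.'' The $h=1$ summand already accounts for $s_1$, and $\sum_h\big(V^\pi_{f,h}(s_h)-V^\pi_{f,h+1}(s_{h+1})\big)=V^\pi_{f,1}(s_1)$ closes exactly with $V^\pi_{f,H+1}=0$, so the boundary bookkeeping you flagged is not an obstacle.
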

\begin{remark}
The condition $\mathcal R(\pi,\mathcal F)=\mathcal R$ for every $\pi\in\Pi$ requires the reward class induced by $\mathcal F$ not to depend on $\pi$.
 It holds when $H=1$ or when $\mathcal F$ contains all functions, but for a general $\mathcal F$ it requires additional structure and is not easy to verify. We include this proposition to connect the AsymRE objective with the game-theoretic formulation in the offline RL literature; the finite-sample analysis below does not rely on it.
\end{remark}

\paragraph{Finite Sample Analysis}
We now analyze the finite-sample guarantee of the AsymRE update. Compared with the all-policy coverage required by GRPG (\cref{thm:bound_all_policy_coverage}), the pessimism induced by the baseline shift allows us to obtain a guarantee under single-policy coverage. Define the empirical AsymRE update
\begin{equation*}
\begin{aligned}
    \nabla\mathcal E_{D,\textnormal{AsymRE}}(\pi)
    :=
    -\frac{1}{n}\sum_{i=1}^n\frac{1}{m-1}\sum_{j=1}^m
    &\left[
    R^*_{\pi,\beta}(\tau^{ij})
    -\Big(\frac{1}{m}\sum_{k=1}^m R^*_{\pi,\beta}(\tau^{ik})-\delta\Big)
    \right]\\
    &\times\left(\sum_{h=1}^H \nabla\log \pi_h(a_h^{ij}|s_h^{ij})\right).
\end{aligned}
\end{equation*}
Let
\begin{equation}\label{eq:SFT+Vairance_minimize}
    \mathcal L_{D,\textrm{SFT+VM}}(\pi):=\beta\cdot \frac{1}{nm}\sum_{i=1}^n\sum_{j=1}^m\Big[-\sum_{h=1}^H \log \pi_{h}(a_h^{ij}|s_h^{ij})\Big]+\eta\cdot\mathcal E_{D,\text{VM}}(\pi).
\end{equation}
A direct computation (\cref{eq:propto_gra_AsymRE_SFT+VM}) gives $\nabla\mathcal L_{D,\textnormal{SFT+VM}}(\pi)=2\eta\beta\nabla\mathcal E_{D,\textnormal{AsymRE}}(\pi)$ when $\delta=\frac{m-1}{2m\eta}$, so the AsymRE update is the gradient field of $\mathcal L_{D,\textnormal{SFT+VM}}$, and we analyze its minimizer. For a policy class $\Pi$ supported on $\pi_{\textnormal{ref}}$, let $N(\Pi,\epsilon)$ denote its $\epsilon$-covering number, with cover elements taken in $\Pi$, under the trajectory-level log-ratio metric
\begin{equation*}
    \textnormal{dist}(\pi,\pi'):=\sup_{s_1,\ \tau\in\textnormal{supp}(\mathbb P^{\pi_{\textnormal{ref}}}(\cdot|s_1))}\left|\log\frac{\mathbb P^\pi(\tau|s_1)}{\mathbb P^{\pi'}(\tau|s_1)}\right|.
\end{equation*}
We also need the following closure condition on $\Pi$.
\begin{assumption}\label{assum:geometric_mixture_closure}
For any $\pi_1,\pi_2\in\Pi$ and $\alpha\in[0,1]$, there exists $\pi_3\in\Pi$ such that for all $s_1$,
\begin{equation*}
    \mathbb P^{\pi_3}(\tau|s_1)\propto \mathbb P^{\pi_1}(\tau|s_1)^{\alpha}\mathbb P^{\pi_2}(\tau|s_1)^{1-\alpha}.
\end{equation*}
\end{assumption}
We then have the following result.
\begin{theorem}\label{thm:bound_single_policy_coverage}
    Let $\mathcal A$ be finite and $\beta\leq 1$. Let $\Pi$ be a policy class supported on $\pi_{\textnormal{ref}}$ satisfying \cref{assum:geometric_mixture_closure}, and let
    \begin{equation*}
        \hat \pi\in \argmin_{\pi\in\Pi}\mathcal L_{D,\textnormal{SFT+VM}}(\pi),
    \end{equation*}
    with $\eta=\sqrt{n}$. Assume \textnormal{(i)} $\pi^*\in \Pi$; \textnormal{(ii)} there exists $B<\infty$ such that $\Big|\beta\log\frac{\mathbb P^\pi(\tau|s_1)}{\mathbb P^{\pi_{\textnormal{ref}}}(\tau|s_1)}\Big|\leq B$ for all $\pi\in\Pi$, $s_1$, and $\tau\in\textnormal{supp}(\mathbb P^{\pi_{\textnormal{ref}}}(\cdot|s_1))$. Then for any comparator policy $\pi$ supported on $\pi_{\textnormal{ref}}$ with $\sup_{s_1\in \textnormal{supp}(\rho)}\beta \textnormal{KL}(\mathbb P^\pi(\cdot|s_1)\|\mathbb P^{\pi_{\textnormal{ref}}}(\cdot|s_1))\leq B$, with probability at least $1-\zeta$,
   \begin{equation*}
        J_\beta(\pi)-J_\beta(\hat \pi)
        \lesssim\left(\sum_{h=1}^H\chi^2(d_h^\pi\|d_h^\mu)
            +(H+B)^2\log\frac{N(\Pi,1/n)}{\zeta}\right)\frac{1}{\sqrt n}.
    \end{equation*}
\end{theorem}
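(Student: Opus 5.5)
The approach is the pessimistic (ATAC/PSPO-style) template. We compare $\hat\pi$ to an arbitrary comparator $\pi$ through the SFT+VM objective and let the SFT term absorb the distribution shift. This way only $\chi^2(d^\pi_h\|d^\mu_h)$ of the comparator appears, never that of $\hat\pi$.

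\textbf{Step 1: decompose the regret.} Using the soft performance-difference identity behind \cref{lemma:reward_to_KL}, I would write, for each $s_1$ and $\tau$ in the reference support,
\[
R^*_{\hat\pi,\beta}(\tau)=V^*_{\beta,1}(s_1)-\beta\log\frac{\mathbb P^{\hat\pi}(\tau|s_1)}{\mathbb P^{\pi^*}(\tau|s_1)} .
\]
Then $J_\beta(\pi)-J_\beta(\hat\pi)$ splits into two terms:
\[
\beta\,\mathbb E_{\rho}\big[\mathbb E_{\mathbb P^\pi}[\log \mathbb P^{\hat\pi}-\log\mathbb P^{\pi^*}]\big]
\qquad\text{and}\qquad
-\beta\,\mathbb E_\rho\,\textnormal{KL}\big(\mathbb P^\pi\|\mathbb P^{\pi^*}\big)\le 0 .
\]
Set $g_{\hat\pi}(\tau):=\beta\log\frac{\mathbb P^{\hat\pi}(\tau|s_1)}{\mathbb P^{\pi^*}(\tau|s_1)}$. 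It is enough to bound $-\mathbb E_\rho\mathbb E_{\mathbb P^\pi}[g_{\hat\pi}]$ from above, which amounts to showing that $\hat\pi$ does not put much less mass than $\pi^*$ on trajectories that $\pi$ visits.

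\textbf{Step 2: change of measure with a free shift.} Let $c(s_1):=\mathbb E_\mu[g_{\hat\pi}|s_1]$. Write
\[
-\mathbb E_{\mathbb P^\pi}[g_{\hat\pi}]=-\mathbb E_{\mathbb P^\pi}[g_{\hat\pi}-c]-\mathbb E_\mu[g_{\hat\pi}].
\]
\emph{First term.} The centered quantity $g_{\hat\pi}-c$ is, by the identity above, exactly the centered KL-regularized return $-(R^*_{\hat\pi,\beta}-\mathbb E_\mu[R^*_{\hat\pi,\beta}|s_1])$. I would decompose it stepwise along $h$ and transfer from $d^\pi_h$ to $d^\mu_h$ via Cauchy--Schwarz. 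Then AM--GM gives
\[
\le \tfrac{\eta}{2}\,\mathcal E_{\mu,\textnormal{VM}}(\hat\pi)+\tfrac{C}{\eta}\Big(1+\sum_h\chi^2(d^\pi_h\|d^\mu_h)\Big).
\]
The delicate point is turning trajectory-level variance into per-step $\chi^2$ terms; I would mimic \cref{lemma:KL_to_variance}.

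\emph{Second term.} $-\mathbb E_\mu[g_{\hat\pi}]=\beta\,\mathbb E_\mu[\log\mathbb P^{\pi^*}-\log\mathbb P^{\hat\pi}]$ is exactly the population SFT loss gap $\mathcal L_{\mu,\textnormal{SFT}}(\hat\pi)-\mathcal L_{\mu,\textnormal{SFT}}(\pi^*)$, scaled by $\beta$. So the regret is controlled by
\[
\big[\beta\mathcal L_{\mu,\textnormal{SFT}}+\eta\,\mathcal E_{\mu,\textnormal{VM}}\big](\hat\pi)-\big[\beta\mathcal L_{\mu,\textnormal{SFT}}+\eta\,\mathcal E_{\mu,\textnormal{VM}}\big](\pi^*)+\tfrac{C}{\eta}(\cdots),
\]
using $\mathcal E_{\mu,\textnormal{VM}}(\pi^*)=0$ and absorbing constant factors. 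This is the population excess risk of the SFT+VM objective against $\pi^*$.

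\textbf{Step 3: concentration, the main obstacle.} The empirical excess risk is $\le 0$ because $\hat\pi$ minimizes it and $\pi^*\in\Pi$. The difficulty is that the SFT term is unbounded in general, since its log-likelihoods can blow up. I would handle it as follows.
\begin{itemize}
\item Use the bounded log-ratio assumption (ii), relative to $\pi_{\textnormal{ref}}$, so that the differences $\beta\log(\mathbb P^{\hat\pi}/\mathbb P^{\pi^*})$ are bounded by $2B$.
\item Apply a Bernstein bound to the VM part as in \cref{lemma:excess_risk_concentration}. Its excess-risk variance is self-bounded by its mean, so the deviation is $\eta(H+B)^2\log(\cdot)/n$.
\item Bound the SFT part by Hoeffding, giving $(H+B)\sqrt{\log/n}$.
\item Union bound over a $1/n$-cover under $\textnormal{dist}$, which perturbs every term by $O(1/n)$.
\end{itemize}
\cref{assum:geometric_mixture_closure} is where I expect to need care. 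If the excess SFT risk must be made nonnegative, or converted into a Hellinger/KL-type quantity, I would compare $\hat\pi$ with the geometric mixture of $\hat\pi$ and $\pi^*$, which lies in $\Pi$. This is the standard trick for MLE-type bounds with log-loss, and it yields a Bernstein-friendly variance bound.

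\textbf{Step 4: combine.} With $\eta=\sqrt n$, the terms become:
\begin{itemize}
\item $\eta(H+B)^2\log(N/\zeta)/n=(H+B)^2\log/\sqrt n$;
\item $\tfrac{1}{\eta}\sum_h\chi^2=\sum_h\chi^2/\sqrt n$;
\item the SFT deviation is of lower order.
\end{itemize}
Together these give the stated bound. The $\beta\le1$ condition keeps the constants in front of the SFT deviation uniform.
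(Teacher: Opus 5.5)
\textbf{Step~1 is wrong, and the error removes exactly the term that makes the theorem nontrivial.}

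The two terms you display sum to
\[
\beta\,\mathbb E_\rho\mathbb E_{\mathbb P^\pi}[\log\mathbb P^{\hat\pi}-\log\mathbb P^{\pi}]=-\beta\,\mathbb E_\rho\textnormal{KL}(\mathbb P^\pi\|\mathbb P^{\hat\pi})\le 0 .
\]
That would mean $\hat\pi$ beats every comparator. Taking $\pi=\pi^*$ gives $J_\beta(\pi^*)\le J_\beta(\hat\pi)$, i.e.\ zero regret with no argument, whereas \cref{lemma:reward_to_KL} gives $J_\beta(\pi^*)-J_\beta(\hat\pi)=\beta\,\mathbb E_\rho\textnormal{KL}(\mathbb P^{\hat\pi}\|\mathbb P^{\pi^*})\ge 0$. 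The sign also flips between the displayed term $+\mathbb E_\pi[g_{\hat\pi}]$ and the quantity $-\mathbb E_\pi[g_{\hat\pi}]$ you then bound. The correct identity is
\[
J_\beta(\pi)-J_\beta(\hat\pi)=\mathbb E_{\hat\pi}[g_{\hat\pi}]-\mathbb E_{\pi}[g_{\hat\pi}]-\beta\,\mathbb E_\rho\textnormal{KL}\big(\mathbb P^\pi\|\mathbb P^{\hat\pi}\big).
\]
Your Step~2 bound on $-\mathbb E_\pi[g_{\hat\pi}]$ is fine. What has disappeared is $\mathbb E_{\hat\pi}[g_{\hat\pi}]=\beta\,\mathbb E_\rho\textnormal{KL}(\mathbb P^{\hat\pi}\|\mathbb P^{\pi^*})$. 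This is the learned policy's own regret, an expectation under $\hat\pi$'s trajectory distribution. Changing measure on it requires coverage of $\hat\pi$, which is precisely the all-policy argument of \cref{thm:bound_all_policy_coverage}.

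The zeroth-order comparison $\mathcal L_{D,\textnormal{SFT+VM}}(\hat\pi)\le\mathcal L_{D,\textnormal{SFT+VM}}(\pi^*)$ in Step~3 does not rescue this. Combined with your Step~2 at $\pi=\pi^*$, it only controls the reverse divergence $\beta\,\mathbb E_\rho\textnormal{KL}(\mathbb P^{\pi^*}\|\mathbb P^{\hat\pi})$. A symptom of the problem is that your argument never genuinely needs \cref{assum:geometric_mixture_closure}.

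\textbf{The missing idea: $\hat\pi$ is not merely no worse than $\pi^*$, it is the policy side of a saddle point.} The paper gets this in \cref{prop:maxmin_closed_form}:
\begin{itemize}
\item By \cref{lemma:KL_optimal_trajectory_sol,lemma:policy_induced_reward}, $\mathcal L_{D,\textnormal{SFT+VM}}(\pi)=\max_{\mathbb P\in\mathcal P}\phi_D(\mathbb P,R_\pi)+\textnormal{const}$, so $R_{\hat\pi}$ solves the min-max problem.
\item \cref{assum:geometric_mixture_closure} makes $\phi_D$ convex-like on $\mathcal R_\Pi$ (\cref{lemma:geometric_mixture_convexity}).
\item Fan's theorem (\cref{lem:minimax_fan}) then swaps min and max, so $\mathbb P^{\hat\pi}$ is a max-min solution.
\end{itemize}
In \cref{thm:guarantee_from_max_min_game} this lets the paper replace $\hat\pi$ by the comparator via $\inf_R\phi_D(\mathbb P^{\hat\pi},R)\ge\inf_R\phi_D(\mathbb P^{\pi},R)$. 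After that only $\pi$'s coverage enters; your Step~2 computation is essentially Term~(A). The $\hat\pi$ side, Term~(B), is handled by plugging in $\bar R=R_{\pi^*}$, for which $\mathcal E_D(\bar R)=0$. What remains is a uniform deviation over $\{\mathbb P^{\pi_R}:R\in\mathcal R_\Pi\}$.

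If you want to stay closer to your plan, your geometric-mixture idea is the right tool, but use it as a first-order condition rather than to sign the SFT excess risk. Minimality of $\hat\pi$ along $R_\alpha=(1-\alpha)R_{\hat\pi}+\alpha R_{\pi^*}$, which lies in $\mathcal R_\Pi$ up to $s_1$-shifts, gives at $\alpha=0$
\[
\frac1n\sum_{i=1}^n\mathbb E_{\mathbb P^{\hat\pi}(\cdot|s_1^i)}\big[R^*_{\hat\pi,\beta}\big]-\frac{1}{nm}\sum_{i,j}R^*_{\hat\pi,\beta}(\tau^{ij})\ge 2\eta\,\mathcal E_{D,\textnormal{VM}}(\hat\pi).
\]
This is what controls $\mathbb E_{\hat\pi}[g_{\hat\pi}]$. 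Adding it to Step~2, the $\mathbb E_\mu[g_{\hat\pi}]$ (SFT-gap) terms cancel. Uniform concentration of the two averages above, Bernstein for the VM term, and AM--GM with $\eta=\sqrt n$ then give the stated rate.
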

\begin{proof}[Proof sketch]
The key step is to view $\mathcal L_{D,\textnormal{SFT+VM}}$ through the max-min game \cref{eq:bandit_like_maxmin_game}. Reparameterizing policies as reward models via $R_\pi=\beta\log\frac{\mathbb P^\pi}{\mathbb P^{\pi_{\textnormal{ref}}}}$, minimizing $\mathcal L_{D,\textnormal{SFT+VM}}$ over $\Pi$ is the same as solving the \emph{min-max} version of the game over this reward class (\cref{prop:maxmin_closed_form}): for each reward model the best-responding trajectory distribution has a closed form, and plugging it in recovers the SFT${}+{}$VM objective up to a constant. \cref{assum:geometric_mixture_closure} provides the convexity needed for a minimax theorem, which swaps min-max into max-min. In the max-min form, $\hat\pi$ maximizes the value under the most pessimistic reward model that is consistent with the data, so overestimation off the data support is controlled; this is what allows the regret to be charged only to the coverage of the comparator, giving single-policy instead of all-policy coverage (\cref{thm:guarantee_from_max_min_game}). See \cref{app:proofs_sec5}.
\end{proof}

\begin{remark}
    Note that the resulting baseline shift $\delta$ scales as $\Theta(1/\sqrt n)$. Therefore, although $\pi^*$ is no longer an exact stationary point of the AsymRE update, it remains an approximate stationary point with an $O(1/\sqrt n)$ advantage shift.
\end{remark}
\begin{remark}
    \cref{assum:geometric_mixture_closure} provides the convexity needed to invoke a minimax theorem, which is central to our analysis: it allows swapping the min-max problem solved by AsymRE into the max-min game \cref{eq:bandit_like_maxmin_game}; a similar condition appears in \citet{liu2024provably}. We discuss when it holds and what it costs in \cref{sec:discussion_assumption2}.
\end{remark}

\section{Conclusion}

We provided a \txreplace{unified analysis}{unified theoretical analysis} of off-policy policy gradient for LLM post-training, showing that at $\pi^*$ only two choices are consistent, i.e., preserve $\pi^*$ as a stationary point: trajectory-level importance weighting, which suffers from the curse of horizon, and KL regularization with an optimal soft value baseline. We \txreplace{instantiated the latter as GRPG}{identified the latter with the GRPG update, a variant of GRPO and similar empirically successful methods}, established its equivalence to conditional variance minimization and trajectory-level Bellman residual minimization, and proved a finite-sample guarantee under all-policy coverage. We further showed that a constant baseline shift implicitly induces pessimism, admits a max-min game interpretation, and upgrades the guarantee to single-policy coverage.

\section*{Acknowledgments}
We acknowledge support of the DARPA AIQ Award.

\bibliographystyle{plainnat}
\bibliography{ref}

\clearpage
\appendix
\onecolumn
\crefalias{section}{appendix}
\crefalias{subsection}{appendix}
\counterwithin{theorem}{section}
\renewcommand{\thelemma}{\thetheorem}
\renewcommand{\theproposition}{\thetheorem}
\renewcommand{\thecorollary}{\thetheorem}
\renewcommand{\theHtheorem}{\thetheorem}
\renewcommand{\theHlemma}{\thetheorem}
\renewcommand{\theHproposition}{\thetheorem}
\renewcommand{\theHcorollary}{\thetheorem}

\begin{center}
{\LARGE Appendix}
\end{center}

\section{Notation}\label{app:notation}
\cref{tab:notation} summarizes the reward, value and objective notation used throughout the paper. Symbols that appear only inside a single proof are defined where they are used.
\begin{table}[H]
\centering
\small
\renewcommand{\arraystretch}{1.25}
\begin{tabular}{p{0.26\textwidth} p{0.47\textwidth} p{0.16\textwidth}}
\toprule
Symbol & Meaning & Defined in \\
\midrule
\multicolumn{3}{l}{\emph{Rewards and returns}}\\
$r^*_h$, $R^*(\tau)$ & true per-step reward and trajectory reward  & Sec.~2.1\\
$r_h$, $R$, $\mathcal R$ & additive reward model and reward class & Sec.~5\\
$R^*_{\pi,\beta}(\tau)$ & KL-regularized return $R^*(\tau)-\beta\sum_h\log\frac{\pi_h(a_h|s_h)}{\pi_{\textnormal{ref},h}(a_h|s_h)}$ & Sec.~2.1\\
$R_\pi$, $\mathcal R_\Pi$ & policy-induced reward $\beta\log\frac{\mathbb P^\pi(\tau|s_1)}{\mathbb P^{\pi_{\textnormal{ref}}}(\tau|s_1)}$ and $\{R_\pi:\pi\in\Pi\}$ & App.~\labelcref{sec:proof_thm_single_policy}\\
\midrule
\multicolumn{3}{l}{\emph{Value functions and induced policies}}\\
$Q^\pi_{\beta,h}$, $V^\pi_{\beta,h}$ & soft $Q$ and $V$ of $\pi$ under $r^*$ & Sec.~2.1\\
$Q^*_{\beta,h}$, $V^*_{\beta,h}$ & optimal soft $Q$ and $V$ under $r^*$ & Sec.~2.1\\
$V^*_1(s_1)$ & unregularized optimal value $\max_\tau R^*(\tau)$ & Sec.~2.1\\
$f=(f_h)_{h}$, $\mathcal F$ & candidate $Q$-function and value function class & Sec.~4.1\\
$\pi_f$, $Z_{f,h}$, $V_{f,h}$ & soft-greedy policy $\pi_{f,h}\propto\pi_{\textnormal{ref},h}\exp(f_h/\beta)$, its normalizer, and $V_{f,h}=\beta\log Z_{f,h}$ & Sec.~4.1, App.~\labelcref{sec:revisit_KL_RL}\\
$V^\pi_{f,h}$ & soft value of $f$ under $\pi$, $\mathbb E_{a\sim\pi_h}[f_h-\beta\log\frac{\pi_h}{\pi_{\textnormal{ref},h}}]$ & Sec.~5\\
$\mathcal T^*_{\beta,h}$, $\mathcal T^\pi_{\beta,h}$ & soft Bellman optimality operator and soft Bellman operator  & Sec.~4.1, Sec.~5\\
$Q_{R,h}$, $V_{R,h}$, $\pi_R$, $\Pi_{\mathcal R}$ & soft dynamic programming under reward model $R$, the induced policy, and $\{\pi_R:R\in\mathcal R\}$ & App.~\labelcref{sec:maxmin_guarantee}\\
$Z_R(s_1)$ & normalizer with $\mathbb P^{\pi_R}(\tau|s_1)=\mathbb P^{\pi_{\textnormal{ref}}}(\tau|s_1)e^{R(\tau)/\beta}/Z_R(s_1)$ & App.~\labelcref{sec:maxmin_guarantee}\\
\midrule
\multicolumn{3}{l}{\emph{Objectives}}\\
$\nabla\mathcal E_{\mu,\textnormal{PG}}$; $\omega_\pi$, $A_\pi$, $b_\pi$ & off-policy policy-gradient update template; weight, advantage and baseline & Eq.~\eqref{eq:off_policy_PG_template}\\
$\nabla\mathcal E_{\mu,\textnormal{GRPG}}$, $\nabla\mathcal E_{D,\textnormal{GRPG}}$ & GRPG update, population and empirical & Sec.~4.1, 4.2\\
$\mathcal E_{\mu,\textnormal{VM}}$, $\mathcal E_{D,\textnormal{VM}}$ & conditional variance of $R^*_{\pi,\beta}$ given $s_1$, population and empirical & Sec.~4.1, 4.2\\
$\mathcal E_{\mu,\textnormal{TBRM}}$ & trajectory Bellman residual & Sec.~4.1\\
$\nabla\mathcal E_{\mu,\textnormal{AsymRE}}$, $\nabla\mathcal E_{D,\textnormal{AsymRE}}$ & AsymRE update, population and empirical & Sec.~5\\
$\mathcal L_{D,\textnormal{SFT+VM}}$ & $\beta\cdot\textnormal{SFT}+\eta\cdot\mathcal E_{D,\textnormal{VM}}$ & Sec.~5\\
$\mathcal L_\mu(\pi,R)$, $\mathcal E_\mu(R)$ & trajectory-level game: value gap under reward model $R$; variance of $R^*-R$ & Sec.~5\\
$\tilde{\mathcal L}_\mu(\pi,f)$, $\tilde{\mathcal E}_\mu(\pi,f)$ & step-level counterparts with a value function $f$ & Sec.~5\\
$\mathcal P$; $\phi$, $\phi_D$; $\mathcal L_D$, $\mathcal E_D$ & trajectory distributions supported on $\pi_{\textnormal{ref}}$; game payoff $\mathcal L+\eta\mathcal E$ and its empirical version & App.~\labelcref{sec:maxmin_guarantee}\\
$\mathcal L^\circ_\mu$, $\mathcal L^\circ_D$, $\phi^\circ_D$ & the same with the $\beta\textnormal{KL}(\mathbb P^\mu\|\mathbb P^{\pi_{\textnormal{ref}}})$ term removed & App.~\labelcref{sec:maxmin_guarantee}\\
\bottomrule
\end{tabular}
\caption{Summary of notation.}
\label{tab:notation}
\end{table}

\clearpage
\cref{tab:abbreviations} collects the abbreviations used for the objectives and algorithms discussed in the paper.
\begin{table}[H]
\centering
\small
\renewcommand{\arraystretch}{1.25}
\begin{tabular}{p{0.13\textwidth} p{0.53\textwidth} p{0.24\textwidth}}
\toprule
Abbreviation & Meaning & Reference \\
\midrule
IS & importance sampling & Sec.~3\\
PG & policy gradient & Sec.~2.2\\
GRPG & group-relative policy gradient; the KL-route update studied here, a variant of the GRPO update & Sec.~4, \citet{shao2024deepseekmath}\\
VM & conditional variance minimization; the objective whose gradient field is the GRPG update & Sec.~4.1, \citet{tang2025rl}\\
TBRM & trajectory Bellman residual minimization; the value-based objective whose minimizers induce the same policies as VM & Sec.~4.1, \citet{yuan2025trajectory}\\
AsymRE & asymmetric REINFORCE; GRPG with a constant shift in the grouped-mean baseline & Sec.~5, \citet{arnal2025asymmetric}\\
ATAC & adversarially trained actor-critic; the step-level game-theoretic formulation of offline RL & \citet{cheng2022adversarially}\\
\midrule
GRPO & group-relative policy optimization & \citet{shao2024deepseekmath}\\
SFT & supervised fine-tuning; here, toward the behavior policy $\mu$ & Sec.~5\\
BC & behavior cloning & App.~\labelcref{app:synthetic}\\
\bottomrule
\end{tabular}
\caption{Abbreviations for the objectives and algorithms discussed in the paper.}
\label{tab:abbreviations}
\end{table}

\section{Related Work}\label{app:related_work}

\paragraph{Classical Off-Policy Policy Gradient}
Off-policy policy gradient has a long history, and most methods make the update valid by recovering a target-policy quantity from behavior data. Per-decision importance sampling \citep{precup2000eligibility}, originally developed for evaluation, and its use in policy optimization \citep{jiang2016doubly,wang2017sample,espeholt2018impala} recover expectations under $\pi$; density-ratio methods recover the state distribution $d^\pi$ \citep{liu2019off}; off-policy actor-critic \citep{degris2012off} recovers $Q^\pi$ through a learned critic, with per-step ratios on the actor side, and its update approximates the gradient of a behavior-weighted objective rather than $J(\pi)$. In all three cases the target is the on-policy gradient or a surrogate of it, and validity follows from unbiasedness. The KL route studied in this paper is different in kind: it recovers none of these quantities and is not unbiased; it is consistent because, under KL regularization with a grouped-mean baseline, the optimality condition holds along every trajectory and the advantage vanishes at $\pi^*$ (\cref{sec:kl-route}). This is what allows it to dispense with importance weights, density ratios, and critics, at the price of requiring deterministic transitions and a coverage condition. Relatedly, in the on-policy setting the baseline only affects variance, and \citet{peters2008reinforcement} derive the baseline that minimizes it; in our off-policy setting the baseline determines whether $\pi^*$ remains a stationary point at all, and a constant shift in it implements pessimism (\cref{sec:baseline-const-shift}).

\paragraph{Policy Gradient and LLM Post-Training}
KL-regularized policy gradient approach is the de facto template for LLM post-training and reasoning fine-tuning, instantiated by PPO~\citep{schulman2017proximal}, group-relative variants such as GRPO and DAPO~\citep{shao2024deepseekmath, yu2025dapo}, and REINFORCE-leave-one-out style updates~\citep{williams1992simple, ahmadian2024back}. A parallel line of recent work seeks to make these updates principled when the rollout policy differs from the learner: AGRO~\citep{tang2025rl} casts policy improvement as conditional variance minimization in the contextual-bandit setting; TBRM~\citep{yuan2025trajectory} applies trajectory-level Bellman residual minimization with finite-sample guarantees; and AsymRE~\citep{arnal2025asymmetric} introduces a constant baseline shift in REINFORCE that asymmetrically up- or down-weights positive and negative samples. Closely related, \citet{yao2025group} provide an empirical study arguing that group-relative REINFORCE is implicitly off-policy---an observation our theory makes precise. We unify these threads through (i) a stationarity-based necessary characterization showing that, within a broad family of off-policy PG objectives, at the optimal policy, only trajectory-level importance weighting or KL regularization paired with an optimal soft value baseline are consistent, i.e., preserve the optimal policy as a stationary point of the off-policy update (\cref{prop:neccesary_condition_correcting_oppg}); (ii) a value-based equivalence between GRPG and trajectory Bellman residual minimization that ports TBRM's finite-sample guarantees to a policy-gradient method (\cref{prop:grpg_tbrm_equivalence,thm:bound_all_policy_coverage}); and (iii) an upgrade of these guarantees to single-policy coverage when AsymRE's baseline shift is interpreted as pessimism (\cref{thm:bound_single_policy_coverage}).

\paragraph{Value-Policy Equivalence in Reinforcement Learning}
Our proportional-gradient identity (\cref{eq:PG_TO_VM}) and GRPG--TBRM equivalence (\cref{prop:grpg_tbrm_equivalence}) build on a long line of work in KL-regularized (equivalently, entropy-regularized) reinforcement learning, originating from the maximum-entropy formulation of \citet{ziebart2008maximum,ziebart2010modeling,neu2017unified,geist2019theory}, in which a Kullback--Leibler penalty toward a reference distribution is added to the reward objective. This framework now serves as the standard backbone for LLM post-training and admits both policy-based and value-based instantiations, whose connections have been the subject of substantial study. One line of algorithms, in the soft Q-learning style---SQL~\citep{haarnoja2017reinforcement}, the policy-gradient/soft-Q-learning equivalence of \citet{schulman2017equivalence}, and its adaptation to text generation~\citep{guo2022efficient}---formulates KL-regularized policy improvement as fitting a soft Q-function via a single-step Bellman residual. A parallel actor-critic line, including Path Consistency Learning (PCL) of \citet{nachum2017bridging} and Soft Actor-Critic~\citep{haarnoja2018soft}, exploits the optimality conditions of entropy-regularized policies as a multi-step softmax temporal-consistency relation along arbitrary action sequences, with recent LLM-side instantiations such as \citet{ji2024enhancing,wang2025offline}. TBRM~\citep{yuan2025trajectory} brings this value-based line to LLM reasoning via a clean trajectory-level Bellman residual minimization objective with finite-sample guarantees; see also \citet{wang2026off} for related work in this direction. Our analysis extends these connections to the trajectory level under off-policy data: the proportional-gradient identity recasts the GRPG (policy-gradient) update as the gradient of the VM objective, whose minimizers coincide with those of TBRM (\cref{prop:grpg_tbrm_equivalence}), thereby porting TBRM's finite-sample function-approximation regret guarantees to the policy-gradient side (\cref{thm:bound_all_policy_coverage}).

\paragraph{Offline Reinforcement Learning}
Our analysis of AsymRE (\cref{sec:baseline-const-shift}) plugs into a rich body of theoretical results in offline RL; see \citet{jiang2025offline} for a recent survey, and \citet{levine2020offline, fujimoto2019off, lange2012batch} for earlier perspectives on the offline (or batch) regime. A central principle is \emph{pessimism in the face of uncertainty}, which counteracts distribution shift by penalizing value estimates in regions poorly covered by data \citep{jin2021pessimism, xie2021bellman, cheng2022adversarially}. Closest to our algorithmic perspective are \emph{pessimistic and adversarially trained actor-critic methods}~\citep{xie2021bellman, cheng2022adversarially, liu2024provably}, where the actor maximizes a value function chosen pessimistically from a confidence set; the Stackelberg formulation we use to interpret AsymRE as a max-min game (\cref{eq:ATAC_GAME}) traces to the ATAC framework of \citet{cheng2022adversarially}, originally developed for the infinite-horizon, no-KL setting, and was extended to the contextual-bandit, KL-regularized setting under the name RPO by \citet{liu2024provably}. Compared to RPO, which operates in a preference-learning setting and enforces value consistency through a DPO-style loss, our analysis enforces consistency through a conditional-variance loss and operates at the trajectory level. A complementary line of work imposes pessimism implicitly through behavior regularization, most prominently TD3+BC~\citep{fujimoto2021minimalist}, where a behavior-cloning term toward the offline distribution is added to the actor loss; the AsymRE gradient decomposes into a variance-minimizing critic plus exactly such a behavior-cloning term toward the offline policy, which we view as a natural LLM-side instantiation of the TD3+BC principle.

\section{Technical Lemmas}

First, we introduce this critical change-of-trajectory lemma.
\begin{lemma}\label{lemma:change_of_trajectory}
\textnormal{(Change of Measure, Lemma 1 of \cite{yuan2025trajectory})}
    Given an MDP $M=(H, \mathcal S, \mathcal A, T, r^*, \rho)$ and two Markovian policies $\pi,\mu$, it holds, for any $g=(g_h)_{h=1}^H$ with $g_h:\mathcal S\times\mathcal A\rightarrow\mathbb R$,
    \begin{equation*}
        {\left(\mathbb E_\pi\left[\sum_{h=1}^Hg_h(s_h,a_h)\right]\right)^2}\leq \left(1+\sum_{h=1}^H \chi^2(d_h^\pi\|d_h^\mu)\right)\cdot \mathbb E_\mu\left[\left(\sum_{h=1}^H g_h(s_h,a_h)\right)^2\right].
    \end{equation*}
\end{lemma}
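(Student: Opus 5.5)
The plan is a performance-difference telescoping under $\mu$, followed by Cauchy--Schwarz against the martingale structure of the return. The point is to compare with $\mathbb E_\mu[(\sum_h g_h)^2]$ directly, rather than with the per-step second moments $\mathbb E_\mu[g_h^2]$, so that cancellations across steps are not lost. Nothing is assumed about the transitions beyond the Markov property. Write $G(\tau):=\sum_{h=1}^H g_h(s_h,a_h)$, $C_h:=\chi^2(d^\pi_h\|d^\mu_h)$ and $C:=\sum_h C_h$. If some $C_h=\infty$ the claim is trivial, so assume $d^\pi_h\ll d^\mu_h$ and set $w_h:=d^\pi_h/d^\mu_h$.

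\textbf{Step 1 (telescoping).} Define the $\mu$-"$Q$" and "$V$" functions of $g$ by backward recursion: $U_h(s,a):=g_h(s,a)+\mathbb E_{s'\sim T_h(\cdot|s,a)}[V_{h+1}(s')]$, $V_h(s):=\mathbb E_{a\sim\mu_h(\cdot|s)}[U_h(s,a)]$, and $V_{H+1}:=0$. Then $\mathbb E_\mu[G]=\mathbb E_{s_1\sim\rho}[V_1(s_1)]$. Along a $\pi$-trajectory, $U_h(s_h,a_h)-g_h(s_h,a_h)$ has conditional mean $V_{h+1}(s_{h+1})$ given $(s_h,a_h)$, because the transition kernel is shared by $\pi$ and $\mu$. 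Summing over $h$ and telescoping gives the performance-difference identity
\begin{equation*}
\mathbb E_\pi[G]-\mathbb E_\mu[G]=\sum_{h=1}^H\mathbb E_{(s,a)\sim d^\pi_h}\big[D_h(s,a)\big],\qquad D_h(s,a):=U_h(s,a)-V_h(s).
\end{equation*}

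\textbf{Step 2 (centering and Cauchy--Schwarz per step).} Since $d^\mu_h(a|s)=\mu_h(a|s)$, we have $\mathbb E_{d^\mu_h}[D_h]=0$. Hence $\mathbb E_{d^\pi_h}[D_h]=\mathbb E_{d^\mu_h}[(w_h-1)D_h]$, and this is at most $\sqrt{C_h}\,\sqrt{\mathbb E_{d^\mu_h}[D_h^2]}$ in absolute value. A second Cauchy--Schwarz over $h$ then bounds $\Delta:=\mathbb E_\pi[G]-\mathbb E_\mu[G]$ by
\begin{equation*}
\Delta^2\le C\sum_{h=1}^H\mathbb E_\mu\big[D_h(s_h,a_h)^2\big].
\end{equation*}

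\textbf{Step 3 (martingale bound).} This is the key step and the one needing the most care. Consider the Doob martingale $M_t=\mathbb E_\mu[G\mid\mathcal F_t]$ under $\mu$, with filtration revealing $s_1,a_1,s_2,a_2,\dots$ one variable at a time. By the Markov property:
\begin{itemize}
\item after revealing $s_h$, the martingale is $M=\sum_{t<h}g_t+V_h(s_h)$;
\item after revealing $a_h$, it is $M=\sum_{t<h}g_t+U_h(s_h,a_h)$.
\end{itemize}
So $D_h(s_h,a_h)$ is exactly the martingale increment at the action-revealing step. Martingale increments are orthogonal in $L^2$, so
\begin{equation*}
\sum_h\mathbb E_\mu[D_h^2]\le\operatorname{Var}_\mu(G).
\end{equation*}
The remaining state-revealing increments are nonnegative contributions that are simply dropped. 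I must check that the past terms cancel correctly in each increment and that the $s_1$ increment is also discarded; both follow from the recursion.

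\textbf{Step 4 (conclusion).} Set $a=\mathbb E_\mu[G]$ and $b=\Delta$, so that $b^2\le C\operatorname{Var}_\mu(G)$. The elementary inequality $(a+b)^2\le(1+C)\,(a^2+b^2/C)$ gives
\begin{equation*}
(\mathbb E_\pi G)^2\le(1+C)\big((\mathbb E_\mu G)^2+\operatorname{Var}_\mu G\big)=(1+C)\,\mathbb E_\mu[G^2],
\end{equation*}
which is the claim. The case $C=0$ is immediate, since then $d^\pi_h=d^\mu_h$ for all $h$ and $\mathbb E_\pi[G]=\mathbb E_\mu[G]$.
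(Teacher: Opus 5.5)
The paper gives no proof of this lemma; it imports it verbatim as Lemma~1 of \citet{yuan2025trajectory}, so there is no in-paper argument to compare yours against. Your proof is correct and self-contained.

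Each step checks out:
\begin{itemize}
\item the performance-difference identity;
\item the centering $\mathbb E_{d^\mu_h}[D_h]=0$, which lets you put $w_h-1$ rather than $w_h$ into Cauchy--Schwarz and so avoids an extra additive $H$;
\item the identification of $D_h$ as the action-revealing increment of the Doob martingale, which terminates at $G$ because $U_H=g_H$;
\item orthogonality of the increments;
\item the weighted Cauchy--Schwarz step $(a+b)^2\le(1+C)(a^2+b^2/C)$.
\end{itemize}
You never use determinism of $T$, so you have in fact proved the lemma for arbitrary Markov transitions.

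One refinement is worth keeping. The $s_1$-increment has second moment exactly $\textnormal{Var}_\rho(\mathbb E_\mu[G\mid s_1])$. So discarding it in Step~3, together with the law of total variance, already gives
\[
\big(\mathbb E_\pi[G]-\mathbb E_\mu[G]\big)^2\le C\,\mathbb E_{s_1\sim\rho}\big[\textnormal{Var}_\mu(G\mid s_1)\big].
\]
This is precisely the form the paper needs in \cref{lemma:KL_to_variance} and \cref{thm:guarantee_from_max_min_game}. There the paper instead recenters $g_1$ by $\mathbb E_\mu[\sum_h g_h\mid s_1]$ and applies the lemma as stated, paying the factor $1+C$ rather than $C$.
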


\section{Proofs from Section 3}

\subsection{Revisitation of KL-Regularized RL}\label{sec:revisit_KL_RL}

For a given policy $\pi$, define $V_{\beta,H+1}^{\pi}(s):=0$. For $h=H,H-1,\ldots,1$, recursively define
\begin{equation*}
       Q^\pi_{\beta,h}(s,a):=
    r_h^*(s,a)+V_{\beta,h+1}^{\pi}(T_h(s,a)),
\end{equation*}
and
\begin{equation*}
     V^{\pi}_{\beta,h}(s):=
    \mathbb E_{a\sim \pi_h(\cdot|s)}
    \left[Q^\pi_{\beta,h}(s,a)-
        \beta\log
        \frac{\pi_h(a|s)}
        {\pi_{\textnormal{ref},h}(a|s)}
    \right].
\end{equation*}
Now we define the Bellman operator $\mathcal T_{\beta}^\pi$. For any action-value function $f=(f_h)_{h=1}^H$, define
\begin{equation*}
    V^\pi_{f,h}(s)
    :=
    \mathbb E_{a\sim \pi_h(\cdot|s)}
    \left[
        f_h(s,a)
        -
        \beta\log
        \frac{\pi_h(a|s)}
        {\pi_{\textnormal{ref},h}(a|s)}
    \right],
\end{equation*}
and
\begin{equation*}
    (\mathcal T_{\beta,h}^\pi f)(s,a)
    :=
    r_h^*(s,a)+V^\pi_{f,h+1}(T_h(s,a)).
\end{equation*}
The soft-greedy policy induced by $f$ is
\begin{equation*}
    \pi_{f,h}(\cdot|s)
    :=
    \argmax_{\pi_h(\cdot|s)\in \Delta(\mathcal A)}
    \mathbb E_{a\sim \pi_h(\cdot|s)}
    \left[
        f_h(s,a)
        -
        \beta\log
        \frac{\pi_h(a|s)}
        {\pi_{\textnormal{ref},h}(a|s)}
    \right].
\end{equation*}
It has the closed form
\begin{equation}\label{eq:f_induced_policy}
    \pi_{f,h}(a|s)
    =
    \frac{
        \pi_{\textnormal{ref},h}(a|s)
        \exp(f_h(s,a)/\beta)}{Z_{f,h}(s)},
\end{equation}
where
\begin{equation*}
    Z_{f,h}(s):=\int_{\mathcal A}
    \pi_{\textnormal{ref},h}(a|s)
    \exp\left(\frac{f_h(s,a)}{\beta}\right)da.
\end{equation*}
The corresponding maximum value (through computation) is,
\begin{equation*}
    V_{f,h}(s)
    =
    \beta\log Z_{f,h}(s).
\end{equation*}
Therefore, under deterministic transition $s_{h+1}=T_h(s_h,a_h)$, the soft Bellman optimality operator is
\begin{align*}
    (\mathcal T^*_{\beta,h} f)(s_h,a_h)
    :=&r_h^*(s_h,a_h)+\max_{\pi_{h+1}(\cdot|s_{h+1})\in \Delta(\mathcal A)}\mathbb E_{a\sim \pi_{h+1}(\cdot|s_{h+1})}\bigg[f_{h+1}(s_{h+1},a)\\
    &\qquad-\beta\log\frac{\pi_{h+1}(a|s_{h+1})}{\pi_{\textnormal{ref},h+1}(a|s_{h+1})}\bigg]\\
    =&r_h^*(s_h,a_h)
    +
    V_{f,h+1}(s_{h+1})\\
    =&r_h^*(s_h,a_h)
    +
    \beta
    \log
    \int_{\mathcal A}
    \pi_{\textnormal{ref},h+1}(a|s_{h+1})
    \exp\left(
        \frac{f_{h+1}(s_{h+1},a)}{\beta}
    \right)da,
\end{align*}
where $s_{h+1}=T_h(s_h,a_h)$ and $V_{f,H+1}(s):=0$.

The optimal soft Q-function and value function are defined by
\begin{equation*}
    Q^*_{\beta,h}(s,a)
    :=
    r_h^*(s,a)
    +
    V^*_{\beta,h+1}(T_h(s,a)),
    \qquad
    V^*_{\beta,H+1}(s):=0,
\end{equation*}
with
\begin{equation*}
    V^*_{\beta,h}(s)
    =
    \beta
    \log
    \int_{\mathcal A}
    \pi_{\textnormal{ref},h}(a|s)
    \exp\left(
        \frac{Q^*_{\beta,h}(s,a)}{\beta}
    \right)da.
\end{equation*}
The optimal policy satisfies
\begin{equation}\label{eq:step_optimality_condition_kl}
    \begin{aligned}
        \pi_h^*(a|s)&=
    \frac{
        \pi_{\textnormal{ref},h}(a|s)
        \exp(Q^*_{\beta,h}(s,a)/\beta)
    }{
        \int_{\mathcal A}
        \pi_{\textnormal{ref},h}(a'|s)
        \exp(Q^*_{\beta,h}(s,a')/\beta)da'}\\
    \Rightarrow \beta\log\frac{\pi_h^*(a|s)}{\pi_{\textnormal{ref},h}(a|s)}&=Q_{\beta,h}^*(s,a)-V_{\beta,h}^*(s)\\
    &=r_h^*(s,a)+V^*_{\beta,h+1}(s_{h+1})-V_{\beta,h}^*(s)
    \end{aligned}
\end{equation}
Therefore, by telescoping,
\begin{equation}\label{eq:trajectory_reward_KL_optimality}
    \sum_{h=1}^H
    \left[
        r_h^*(s_h,a_h)
        -
        \beta
        \log
        \frac{\pi_h^*(a_h|s_h)}
        {\pi_{\textnormal{ref},h}(a_h|s_h)}
    \right]
    =
    V_{\beta,1}^*(s_1),
\end{equation}
where \(V_{\beta,H+1}^*(s_{H+1})=0\).

\subsection{Proof of \texorpdfstring{\cref{prop:neccesary_condition_correcting_oppg}}{Proposition 1}}\label{proof_of_two_paths_prop}
\begin{proof}
Fix $s_1\in\textnormal{supp}(\rho)$, $h\in[H]$, $a\in\mathcal A$, and a state $s=(s_1,a_1,\ldots,a_{h-1})$ with initial state $s_1$. Since $\mathbb E_{\tau\sim\mu}$ averages over the initial state, \cref{eq:stationarity_condition} for this $(h,s,a)$ reads
\begin{equation*}
    \sum_{s_1'}\rho(s_1')\,\mathbb E_{\tau\sim\mathbb P^\mu(\cdot|s_1')}\Big[\omega_\pi(\tau)A_\pi(\tau)\Big(\mathbf 1\{s_h(\tau)=s,\ a_h(\tau)=a\}-\mathbf 1\{s_h(\tau)=s\}\pi_h(a|s)\Big)\Big]=0.
\end{equation*}
Both indicators contain $\mathbf 1\{s_h(\tau)=s\}$, and in the autoregressive setting a trajectory starting from $s_1'\neq s_1$ has $s_h(\tau)=(s_1',a_1,\ldots,a_{h-1})\neq s$. Hence every summand with $s_1'\neq s_1$ is zero, and dividing the remaining one by $\rho(s_1)>0$ gives
\begin{equation}\label{eq:stationarity_condition_conditional}
    \mathbb E_{\tau\sim\mathbb P^\mu(\cdot|s_1)}\Big[\omega_\pi(\tau)A_\pi(\tau)\Big(\mathbf 1\{s_h(\tau)=s,\ a_h(\tau)=a\}-\mathbf 1\{s_h(\tau)=s\}\pi_h(a|s)\Big)\Big]=0.
\end{equation}
For a fixed $s_1$, define $M(\tau):=\mathbb P^\mu(\tau|s_1)\omega_{\pi^*}(\tau)A_{\pi^*}(\tau)$, and for any prefix $s_h=(s_1,a_1,\ldots,a_{h-1})$, define $m(s_h):=\sum_{\tau:\ \textnormal{has prefix }s_h}M(\tau)$. In the autoregressive setting, $m(s_{H+1})=M(\tau)$ when $s_{H+1}(\tau)=s_{H+1}$. Condition \cref{eq:stationarity_condition_conditional} states exactly that, for every $h$, $s_h$ and $a$,
\begin{equation*}
    \sum_{\tau:\ s_h(\tau)=s_h,\ a_h(\tau)=a}M(\tau)=\pi^*_h(a|s_h)\sum_{\tau:\ s_h(\tau)=s_h}M(\tau).
\end{equation*}
Since $s_{h+1}=(s_h,a)$, therefore $m(s_{h+1})=\pi^*_h(a|s_h)m(s_h)$. Iterating, for every trajectory $\tau$ from $s_1$,
\begin{equation}\label{eq:master_identity}
    M(\tau)=m(s_{H+1})=m(s_1)\prod_{h=1}^H\pi^*_h(a_h|s_h)=m(s_1)\mathbb P^{\pi^*}(\tau|s_1),
\end{equation}
where for $\tau\notin\textnormal{supp}(\mathbb P^\mu(\cdot|s_1))$ the left side is $0$ by definition of $M$.

Now we consider two cases. First consider the case without KL regularization, where $A_{\pi^*}(\tau)=R^*(\tau)-b_{\pi^*}(s_1)$. Fix $\tau\in\textnormal{supp}(\mathbb P^\mu(\cdot|s_1))$ with $R^*(\tau)\neq b_{\pi^*}(s_1)$, so that $A_{\pi^*}(\tau)\neq 0$ and \cref{eq:master_identity} gives
\begin{equation*}
    \omega_{\pi^*}(\tau)=\frac{m(s_1)}{A_{\pi^*}(\tau)}\cdot\frac{\mathbb P^{\pi^*}(\tau|s_1)}{\mathbb P^\mu(\tau|s_1)}.
\end{equation*}
Without KL regularization, $\pi^*$ is supported on the set of reward-maximizing trajectories $\mathcal T_{\textnormal{opt}}(s_1):=\{\tau: R^*(\tau)=V^*_1(s_1)\}$. If $\tau\notin \mathcal T_{\textnormal{opt}}(s_1)$, then $\mathbb P^{\pi^*}(\tau|s_1)=0$ and hence $\omega_{\pi^*}(\tau)=0$. If $\tau\in \mathcal T_{\textnormal{opt}}(s_1)$, then $R^*(\tau)=V^*_1(s_1)\neq b_{\pi^*}(s_1)$ and the prefactor equals $m(s_1)/(V^*_1(s_1)-b_{\pi^*}(s_1))$. Both cases take the form
\begin{equation*}
    \omega_{\pi^*}(\tau)=K(s_1)\frac{\mathbb P^{\pi^*}(\tau|s_1)}{\mathbb P^\mu(\tau|s_1)}=K(s_1)\prod_{h=1}^H\frac{\pi^*_h(a_h|s_h)}{\mu_h(a_h|s_h)},
\end{equation*}
with $K(s_1):=m(s_1)/(V^*_1(s_1)-b_{\pi^*}(s_1))$ if $V^*_1(s_1)\neq b_{\pi^*}(s_1)$, and $K(s_1):=0$ otherwise, in which case every $\tau$ with $R^*(\tau)\neq b_{\pi^*}(s_1)$ lies outside $\mathcal T_{\textnormal{opt}}(s_1)$.

Now consider the case with KL regularization. By \cref{eq:trajectory_reward_KL_optimality}, $R^*_{\pi^*,\beta}(\tau)=V^*_{\beta,1}(s_1)$ for every $\tau\in\textnormal{supp}(\mathbb P^{\pi_{\textnormal{ref}}}(\cdot|s_1))\supseteq\textnormal{supp}(\mathbb P^\mu(\cdot|s_1))$, hence $A_{\pi^*}(\tau)=V^*_{\beta,1}(s_1)-b_{\pi^*}(s_1)$ is independent of $\tau$. If $V^*_{\beta,1}(s_1)\neq b_{\pi^*}(s_1)$, dividing \cref{eq:master_identity} by this constant gives, for every $\tau\in\textnormal{supp}(\mathbb P^\mu(\cdot|s_1))$,
\begin{equation*}
    \omega_{\pi^*}(\tau)=K(s_1)\frac{\mathbb P^{\pi^*}(\tau|s_1)}{\mathbb P^\mu(\tau|s_1)}=K(s_1)\prod_{h=1}^H\frac{\pi^*_h(a_h|s_h)}{\mu_h(a_h|s_h)},\qquad K(s_1):=\frac{m(s_1)}{V^*_{\beta,1}(s_1)-b_{\pi^*}(s_1)}.
\end{equation*}
Otherwise, $b_{\pi^*}(s_1)=V^*_{\beta,1}(s_1)=\mathbb E_\mu\left[R^*_{\pi^*,\beta}(\tau)|s_1\right]$, where the last equality uses $\textnormal{supp}(\mathbb P^\mu(\cdot|s_1))\subseteq\textnormal{supp}(\mathbb P^{\pi_{\textnormal{ref}}}(\cdot|s_1))$. This proves the claim.
\end{proof}

\section{Proofs from Section 4}

\subsection{Proof of \texorpdfstring{\cref{prop:grpg_tbrm_equivalence}}{Proposition 2}}
\begin{proof}
    Notice that by \cref{eq:f_induced_policy}, it holds,
\begin{equation}\label{eq:pi_f_optimality_condition}
    f_h(s,a)=\beta \log\frac{\pi_{f,h}(a|s)}{\pi_{\textrm{ref},h}(a|s)}+V_{f,h}(s).
\end{equation}
Then we calculate the trajectory-level Bellman residual,
\begin{equation*}
    \begin{aligned}
        \sum_{h=1}^H \left(f_h(s_h,a_h)-(\mathcal T^*_{\beta,h} f)(s_h,a_h)\right)&=\sum_{h=1}^H \left(f_h(s_h,a_h)-r_h^*(s_h,a_h)-V_{f,h+1}(s_{h+1})\right)\\
        &=\sum_{h=1}^H\bigg(\beta\log\frac{\pi_{f,h}(a_h|s_h)}{\pi_{\textrm{ref},h}(a_h|s_h)}+V_{f,h}(s_h)\\
        &\qquad\qquad-r_h^*(s_h,a_h)-V_{f,h+1}(s_{h+1})\bigg)\\
        &= -R^*_{\pi_f,\beta}(\tau)+V_{f,1}(s_1).
    \end{aligned}
\end{equation*}
Let $m_{\pi_f}(s_1)=\mathbb E_\mu[R^*_{\pi_f,\beta}(\tau)|s_1]$, then notice that it holds,
\begin{equation*}
    \begin{aligned}
        \mathcal E_{\mu,\textrm{TBRM}}(f)&= \mathbb E_{\mu}\left[\left(-R^*_{\pi_f,\beta}(\tau)+V_{f,1}(s_1)\right)^2\right]\\
        &=\mathbb E_{s_1\sim\rho}\left[\textrm{Var}\left(R^*_{\pi_f,\beta}(\tau)\big|s_1\right)\right]+\mathbb E\left[\left(m_{\pi_f}(s_1)-V_{f,1}(s_1)\right)^2\right]\\
        &=\mathcal E_{\mu,\textnormal{VM}}(\pi_f)+\mathbb E\left[\left(m_{\pi_f}(s_1)-V_{f,1}(s_1)\right)^2\right]\geq \mathcal E_{\mu,\textnormal{VM}}(\pi_f).
    \end{aligned}
\end{equation*}
On the other hand, for any $f\in \mathcal F$, let $c_f(s_1)=m_{\pi_f}(s_1)-V_{f,1}(s_1)$, and define,
\begin{equation*}
    f^{c_f}_1:=f_1+c_f, f_{h}^{c_f}:= f_h, h\geq 2.
\end{equation*}
This implies that $\pi_{f^{c_f}}=\pi_f$ by \cref{eq:f_induced_policy}. And by \cref{eq:pi_f_optimality_condition},
\begin{equation*}
    V_{f^{c_f},1}(s_1)=V_{f,1}(s_1)+c_f(s_1)=m_{\pi_f}(s_1), 
\end{equation*}
plug this in we have,
\begin{equation}\label{eq:equation_demean_tbrm_vm}
    \begin{aligned}
        \mathcal E_{\mu,\textnormal{TBRM}}(f^{c_f})=\mathcal E_{\mu,\textnormal{VM}}(\pi_f),
    \end{aligned}
\end{equation}
this implies,
\begin{equation*}
    \min_{\pi\in \Pi_{\mathcal F}}\mathcal E_{\mu,\textrm{VM}}(\pi)=\min_{f\in \mathcal F}\mathcal E_{\mu,\textnormal{VM}}(\pi_f)\geq \min_{f\in\mathcal F}\mathcal E_{\mu,\textnormal{TBRM}}(f^{c_f})\geq \min_{f\in\mathcal F}\mathcal E_{\mu,\textnormal{TBRM}}(f).
\end{equation*}
Above, the first equation is by definition of $\Pi_{\mathcal F}$; the first inequality is by \cref{eq:equation_demean_tbrm_vm}.
Combined with $\mathcal E_{\mu,\textrm{TBRM}}(f)\geq \mathcal E_{\mu,\textnormal{VM}}(\pi_f)$, this shows,
\begin{equation*}
    \min_{\pi\in \Pi_{\mathcal F}}\mathcal E_{\mu,\textrm{VM}}(\pi)=\min_{f\in\mathcal F}\mathcal E_{\mu,\textnormal{TBRM}}(f).
\end{equation*}
Now we prove the set equality. If $f^*\in \argmin_f \mathcal E_{\mu,\textrm{TBRM}}(f)$, then,
\begin{equation*}
    \min_{\pi\in\Pi_{\mathcal F}}\mathcal E_{\mu,\textrm{VM}}(\pi)=\mathcal E_{\mu,\textrm{TBRM}}(f^*)\geq \mathcal E_{\mu,\textrm{VM}}(\pi_{f^*})\geq \min_{\pi\in\Pi_{\mathcal F}}\mathcal E_{\mu,\textrm{VM}}(\pi),
\end{equation*}
this implies $\pi_{f^*}\in \argmin_{\pi\in\Pi_{\mathcal F}}\mathcal E_{\mu,\textrm{VM}}(\pi)$.
On the other hand, if $\pi'\in \argmin_{\pi\in\Pi_{\mathcal F}}\mathcal E_{\mu,\textrm{VM}}(\pi)$, and suppose $\pi'=\pi_{f}$, we construct $f^{c_f}$ as above, then $\pi_{f^{c_f}}=\pi_f=\pi'$, then by \cref{eq:equation_demean_tbrm_vm}
\begin{equation*}
    \mathcal E_{\mu,\textrm{TBRM}}(f^{c_f})=\mathcal E_{\mu,\textrm{VM}}(\pi_f)=\min_{\pi\in \Pi_{\mathcal F}}\mathcal E_{\mu,\textrm{VM}}(\pi)=\min_{f\in\mathcal F}\mathcal E_{\mu,\textnormal{TBRM}}(f),
\end{equation*}
this implies $f^{c_f}\in \argmin_{f\in\mathcal F}\mathcal E_{\mu,\textrm{TBRM}}(f)$ and $\pi_{f^{c_f}}=\pi'$. Therefore we have the set equality,
\begin{equation*}
    \argmin_{\pi\in \Pi_{\mathcal F}}\mathcal E_{\mu,\textrm{VM}}(\pi)=\left\{\pi_{f^*}, f^*\in \argmin_{f\in\mathcal F}\mathcal E_{\mu,\textrm{TBRM}}(f)\right\}.
\end{equation*}
This concludes the proof.
\end{proof}

\subsection{Proof of \texorpdfstring{\cref{thm:bound_all_policy_coverage}}{Theorem 3}}\label{sec:proof_thm_bound_all_policy}
First we state and prove a few technical lemma we need to establish \cref{thm:bound_all_policy_coverage}.

\begin{lemma}\label{lemma:reward_to_KL}
    \begin{equation*}
        J_\beta(\pi^*)-J_\beta(\pi)=\beta \mathbb E_{\pi}\left[\sum_{h=1}^H \textnormal{KL}(\pi_h(\cdot|s_h)\| \pi_h^*(\cdot|s_h))\right]
    \end{equation*}
\end{lemma}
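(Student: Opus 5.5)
The plan is to work at the trajectory level, where deterministic transitions make everything explicit. Fix $s_1$ and write $P(\tau):=\mathbb P^\pi(\tau|s_1)$, $P^*(\tau):=\mathbb P^{\pi^*}(\tau|s_1)$ and $P_{\textnormal{ref}}(\tau):=\mathbb P^{\pi_{\textnormal{ref}}}(\tau|s_1)$. Since the transition is deterministic, $P(\tau)=\prod_h\pi_h(a_h|s_h)$, and likewise for $\pi^*$ and $\pi_{\textnormal{ref}}$.

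First, I rewrite the regularized objective as a trajectory-level expectation:
\begin{equation*}
J_\beta(\pi)=\mathbb E_{s_1\sim\rho}\mathbb E_{\tau\sim P}\!\left[R^*(\tau)-\beta\log\frac{P(\tau)}{P_{\textnormal{ref}}(\tau)}\right].
\end{equation*}
If $\pi$ is not supported on $\pi_{\textnormal{ref}}$, both sides of the claim are $+\infty$ (with the usual conventions), so I assume $\pi$ is supported on $\pi_{\textnormal{ref}}$.

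Second, I invoke the trajectory-level optimality identity \cref{eq:trajectory_reward_KL_optimality}. It gives
\begin{equation*}
R^*(\tau)=V^*_{\beta,1}(s_1)+\beta\log\frac{P^*(\tau)}{P_{\textnormal{ref}}(\tau)}
\end{equation*}
for every $\tau$ in the reference support, which contains the support of $P$. Substituting this into the expression above gives
\begin{equation*}
J_\beta(\pi)=\mathbb E_{s_1}\big[V^*_{\beta,1}(s_1)\big]-\beta\,\mathbb E_{s_1}\big[\textnormal{KL}(P\,\|\,P^*)\big].
\end{equation*}
Applying the same substitution to $\pi^*$ itself shows $J_\beta(\pi^*)=\mathbb E_{s_1}[V^*_{\beta,1}(s_1)]$, since the KL term vanishes. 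Subtracting the two yields
\begin{equation*}
J_\beta(\pi^*)-J_\beta(\pi)=\beta\,\mathbb E_{s_1}\big[\textnormal{KL}(\mathbb P^\pi(\cdot|s_1)\,\|\,\mathbb P^{\pi^*}(\cdot|s_1))\big].
\end{equation*}
This is exactly the form quoted in the proof sketch of \cref{thm:bound_all_policy_coverage}.

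Third, I convert this trajectory KL into the stepwise form in the statement using the chain rule for KL divergence. Because $\log\frac{P(\tau)}{P^*(\tau)}=\sum_h\log\frac{\pi_h(a_h|s_h)}{\pi^*_h(a_h|s_h)}$, taking expectation under $\tau\sim P$ and conditioning each term on $s_h$ gives
\begin{equation*}
\textnormal{KL}(P\,\|\,P^*)=\mathbb E_\pi\Big[\sum_{h}\textnormal{KL}\big(\pi_h(\cdot|s_h)\,\|\,\pi^*_h(\cdot|s_h)\big)\,\Big|\,s_1\Big].
\end{equation*}
Averaging over $s_1$ completes the argument.

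The only delicate point is support and finiteness. Since $\pi^*$ has full support on $\pi_{\textnormal{ref}}$, as forced by the softmax form \cref{eq:step_optimality_condition_kl}, the ratio $P/P^*$ is well defined whenever $\pi$ is supported on $\pi_{\textnormal{ref}}$. The remaining check is that the expectations may be split legitimately; this holds because rewards are bounded and both sides are either finite or simultaneously infinite. I expect this bookkeeping to be the only obstacle, as the algebra itself is immediate.
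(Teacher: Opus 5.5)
Your proof is correct and is essentially the paper's argument. The paper starts from the per-step identity \cref{eq:step_optimality_condition_kl}, telescopes to $R^*_{\pi,\beta}(\tau)=V^*_{\beta,1}(s_1)-\beta\sum_{h=1}^H\log\frac{\pi_h(a_h|s_h)}{\pi^*_h(a_h|s_h)}$, takes expectation under $\pi$ (identifying $\mathbb E_{s_1\sim\rho}[V^*_{\beta,1}(s_1)]$ with $J_\beta(\pi^*)$), and conditions on $s_h$; this is your substitution of \cref{eq:trajectory_reward_KL_optimality} followed by the KL chain rule, with the support bookkeeping being extra care the paper omits (strictly, the $+\infty$ case concerns support violations only at states $\pi$ actually reaches, but your computation only needs $\textnormal{supp}(\mathbb P^\pi(\cdot|s_1))\subseteq\textnormal{supp}(\mathbb P^{\pi_{\textnormal{ref}}}(\cdot|s_1))$, so nothing breaks).
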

\begin{proof}
From the soft optimality condition derived in \cref{eq:step_optimality_condition_kl}, for any trajectory,
\begin{equation*}
    r_h^*(s_h,a_h)
    -
    \beta
    \log
    \frac{\pi_h^*(a_h|s_h)}
    {\pi_{\textnormal{ref},h}(a_h|s_h)}
    =
    V_{\beta,h}^*(s_h)-V_{\beta,h+1}^*(s_{h+1}).
\end{equation*}
Hence,
    \begin{align*}
    r_h^*(s_h,a_h)
    -
    \beta
    \log
    \frac{\pi_h(a_h|s_h)}
    {\pi_{\textnormal{ref},h}(a_h|s_h)}
    &=
    V_{\beta,h}^*(s_h)-V_{\beta,h+1}^*(s_{h+1})
    -
    \beta
    \log
    \frac{\pi_h(a_h|s_h)}
    {\pi_h^*(a_h|s_h)}.
\end{align*}

Summing over $h$ and using $V_{\beta,H+1}^*=0$ gives
\begin{equation}\label{eq:trajectory_reward_logratio_identity}
    R^*_{\pi,\beta}(\tau)
    =
    \sum_{h=1}^H
    \left[
        r_h^*(s_h,a_h)
        -
        \beta
        \log
        \frac{\pi_h(a_h|s_h)}
        {\pi_{\textnormal{ref},h}(a_h|s_h)}
    \right]
    =
    V_{\beta,1}^*(s_1)
    -
    \beta
    \sum_{h=1}^H
    \log
    \frac{\pi_h(a_h|s_h)}
    {\pi_h^*(a_h|s_h)}.
\end{equation}
Taking expectation under $\tau\sim\mathbb P^\pi(\cdot|s_1)$ and $s_1\sim\rho$, we obtain
\[
    J_\beta(\pi)
    =
    J_\beta(\pi^*)
    -
    \beta
    \mathbb E_{\pi}
    \left[
        \sum_{h=1}^H
        \log
        \frac{\pi_h(a_h|s_h)}
        {\pi_h^*(a_h|s_h)}
    \right].
\]
Finally, conditioning on $s_h$ yields
\[
    \mathbb E_{a_h\sim\pi_h(\cdot|s_h)}
    \left[
        \log
        \frac{\pi_h(a_h|s_h)}
        {\pi_h^*(a_h|s_h)}
    \right]
    =
    \textnormal{KL}
    \left(
        \pi_h(\cdot|s_h)
        \|
        \pi_h^*(\cdot|s_h)
    \right).
\]
Therefore,
\[
    J_\beta(\pi^*)-J_\beta(\pi)
    =
    \beta
    \mathbb E_{\pi}
    \left[
        \sum_{h=1}^H
        \textnormal{KL}
        \left(
            \pi_h(\cdot|s_h)
            \|
            \pi_h^*(\cdot|s_h)
        \right)
    \right].
\]
\end{proof}

\begin{lemma}\label{lemma:KL_to_variance}
    Under \cref{assume:all_policy_state_action_coverage} and $\pi^*\in\Pi$, for any $\pi\in\Pi$, it holds,
    \begin{equation*}
        \mathbb E_{s_1\sim\rho}\left[\textnormal{KL}\left(\mathbb P^\pi(\cdot|s_1)\|\mathbb P^{\pi^*}(\cdot|s_1)\right)\right]\leq 2\sqrt{(1+C_{\chi^2})\cdot \mathbb E_{s_1\sim\rho}\textnormal{Var}_{\mu}\left[\sum_{h=1}^H\log \frac{\pi_h(a_h|s_h)}{\pi^*_h(a_h|s_h)}\Big| s_1\right]}.
    \end{equation*}
\end{lemma}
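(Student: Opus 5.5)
The plan is to write the trajectory KL as an expectation of the trajectory log-ratio under $\mathbb P^\pi$. We then recenter it by its conditional mean under $\mu$ and transfer both resulting expectations to $\mu$ with the change-of-trajectory inequality, \cref{lemma:change_of_trajectory}. Define
\begin{equation*}
    g(\tau):=\sum_{h=1}^H\log\frac{\pi_h(a_h|s_h)}{\pi^*_h(a_h|s_h)},
\end{equation*}
so that $\textnormal{KL}(\mathbb P^\pi(\cdot|s_1)\|\mathbb P^{\pi^*}(\cdot|s_1))=\mathbb E_{\mathbb P^\pi(\cdot|s_1)}[g]$ and $\mathbb E_{\mathbb P^{\pi^*}(\cdot|s_1)}[g]=-\textnormal{KL}(\mathbb P^{\pi^*}(\cdot|s_1)\|\mathbb P^{\pi}(\cdot|s_1))\le 0$. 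All these quantities are finite. Indeed, $\pi,\pi^*\in\Pi$ are both supported on $\pi_{\textnormal{ref}}$, and by the bounded log-ratio condition in \cref{assume:all_policy_state_action_coverage} we have $|\beta g|\le 2B$ on the reference support.

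The first step is the inequality
\begin{equation*}
    \textnormal{KL}(\mathbb P^\pi(\cdot|s_1)\|\mathbb P^{\pi^*}(\cdot|s_1))\le \mathbb E_{\mathbb P^\pi(\cdot|s_1)}[g]-\mathbb E_{\mathbb P^{\pi^*}(\cdot|s_1)}[g].
\end{equation*}
Since both measures are probability measures conditional on $s_1$, the right-hand side is unchanged if we replace $g$ by $\bar g(\tau):=g(\tau)-c(s_1)$ for any function $c$ of the initial state. We choose $c(s_1):=\mathbb E_\mu[g\mid s_1]$. Taking $\mathbb E_{s_1\sim\rho}$ gives
\begin{equation*}
    \mathbb E_{s_1\sim\rho}\big[\textnormal{KL}(\mathbb P^\pi(\cdot|s_1)\|\mathbb P^{\pi^*}(\cdot|s_1))\big]\le \big|\mathbb E_\pi[\bar g]\big|+\big|\mathbb E_{\pi^*}[\bar g]\big|.
\end{equation*}

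The second step writes $\bar g$ in the additive form required by \cref{lemma:change_of_trajectory}. Set $g_1(s_1,a_1):=\log\frac{\pi_1(a_1|s_1)}{\pi^*_1(a_1|s_1)}-c(s_1)$, which is legitimate because $s_1$ is part of the step-$1$ state. For $h\ge 2$, set $g_h(s_h,a_h):=\log\frac{\pi_h(a_h|s_h)}{\pi^*_h(a_h|s_h)}$. Applying \cref{lemma:change_of_trajectory} once with target policy $\pi$ and once with target $\pi^*$ yields
\begin{equation*}
    \big|\mathbb E_{\pi'}[\bar g]\big|\le\sqrt{\Big(1+\sum_{h=1}^H\chi^2(d_h^{\pi'}\|d_h^\mu)\Big)\,\mathbb E_\mu[\bar g^2]}\quad\text{for }\pi'\in\{\pi,\pi^*\}.
\end{equation*}
Because both $\pi$ and $\pi^*$ lie in $\Pi$, all-policy coverage bounds each prefactor by $1+C_{\chi^2}$. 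Finally, $\mathbb E_\mu[\bar g^2]=\mathbb E_{s_1\sim\rho}\textnormal{Var}_\mu[g\mid s_1]$ by the choice of $c$. Summing the two bounds produces the factor $2$ and the claimed inequality.

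The main point of care is the recentering step. It requires two things. First, the additive constant $-\textnormal{KL}(\mathbb P^{\pi^*}\|\mathbb P^\pi)$ must be dropped in the right direction, which it can be since that term is nonpositive. Second, the shift by $c(s_1)$ must be absorbable into the step-$1$ term of a per-step decomposition, so that \cref{lemma:change_of_trajectory}, which is stated for sums $\sum_h g_h(s_h,a_h)$, applies verbatim. The remaining ingredients are the integrability of $g$, already handled by the bounded log-ratio condition, and the fact that $\pi^*$ is itself covered, which is exactly why the assumption $\pi^*\in\Pi$ is needed.
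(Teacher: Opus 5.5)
Your proposal is correct and follows essentially the same route as the paper. Both drop the nonpositive term $\mathbb E_{\pi^*}[g]=-\textnormal{KL}(\mathbb P^{\pi^*}\|\mathbb P^{\pi})$, recenter by $\mathbb E_\mu[g\mid s_1]$ absorbed into the step-$1$ function, and apply \cref{lemma:change_of_trajectory} to both $\pi$ and $\pi^*$ under all-policy coverage to obtain the factor $2$.
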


\begin{proof}
    Let $\delta_h(s_h,a_h):= \log\frac{\pi_h(a_h|s_h)}{\pi^*_h(a_h|s_h)}$, so that $\sum_{h=1}^H\delta_h(s_h,a_h)=\log\frac{\mathbb P^\pi(\tau|s_1)}{\mathbb P^{\pi^*}(\tau|s_1)}$. We decompose
    \begin{equation*}
    \begin{aligned}
        &\mathbb E_{s_1\sim\rho}\left[\textnormal{KL}\left(\mathbb P^\pi(\cdot|s_1)\|\mathbb P^{\pi^*}(\cdot|s_1)\right)\right]\\
        &=\mathbb E_{\pi}\left[\sum_{h=1}^H\delta_h(s_h,a_h)\right]\\
        &=\mathbb E_{\pi^*}\left[\sum_{h=1}^H\delta_h(s_h,a_h)\right]+\left(\mathbb E_{\pi}\left[\sum_{h=1}^H\delta_h(s_h,a_h)\right]-\mathbb E_{\mu}\left[\sum_{h=1}^H\delta_h(s_h,a_h)\right]\right)\\
        &\quad-\left(\mathbb E_{\pi^*}\left[\sum_{h=1}^H\delta_h(s_h,a_h)\right]-\mathbb E_{\mu}\left[\sum_{h=1}^H\delta_h(s_h,a_h)\right]\right)\\
        &\leq \left|\mathbb E_{\pi}\left[\sum_{h=1}^H\delta_h(s_h,a_h)\right]-\mathbb E_{\mu}\left[\sum_{h=1}^H\delta_h(s_h,a_h)\right]\right|\\
        &\quad+\left|\mathbb E_{\pi^*}\left[\sum_{h=1}^H\delta_h(s_h,a_h)\right]-\mathbb E_{\mu}\left[\sum_{h=1}^H\delta_h(s_h,a_h)\right]\right|,
    \end{aligned}
    \end{equation*}
    where the inequality is because $\mathbb E_{\pi^*}[\sum_{h=1}^H\delta_h(s_h,a_h)|s_1]=-\textnormal{KL}(\mathbb P^{\pi^*}(\cdot|s_1)\|\mathbb P^{\pi}(\cdot|s_1))\leq 0$. To bound the two remaining terms, define $g_1(s_1,a_1):=\delta_1(s_1,a_1)-\mathbb E_\mu[\sum_{h=1}^H\delta_h(s_h,a_h)|s_1]$ and $g_h:=\delta_h$ for $h\geq 2$. Then for $\nu\in\{\pi,\pi^*\}$,
    \begin{equation*}
    \begin{aligned}
        &\left(\mathbb E_{\nu}\left[\sum_{h=1}^H\delta_h(s_h,a_h)\right]-\mathbb E_{\mu}\left[\sum_{h=1}^H\delta_h(s_h,a_h)\right]\right)^2\\
        &=\left(\mathbb E_{\nu}\left[\sum_{h=1}^H g_h(s_h,a_h)\right]\right)^2\\
        &\leq \left(1+\sum_{h=1}^H\chi^2(d^\nu_h\|d^\mu_h)\right)\mathbb E_{\mu}\left[\left(\sum_{h=1}^H g_h(s_h,a_h)\right)^2\right]\\
        &\leq (1+C_{\chi^2})\cdot\mathbb E_{s_1\sim\rho}\textnormal{Var}_\mu\left[\sum_{h=1}^H\delta_h(s_h,a_h)\Big|s_1\right],
    \end{aligned}
    \end{equation*}
    where the first equation is because $s_1\sim\rho$ under both $\nu$ and $\mu$; the first inequality is by \cref{lemma:change_of_trajectory} with base measure $\mu$; the second inequality is because $\mathbb E_\mu[\sum_{h=1}^H g_h(s_h,a_h)|s_1]=0$ and $\nu\in\Pi$. Applying this bound with $\nu=\pi$ and $\nu=\pi^*$ to the two terms in the previous inequality proves the claim.
\end{proof}

Finally, we prove the concentration for the empirical objective $\mathcal E_{D,\textnormal{VM}}$ in \cref{eq:empirical_VM}.
\begin{lemma}\label{lemma:excess_risk_concentration}\textnormal{(Improved Excess Risk)}
Assume $|R^*_{\pi,\beta}(\tau)|\leq R_{\max}$ for any $\pi\in\Pi$ and any $\tau$ in the support of $\mathbb P^{\pi_{\textnormal{ref}}}(\cdot|s_1)$, $\pi^*\in\Pi$ and $|\Pi|<\infty$,
    let $\hat \pi\in \argmin_{\pi\in\Pi}\mathcal E_{D,\textnormal{VM}}(\pi)$, then it holds, w.p. at least $1-\zeta$
    \begin{equation*}
        \mathcal E_{\mu,\textnormal{VM}}(\hat \pi)\leq \frac{16}{3}\frac{R_{\max}^2\log(|\Pi|/\zeta)}{n}
    \end{equation*}
\end{lemma}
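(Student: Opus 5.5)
The plan is a standard realizable fast-rate argument for a finite class, built on three facts: the empirical risk of $\pi^*$ is exactly zero, the population risk of $\pi^*$ is zero, and the loss is nonnegative and bounded. Throughout, fix a policy $\pi$ and write $X_i(\pi):=\frac{1}{m-1}\sum_{j=1}^m\big(R^*_{\pi,\beta}(\tau^{ij})-\frac1m\sum_k R^*_{\pi,\beta}(\tau^{ik})\big)^2$ for the unbiased within-group sample variance of the $i$-th group. Then $\mathcal E_{D,\textnormal{VM}}(\pi)=\frac1n\sum_i X_i(\pi)$.

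\textbf{Step 1 (basic properties of $X_i$).} The $X_i(\pi)$ are i.i.d. across $i$, since the $s_1^i$ are i.i.d. and the groups are conditionally independent. Each has mean $\mathbb E[X_i(\pi)]=\mathbb E_{s_1}\textnormal{Var}_\mu(R^*_{\pi,\beta}\mid s_1)=\mathcal E_{\mu,\textnormal{VM}}(\pi)$, by unbiasedness of the sample variance conditional on $s_1^i$. Each also satisfies $0\le X_i(\pi)\le c R_{\max}^2$. A convenient form is the pairwise expression $X_i=\frac{1}{m(m-1)}\sum_{j<k}(R_j-R_k)^2\le \frac{m}{2}\cdot\frac{4R_{\max}^2}{m}\cdot\ldots$; I will instead use $X_i\le \frac{m}{m-1}\max_j(R_j-\bar R)^2\le$ a constant times $R_{\max}^2$, so that $X_i\le 2R_{\max}^2$ after a bit of care. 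Two consequences follow. First, nonnegativity and boundedness give the variance bound $\textnormal{Var}(X_i)\le \mathbb E[X_i^2]\le cR_{\max}^2\,\mathcal E_{\mu,\textnormal{VM}}(\pi)$. Second, by \cref{eq:trajectory_reward_KL_optimality}, $R^*_{\pi^*,\beta}$ is constant on the support given $s_1$, so $X_i(\pi^*)=0$ pathwise. Hence $\mathcal E_{D,\textnormal{VM}}(\hat\pi)\le\mathcal E_{D,\textnormal{VM}}(\pi^*)=0$, so $\mathcal E_{D,\textnormal{VM}}(\hat\pi)=0$.

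\textbf{Step 2 (one-sided Bernstein plus union bound).} For each $\pi\in\Pi$, apply Bernstein's inequality to the lower tail of $\frac1n\sum_i X_i(\pi)$, using the self-bounding variance from Step 1. With probability at least $1-\zeta/|\Pi|$,
\[
\mathcal E_{\mu,\textnormal{VM}}(\pi)-\mathcal E_{D,\textnormal{VM}}(\pi)\le \sqrt{\frac{2cR_{\max}^2\mathcal E_{\mu,\textnormal{VM}}(\pi)\log(|\Pi|/\zeta)}{n}}+\frac{2cR_{\max}^2\log(|\Pi|/\zeta)}{3n}.
\]
A union bound over $\Pi$ makes this hold simultaneously for all $\pi\in\Pi$.

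\textbf{Step 3 (solve the quadratic).} Plug in $\pi=\hat\pi$ and use $\mathcal E_{D,\textnormal{VM}}(\hat\pi)=0$. This gives an inequality of the form $x\le\sqrt{ax}+b$ with $x=\mathcal E_{\mu,\textnormal{VM}}(\hat\pi)$. Solving it yields $x\le a+2b$, or use AM-GM $\sqrt{ax}\le x/2+a/2$ to get $x\le a+2b$, which is of order $R_{\max}^2\log(|\Pi|/\zeta)/n$.

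The main obstacle is bookkeeping the constant to reach exactly $16/3$. That constant depends on the sharp range of $X_i$: the one-sided lower-tail Bernstein only needs the upper bound on $-X_i$, which is $0$. This is where the $2/3$ factor arises, and the variance bound coefficient must be tracked to match. If the constants do not land exactly, I would settle for the same bound up to an absolute constant, since downstream the result is only used through $\lesssim$.
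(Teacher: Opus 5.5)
Your proposal is correct and follows the paper's argument: Bernstein with the self-bounding variance $\textnormal{Var}(X_i)\le 2R_{\max}^2\,\mathcal E_{\mu,\textnormal{VM}}(\pi)$, a union bound over $\Pi$, the inequality $2\sqrt{ab}\le a+b$, and $\mathcal E_{D,\textnormal{VM}}(\hat\pi)\le\mathcal E_{D,\textnormal{VM}}(\pi^*)=0$ (the paper first rearranges to the uniform bound $\mathcal E_{\mu,\textnormal{VM}}(\pi)\le 2\mathcal E_{D,\textnormal{VM}}(\pi)+\frac{16}{3}\frac{R_{\max}^2\log(|\Pi|/\zeta)}{n}$ and then plugs in $\hat\pi$, which is equivalent). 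To land exactly on $16/3$, first bound $X_i\le\frac{1}{m-1}\sum_j R_j^2\le\frac{m}{m-1}R_{\max}^2\le 2R_{\max}^2$, since your $\max_j(R_j-\bar R)^2$ route only gives about $4R_{\max}^2$; then take the Bernstein linear term to be $\frac{2R_{\max}^2\log(|\Pi|/\zeta)}{3n}$, justified by $\mathcal E_{\mu,\textnormal{VM}}(\pi)-X_i\le\mathcal E_{\mu,\textnormal{VM}}(\pi)\le R_{\max}^2$, so that $a+2b=\frac{4R_{\max}^2\log(|\Pi|/\zeta)}{n}+\frac{4R_{\max}^2\log(|\Pi|/\zeta)}{3n}$ gives the stated constant.
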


\begin{proof}
    Define,
    \begin{equation*}
        \ell_i(\pi):=\frac{1}{m-1}\sum_{j=1}^m \left(R_{\pi, \beta}^*(\tau^{ij})-\frac{1}{m}\sum_{k=1}^m R^*_{\pi,\beta}(\tau^{ik})\right)^2.
    \end{equation*}
    Notice that we should also have $\mathbb E_\mu[\ell_i(\pi)]=\mathcal E_{\mu,\textnormal{VM}}(\pi)$ and $\mathcal E_{D,\textrm{VM}}(\pi)=\frac{1}{n}\sum_{i=1}^n \ell_i(\pi)$. Since $|R^*_{\pi,\beta}(\tau^{ij})|\leq R_{\max}$ and $m\geq 2$, we have $\ell_i(\pi)\leq \frac{m}{m-1}R_{\max}^2\leq 2R_{\max}^2$, and therefore
    \begin{equation*}
        \textrm{Var}(\ell_i(\pi))\leq \mathbb E[\ell_i(\pi)^2]\leq 2R_{\max}^2\mathbb E[\ell_i(\pi)]=2R_{\max}^2 \mathcal E_{\mu,\textrm{VM}}(\pi).
    \end{equation*}
    Now by Bernstein's inequality and union bound, we have w.p. at least $1-\zeta$, for any $\pi\in \Pi$, it holds,
    \begin{equation*}
    \begin{aligned}
        \mathcal E_{\mu,\textnormal{VM}}(\pi)- \mathcal E_{D,\textrm{VM}}(\pi)&\leq \sqrt{\frac{2\textrm{Var}(\ell_i(\pi))\log(|\Pi|/\zeta)}{n}}+\frac{2R^2_{\max}\log(|\Pi|/\zeta)}{3n}\\
        &\leq \sqrt{\frac{4R_{\max}^2 \mathcal E_{\mu,\textrm{VM}}(\pi)\log(|\Pi|/\zeta)}{n}}+\frac{2R^2_{\max}\log(|\Pi|/\zeta)}{3n}\\
        &\leq \frac{1}{2}\mathcal E_{\mu,\textrm{VM}}(\pi)+\frac{2R_{\max}^2\log(|\Pi|/\zeta)}{n}+\frac{2R^2_{\max}\log(|\Pi|/\zeta)}{3n},
    \end{aligned}
    \end{equation*}
    where the last inequality is by $2\sqrt{ab}\leq a+b$. Rearranging gives $\mathcal E_{\mu,\textrm{VM}}(\pi)\leq 2\mathcal E_{D,\textrm{VM}}(\pi)+\frac{16}{3}\frac{R_{\max}^2\log(|\Pi|/\zeta)}{n}$ for any $\pi\in\Pi$, in particular for $\hat \pi$. Finally, by \cref{eq:trajectory_reward_KL_optimality}, $R^*_{\pi^*,\beta}(\tau^{ij})=V^*_{\beta,1}(s_1^i)$ for all $j$, hence $\ell_i(\pi^*)=0$ and $\mathcal E_{D,\textrm{VM}}(\hat \pi)\leq \mathcal E_{D,\textrm{VM}}(\pi^*)=0$, which proves the claim.
\end{proof}

\paragraph{Proof of \cref{thm:bound_all_policy_coverage}}
Now we use the above lemma to prove \cref{thm:bound_all_policy_coverage}.

\begin{proof}
     Under \cref{assume:all_policy_state_action_coverage}, $|R^*_{\pi,\beta}(\tau)|\leq H+B$ for any $\pi\in\Pi$ and any $\tau$ in the support of $\mathbb P^{\pi_{\textnormal{ref}}}(\cdot|s_1)$. Then we have, w.p. at least $1-\zeta$,
     \begin{equation*}
         \begin{aligned}
             J_\beta(\pi^*)-J_\beta(\hat \pi)&=\beta\mathbb E_{\hat \pi}\left[\sum_{h=1}^H \textrm{KL}(\hat \pi_h(\cdot|s_h)\|\pi_h^*(\cdot|s_h))\right]\\
             &=\beta\mathbb E_{s_1\sim\rho}\left[\textnormal{KL}\left(\mathbb P^{\hat \pi}(\cdot|s_1)\|\mathbb P^{\pi^*}(\cdot|s_1)\right)\right]\\
             &\leq 2\beta\sqrt{(1+C_{\chi^2})\cdot\mathbb E_{s_1\sim\rho}\textrm{Var}_{\mu}\left[\sum_{h=1}^H\log\frac{\hat \pi_h(a_h|s_h)}{\pi^*_h(a_h|s_h)}\Big | s_1\right]}\\
             &=2\sqrt{(1+C_{\chi^2})\cdot\mathcal E_{\mu,\textnormal{VM}}(\hat \pi)}\\
             &\lesssim (H+B)\sqrt{\frac{(1+C_{\chi^2})\log(|\Pi|/\zeta)}{n}},
         \end{aligned}
     \end{equation*}
     where the first equation is by \cref{lemma:reward_to_KL}, the second equation is by the chain rule of KL divergence, the first inequality is by \cref{lemma:KL_to_variance}, the next equation is because in \cref{eq:trajectory_reward_logratio_identity}, $V_{\beta,1}^*(s_1)$ is a constant given $s_1$, so that $\beta^2\mathbb E_{s_1\sim\rho}\textrm{Var}_{\mu}\left[\sum_{h=1}^H\log\frac{\hat \pi_h(a_h|s_h)}{\pi^*_h(a_h|s_h)}\Big | s_1\right]=\mathbb E_{s_1\sim\rho}\textrm{Var}_\mu\left[R^*_{\hat \pi,\beta}(\tau)\Big| s_1\right]=\mathcal E_{\mu,\textnormal{VM}}(\hat \pi)$, the last inequality is by \cref{lemma:excess_risk_concentration} with $R_{\max}=H+B$.
\end{proof}

\section{Proofs from Section 5}\label{app:proofs_sec5}

\subsection{Guarantee from Max-Min Game}\label{sec:maxmin_guarantee}
In this subsection, we show that solving the empirical version of \cref{eq:bandit_like_maxmin_game} gives a single-policy coverage guarantee (\cref{thm:guarantee_from_max_min_game}). In the next subsection, we show that the shift in the baseline is equivalent to solving such a max-min game. The reward-induced policy and \cref{lemma:KL_optimal_trajectory_sol,lemma:s1_shift_invariance} below are shared by both subsections.

Let us formally define the ingredients. We define
\begin{equation*}
    \mathcal P:=\left\{\mathbb P(\cdot|s_1): \textnormal{supp}(\mathbb P(\cdot|s_1))\subseteq \textnormal{supp}\left(\mathbb P^{\pi_{\textnormal{ref}}}(\cdot|s_1)\right),\ \forall s_1\in \textnormal{supp}(\rho)\right\}
\end{equation*}
as the set of conditional trajectory distributions supported on $\pi_{\textnormal{ref}}$, so that $\mathbb P^\pi\in\mathcal P$ for any policy $\pi$ supported on $\pi_{\textnormal{ref}}$. Throughout this section, $\mathcal R$ denotes a class of additive reward models $R(\tau)=\sum_{h=1}^H r_h(s_h,a_h)$, and $\|R\|_\infty:=\sup_\tau|R(\tau)|$. For $R\in\mathcal R$ and $\mathbb P\in \mathcal P$, let us define,
\begin{equation*}
     \phi(\mathbb P, R):=\mathcal L_\mu(\mathbb P, R)+\eta\cdot {\mathcal E}_\mu(R).
\end{equation*}
For a dataset $D$ we defined before, define the sample version of the above objectives as,
\begin{equation*}
    \begin{aligned}
        \mathcal L_D(\mathbb P,R):=& \frac{1}{n}\sum_{i=1}^n\Bigg[\mathbb E_{\tau\sim \mathbb P(\cdot|s_1^i)}[R(\tau)]-\frac{1}{m}\sum_{j=1}^m R(\tau^{ij})\\
        &\qquad\qquad-\beta \left(\textrm{KL}\left(\mathbb P(\cdot|s_1^{i})\|\mathbb P^{\pi_{\textrm{ref}}}(\cdot|s_1^i)\right)-\textrm{KL}\left(\mathbb P^\mu(\cdot|s_1^i)\|\mathbb P^{\pi_{\textrm{ref}}}(\cdot|s_1^i)\right)\right)\Bigg],\\
        {\mathcal E}_D(R):=&\frac{1}{n}\sum_{i=1}^n\frac{1}{m-1}\sum_{j=1}^m\Bigg\{\Big[\big(R^*(\tau^{ij})-R(\tau^{ij})\big)-\frac{1}{m}\sum_k{\big(R^*(\tau^{ik})-R(\tau^{ik})\big)}\Big]^2\Bigg\}\\
        \phi_D(\mathbb P,R):=&\mathcal L_D(\mathbb P, R)+\eta\cdot {\mathcal E}_D(R)
    \end{aligned}
\end{equation*}
Also we would slightly abuse the notation and denote $\phi(\pi,R)$ and $\phi_D(\pi, R)$ to denote $\phi(\mathbb P^\pi, R)$ and $\phi_D(\mathbb P^\pi, R)$ respectively.

The term $\beta\textnormal{KL}(\mathbb P^\mu(\cdot|s_1)\|\mathbb P^{\pi_{\textnormal{ref}}}(\cdot|s_1))$ in $\mathcal L_\mu$ and $\mathcal L_D$ depends on neither $\mathbb P$ nor $R$. For the concentration analysis, we therefore write $\mathcal L^\circ_\mu(\mathbb P,R)$ and $\mathcal L^\circ_D(\mathbb P,R)$ for $\mathcal L_\mu(\mathbb P,R)$ and $\mathcal L_D(\mathbb P,R)$ with this term removed, and $\phi^\circ_D:=\mathcal L^\circ_D+\eta\mathcal E_D$. Then $\phi_D$ and $\phi^\circ_D$ have the same max-min solutions, $\mathcal L^\circ_\mu(\mathbb P^\pi,R^*)-\mathcal L^\circ_\mu(\mathbb P^{\pi'},R^*)=J_\beta(\pi)-J_\beta(\pi')$ for any policies $\pi,\pi'$, and $\mathcal L^\circ_\mu,\mathcal L^\circ_D$ satisfy \cref{lemma:s1_shift_invariance} below as well.

For each $R$, we define the $R$-induced policy through the following procedure.  Recursively, we define $V_{R,H+1}:=0$ and for $h=H,\ldots,1$ define
\begin{equation*}
\begin{aligned}
    Q_{R,h}(s,a)&:=r_h(s,a)+V_{R,h+1}(T_h(s,a)), \\
    V_{R,h}(s)&:=\beta\log\int_{\mathcal A}\pi_{\mathrm{ref},h}(a|s)
    \exp\left(\frac{Q_{R,h}(s,a)}{\beta}\right)da.
\end{aligned}
\end{equation*}
Then define,
\begin{equation}\label{eq:reward_induced_policy}
\begin{aligned}
     \pi_{R,h}(a|s):=&
    \pi_{\mathrm{ref},h}(a|s)
    \exp\left(\frac{Q_{R,h}(s,a)-V_{R,h}(s)}{\beta}\right)\\
    \Pi_\mathcal R := &\{\pi_R, R\in \mathcal R\}.
\end{aligned}
\end{equation}

\begin{lemma}\label{lemma:KL_optimal_trajectory_sol}
    For a given dataset, for any $R\in \mathcal R$, the maximizer of $\phi_D(\mathbb P, R)$ exists in $\mathcal P$. Moreover, any maximizer $\mathbb P_R$ must take the form,
    \begin{equation*}
    \begin{aligned}
        \mathbb P_R(\tau|s_1)&=\mathbb P^{\pi_{\textnormal{ref}}}(\tau|s_1)\exp\left(\frac{R(\tau)}{\beta}\right)/Z_R(s_1)\\
        &=\mathbb P^{\pi_R}(\tau|s_1), \quad \forall s_1\in\{s_1^i,i\in[n]\}
    \end{aligned}
    \end{equation*}
    where $Z_R(s_1):=\int \mathbb P^{\pi_{\textnormal{ref}}}(\tau|s_1)\exp(R(\tau)/\beta)d\tau$ is the normalizing constant, and $\pi_R$ is defined in \cref{eq:reward_induced_policy}. In particular, $\beta\log Z_R(s_1)=R(\tau)-\beta\log\frac{\mathbb P^{\pi_R}(\tau|s_1)}{\mathbb P^{\pi_{\textnormal{ref}}}(\tau|s_1)}$ for all $\tau\in\textnormal{supp}(\mathbb P^{\pi_{\textnormal{ref}}}(\cdot|s_1))$.
\end{lemma}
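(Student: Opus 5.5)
The plan is to observe that, for fixed $R$, the objective $\phi_D(\mathbb P,R)$ depends on $\mathbb P$ only through the terms
\begin{equation*}
    \frac1n\sum_{i=1}^n\Big(\mathbb E_{\tau\sim\mathbb P(\cdot|s_1^i)}[R(\tau)]-\beta\,\textnormal{KL}\big(\mathbb P(\cdot|s_1^i)\,\|\,\mathbb P^{\pi_{\textnormal{ref}}}(\cdot|s_1^i)\big)\Big).
\end{equation*}
The empirical average $\frac1m\sum_j R(\tau^{ij})$, the $\mu$-KL term and $\eta\,\mathcal E_D(R)$ do not involve $\mathbb P$. The problem therefore decouples across the distinct initial states appearing in $D$. (If some $s_1$ appears several times, its weight is a positive multiple, which does not change the maximizer.) For each such $s_1$, I will solve the classical Gibbs variational problem $\max_{q\in\Delta}\{\mathbb E_q[R]-\beta\textnormal{KL}(q\|p)\}$ with $p=\mathbb P^{\pi_{\textnormal{ref}}}(\cdot|s_1)$.

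The key identity, valid for any $q$ supported on $\textnormal{supp}(p)$, is
\begin{equation*}
    \mathbb E_q[R]-\beta\textnormal{KL}(q\|p)=\beta\log Z_R(s_1)-\beta\,\textnormal{KL}(q\|p_R),\qquad p_R(\tau):=p(\tau)e^{R(\tau)/\beta}/Z_R(s_1).
\end{equation*}
It follows by expanding $\log(q/p)=\log(q/p_R)+R/\beta-\log Z_R$. Since $\mathcal A$ is finite (or more generally $R$ is bounded, since it is a finite sum of per-step rewards on the relevant class), $Z_R(s_1)\in(0,\infty)$, so $p_R$ is a well-defined element supported exactly on $\textnormal{supp}(p)$. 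Hence $p_R$ is a maximizer, which gives existence. Conversely, any maximizer must have $\textnormal{KL}(q\|p_R)=0$, i.e.\ $q=p_R$, by Gibbs' inequality. Off the data initial states, $\mathbb P$ is unconstrained by $\phi_D$, so uniqueness is only claimed on $\{s_1^i\}$, exactly as stated.

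The remaining step is to identify $p_R$ with $\mathbb P^{\pi_R}(\cdot|s_1)$. I will use the definition \cref{eq:reward_induced_policy} together with deterministic autoregressive transitions. Taking the product over $h$ gives
\begin{equation*}
    \prod_h\pi_{R,h}(a_h|s_h)=\prod_h\pi_{\textnormal{ref},h}(a_h|s_h)\exp\Big(\tfrac1\beta\sum_h\big(Q_{R,h}(s_h,a_h)-V_{R,h}(s_h)\big)\Big).
\end{equation*}
Substituting $Q_{R,h}(s_h,a_h)=r_h(s_h,a_h)+V_{R,h+1}(s_{h+1})$ makes the exponent telescope to $(R(\tau)-V_{R,1}(s_1))/\beta$, using $V_{R,H+1}=0$. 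So $\mathbb P^{\pi_R}(\tau|s_1)=p(\tau)e^{R(\tau)/\beta}e^{-V_{R,1}(s_1)/\beta}$. Because $\pi_R$ is normalized at each step by construction of $V_{R,h}$, this is a probability distribution. Comparing with $p_R$ then forces $e^{V_{R,1}(s_1)/\beta}=Z_R(s_1)$, so the two distributions coincide.

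The final identity follows by taking $\beta\log$ of $\mathbb P^{\pi_R}=p\,e^{R/\beta}/Z_R$ and rearranging. The main point requiring care is the telescoping step: it is where deterministic transitions are used, because $V_{R,h+1}$ must be evaluated at the realized next state. One also needs that $\pi_R$ is correctly normalized, which follows from the definition of $V_{R,h}$ as a log-partition function. The rest is routine.
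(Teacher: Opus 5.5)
Your proposal is correct and follows essentially the same route as the paper. You drop the $\mathbb P$-independent terms, rewrite each per-$s_1$ term as $\beta\log Z_R(s_1)-\beta\,\textnormal{KL}(\mathbb P(\cdot|s_1)\|\mathbb P_R(\cdot|s_1))$ to get existence and uniqueness on the data initial states, identify $\mathbb P_R$ with $\mathbb P^{\pi_R}$ by telescoping $Q_{R,h}-V_{R,h}$ and matching normalizers, and obtain the final identity by taking $\beta\log$. Your extra remarks on repeated initial states and on $Z_R(s_1)\in(0,\infty)$ make explicit what the paper leaves implicit, but additivity of $R$ alone does not make $R$ bounded: justify finiteness of $Z_R$ either by the finitely many trajectories per $s_1$ when $\mathcal A$ is finite, or by an assumed bound on $\|R\|_\infty$.
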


\begin{proof}
     \begin{equation*}
     \begin{aligned}
        &\max_{\mathbb P\in\mathcal P}\phi_D(\mathbb P, R)
        \\
        &=\max_{\mathbb P\in \mathcal P}\Bigg\{\frac{1}{n}\sum_{i=1}^n\mathbb E_{\tau\sim \mathbb P(\cdot|s_1^{i})}\left[R(\tau)-\beta \textnormal{KL}(\mathbb P(\cdot|s_1^i)\|\mathbb P^{\pi_{\textrm{ref}}}(\cdot|s_1^i))\right]\Bigg\}+\textrm{const}\\
        &=\max_{\mathbb P\in \mathcal P}\Bigg\{\frac{1}{n}\sum_{i=1}^n\mathbb E_{\tau\sim\mathbb P(\cdot|s_1^i)}\left(R(\tau)-\beta\log\frac{\mathbb P(\tau|s_1^i)}{\mathbb P^{\pi_{\textrm{ref}}}(\tau|s_1^i)}\right)\Bigg\}+\textrm{const}\\
        &=\max_{\mathbb P\in \mathcal P}\Bigg\{\frac{1}{n}\sum_{i=1}^n\beta\log Z_R(s_1^i)-\frac{1}{n}\sum_{i=1}^n\beta\cdot\textnormal{KL}\left(\mathbb P(\cdot|s_1^i)\Big\|\frac{\mathbb P^{\pi_{\textrm{ref}}}(\cdot|s_1^i)\cdot\exp(\beta^{-1}R(\cdot))}{Z_R(s_1^i)}\right)\Bigg\}+\textrm{const},
    \end{aligned}
    \end{equation*}
    where const collects the terms not depending on $\mathbb P$. Hence the maximizer on the data support must equal
    \begin{equation*}
        \mathbb P_R(\tau|s_1)=\frac{\mathbb P^{\pi_{\textrm{ref}}}(\tau|s_1)\cdot\exp(\beta^{-1}R(\tau))}{Z_R(s_1)}=\frac{\prod_{h=1}^H \pi_{\textrm{ref},h}(a_h|s_h)\exp(\beta^{-1}r_h(s_h,a_h))}{Z_R(s_1)}.
    \end{equation*}
    Now we prove that $\pi_R$ exactly induces this conditional trajectory distribution $\mathbb P_R$. Notice that,
    \begin{equation*}
    \begin{aligned}
        \mathbb P^{\pi_R}(\tau|s_1)&=\prod_{h=1}^H \pi_{R,h}(a_h|s_h)=\prod_{h=1}^H \pi_{\textrm{ref},h}(a_h|s_h)\exp\left(\frac{Q_{R,h}(s_h,a_h)-V_{R,h}(s_h)}{\beta}\right)\\
        &=\prod_{h=1}^H \pi_{\textrm{ref},h}(a_h|s_h)\exp\left(\frac{r_h(s_h,a_h)+V_{R,h+1}(s_{h+1})-V_{R,h}(s_h)}{\beta}\right)\\
        &=\prod_{h=1}^H\pi_{\textrm{ref},h}(a_h|s_h)\exp\left(\frac{R(\tau)-V_{R,1}(s_1)}{\beta}\right)=\mathbb P_R(\tau|s_1),
    \end{aligned}
    \end{equation*}
    where the third equation is by telescoping and $V_{R,H+1}=0$, and the last equation is because both sides are probability distributions with the same unnormalized density, so $\exp(V_{R,1}(s_1)/\beta)=Z_R(s_1)$. The last claim follows by taking $\beta\log$ of the closed form of $\mathbb P_R$.
\end{proof}

Next we record that all the objectives above, as well as $\pi_R$, are invariant to reward shifts that depend only on $s_1$.
\begin{lemma}\label{lemma:s1_shift_invariance}
    For any additive $R$ and $c:\mathcal S\rightarrow\mathbb R$, let $R^c(\tau):=R(\tau)+c(s_1)$. Then $R^c$ is additive, $\pi_{R^c}=\pi_R$, and for any $\mathbb P\in\mathcal P$ and dataset $D$,
    \begin{equation*}
        \mathcal L_\mu(\mathbb P,R^c)=\mathcal L_\mu(\mathbb P,R),\quad \mathcal E_\mu(R^c)=\mathcal E_\mu(R),\quad \mathcal L_D(\mathbb P,R^c)=\mathcal L_D(\mathbb P,R),\quad \mathcal E_D(R^c)=\mathcal E_D(R).
    \end{equation*}
\end{lemma}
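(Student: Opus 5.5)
We prove the four invariances and the policy identity by direct substitution, using only that $c(s_1)$ is constant along every trajectory sharing the initial state $s_1$.

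\textbf{Additivity.} Write $R(\tau)=\sum_h r_h(s_h,a_h)$ and set
\[
r^c_1(s_1,a_1):=r_1(s_1,a_1)+c(s_1),\qquad r^c_h:=r_h \text{ for } h\ge 2 .
\]
Then $R^c=\sum_h r^c_h$, so $R^c$ is additive.

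\textbf{The induced policy.} Run the backward recursion defining $\pi_R$ with $r^c$ in place of $r$.
\begin{itemize}
\item For $h\ge 2$ nothing changes, so $Q_{R^c,h}=Q_{R,h}$ and $V_{R^c,h}=V_{R,h}$.
\item At $h=1$ we get $Q_{R^c,1}(s,a)=Q_{R,1}(s,a)+c(s)$. Pulling the constant out of the log-integral gives $V_{R^c,1}(s)=V_{R,1}(s)+c(s)$.
\item Hence the difference $Q_{R^c,1}-V_{R^c,1}$ equals $Q_{R,1}-V_{R,1}$, and by \cref{eq:reward_induced_policy} we have $\pi_{R^c}=\pi_R$.
\end{itemize}
Alternatively, one can use the closed form in \cref{lemma:KL_optimal_trajectory_sol}: $e^{c(s_1)/\beta}$ cancels between the numerator and $Z_{R^c}(s_1)$.

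\textbf{Population objectives.} In $\mathcal L_\mu(\mathbb P,R^c)$, condition on $s_1$.
\begin{itemize}
\item The term $\mathbb E_{\tau\sim\mathbb P(\cdot|s_1)}[R^c(\tau)]$ picks up $+c(s_1)$, since $\mathbb P(\cdot|s_1)$ is a probability measure.
\item The term $\mathbb E_{\tau\sim\mathbb P^\mu(\cdot|s_1)}[R^c(\tau)]$ picks up the same $+c(s_1)$.
\item The two shifts cancel inside the outer expectation over $s_1\sim\rho$. The KL terms do not involve $R$ at all.
\end{itemize}
For $\mathcal E_\mu$, note that $R^*-R^c=(R^*-R)-c(s_1)$. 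A conditional variance given $s_1$ is invariant to adding a quantity that is deterministic given $s_1$, so $\mathcal E_\mu(R^c)=\mathcal E_\mu(R)$.

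\textbf{Empirical objectives.} The same argument works term by term for each $i$.
\begin{itemize}
\item In $\mathcal L_D$, the term $\mathbb E_{\tau\sim\mathbb P(\cdot|s_1^i)}[R^c]$ gains $c(s_1^i)$. The average $\frac1m\sum_j R^c(\tau^{ij})$ gains the same $c(s_1^i)$, because every $\tau^{ij}$ starts at $s_1^i$. These cancel.
\item In $\mathcal E_D$, each centered quantity $(R^*-R^c)(\tau^{ij})-\frac1m\sum_k(R^*-R^c)(\tau^{ik})$ is unchanged, since the common $-c(s_1^i)$ is removed by the group mean.
\end{itemize}
The same cancellations hold verbatim for $\mathcal L^\circ_\mu$ and $\mathcal L^\circ_D$.

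\textbf{Expected difficulty.} There is no real obstacle. The only point needing care is that the shift must depend only on $s_1$, so that it is constant within each group of trajectories sharing the same initial state. In the autoregressive/deterministic setting $s_1$ is the first coordinate of every $\tau^{ij}$ in group $i$, which is exactly what makes the group-mean and conditional-expectation cancellations exact.
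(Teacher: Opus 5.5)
Your proof is correct and follows the paper's own argument. Like the paper, you absorb $c(s_1)$ into $r_1$, note that the shift enters $Q_{R,1}$ and $V_{R,1}$ identically and so cancels in the induced policy, use that the two expectations in $\mathcal L_\mu,\mathcal L_D$ share the initial state, and treat $\mathcal E_\mu,\mathcal E_D$ as (sample) conditional variances given $s_1$; you simply spell out each cancellation in more detail.
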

\begin{proof}
    Additivity holds by absorbing $c(s_1)$ into $r_1$. In $\mathcal L_\mu$ and $\mathcal L_D$, the two expectations of $R$ are taken over trajectories with the same initial state, so $c(s_1)$ cancels; $\mathcal E_\mu$ and $\mathcal E_D$ are conditional variances given $s_1$ and are unaffected by adding $c(s_1)$. Finally, the shift enters $Q_{R,1}$ and $V_{R,1}$ identically and cancels in \cref{eq:reward_induced_policy}.
\end{proof}

The following result is a fast-rate concentration on the excess risk of the variance minimization loss. For $\epsilon>0$, let $N_\infty(\mathcal R,\epsilon)$ denote the $\epsilon$-covering number of $\mathcal R$ under $\|\cdot\|_\infty$, with cover elements taken in $\mathcal R$.
\begin{lemma}\label{lemma:variance_loss_concentration}
    Let us define,
    \begin{equation*}
    \begin{aligned}
        e_i(R)&:=\frac{1}{m-1}\sum_{j=1}^m\Big[\big(R^*(\tau^{ij})-R(\tau^{ij})\big)-\frac{1}{m}\sum_{k=1}^m\big(R^*(\tau^{ik})-R(\tau^{ik})\big)\Big]^2,
    \end{aligned}
    \end{equation*}
    so that ${\mathcal E}_D(R)=\frac{1}{n}\sum_{i=1}^n e_i(R)$ and $\mathbb E_\mu[e_i(R)]={\mathcal E}_\mu(R)=\mathbb E_{s_1\sim \rho}\textnormal{Var}_\mu\left[R^*(\tau)-R(\tau)\big|s_1\right]$.
    Assume $\sup_{R\in\mathcal R}\|R\|_\infty\leq B$. Then w.p. at least $1-\zeta$, for any $R\in \mathcal R$,
    \begin{equation*}
        -{\mathcal E}_D(R)\leq -\frac{1}{2}{\mathcal E}_\mu(R)+\frac{32}{3}\frac{(H+B)^2\log(N_\infty(\mathcal R,1/n)/\zeta)}{n}+\frac{48(H+B)}{n}.
    \end{equation*}
\end{lemma}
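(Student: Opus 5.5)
The plan is to reuse the Bernstein-plus-union-bound argument of \cref{lemma:excess_risk_concentration}, replacing the finite class by an $\ell_\infty$-cover of $\mathcal R$. Write $g_R(\tau):=R^*(\tau)-R(\tau)$. On the trajectories in question $R^*\in[0,H]$ and $|R|\le B$, so $|g_R|\le H+B$.

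The $e_i(R)$ are i.i.d.\ across $i$, since the groups $\{\tau^{ij}\}_j$ are independent given independent $s_1^i$. Since $e_i(R)$ is the unbiased sample variance of $m$ draws of $g_R$, we have $0\le e_i(R)\le \frac{m}{m-1}(H+B)^2\le 2(H+B)^2$ and $\mathbb E[e_i(R)]=\mathcal E_\mu(R)$. The self-bounding trick then gives
\begin{equation*}
\textnormal{Var}(e_i(R))\le \mathbb E[e_i(R)^2]\le 2(H+B)^2\,\mathcal E_\mu(R).
\end{equation*}

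First I fix a $1/n$-cover $\mathcal C\subseteq\mathcal R$ under $\|\cdot\|_\infty$ of size $N_\infty(\mathcal R,1/n)$. Applying Bernstein with a union bound over $\mathcal C$ gives, with probability $1-\zeta$, for all $R'\in\mathcal C$,
\begin{equation*}
\mathcal E_\mu(R')-\mathcal E_D(R')\le \sqrt{\frac{4(H+B)^2\mathcal E_\mu(R')\log(|\mathcal C|/\zeta)}{n}}+\frac{2(H+B)^2\log(|\mathcal C|/\zeta)}{3n}.
\end{equation*}
Next I apply AM-GM, $\sqrt{4ab}\le \frac14 a+4b$, splitting the square root with a slightly sharper constant than in \cref{lemma:excess_risk_concentration}. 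This yields $-\mathcal E_D(R')\le -\tfrac34\mathcal E_\mu(R')+c\,(H+B)^2\log(|\mathcal C|/\zeta)/n$. Here I leave some slack, a quarter of $\mathcal E_\mu$, to absorb the discretization error; the constant $c$ will be tuned to land at $\tfrac{32}{3}$ after the final rescaling.

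The discretization step is where I expect the main care to be needed. For arbitrary $R\in\mathcal R$, pick $R'\in\mathcal C$ with $\|R-R'\|_\infty\le 1/n$. Then $g_R-g_{R'}$ is bounded by $1/n$ pointwise, and the centered residuals in $e_i$ change by at most $2/n$. Using $|x^2-y^2|\le |x-y|(|x|+|y|)$ with both centered residuals bounded by $2(H+B)$, I get
\begin{equation*}
|e_i(R)-e_i(R')|\le \frac{m}{m-1}\cdot\frac{2}{n}\cdot 4(H+B)\le \frac{16(H+B)}{n}.
\end{equation*}
The same argument gives $|\mathcal E_D(R)-\mathcal E_D(R')|\le 16(H+B)/n$, and likewise for $\mathcal E_\mu$ (the population conditional variance changes by at most $\frac{2}{n}\cdot 2(H+B)$).

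Chaining these, I get $-\mathcal E_D(R)\le -\mathcal E_D(R')+16(H+B)/n$, then the cover bound for $R'$, then $-\tfrac34\mathcal E_\mu(R')\le -\tfrac34\mathcal E_\mu(R)+\tfrac34\cdot 16(H+B)/n$. Weakening $-\tfrac34$ to $-\tfrac12$ gives the stated form. The additive $1/n$ terms sum to at most $48(H+B)/n$ after collecting constants, and the log term is at most $\tfrac{32}{3}(H+B)^2\log(N_\infty/\zeta)/n$.

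The only delicate part is bookkeeping, so that the Lipschitz constant of $e_i$ in $R$ is tracked in the same sup-norm as the cover and the numerical constants fit. If needed, I will first derive $\mathcal E_\mu\le 2\mathcal E_D+\dots$ exactly as in \cref{lemma:excess_risk_concentration} and divide by $2$, which doubles $\tfrac{16}{3}$ into $\tfrac{32}{3}$ in the form shown.
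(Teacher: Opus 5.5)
Your proposal is correct and follows the paper's proof: the self-bounding bound on $\textnormal{Var}(e_i(R))$, Bernstein with a union bound over a $1/n$-cover of $\mathcal R$ in $\|\cdot\|_\infty$, AM--GM, and a Lipschitz discretization of $e_i$. Your tighter bounds ($e_i\le 2(H+B)^2$, centered residuals $\le 2(H+B)$, versus the paper's $8(H+B)^2$ and $4(H+B)$) only shrink the constants, so the stated $\tfrac{32}{3}$ and $48$ hold a fortiori. One minor slip, in your optional fallback remark: dividing $\mathcal E_\mu\le 2\mathcal E_D+\tfrac{16}{3}(\cdots)$ by $2$ halves the constant to $\tfrac{8}{3}$ rather than doubling it, which still gives the claim.
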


\begin{proof}
    Since $R^*(\tau)\in[0,H]$ and $|R(\tau)|\leq B$, the values $R^*(\tau^{ij})-R(\tau^{ij})$ lie in $[-(H+B),H+B]$, hence $0\leq e_i(R)\leq \frac{m}{m-1}(2(H+B))^2\leq 8(H+B)^2$ and $\textnormal{Var}(e_i(R))\leq 8(H+B)^2\mathbb E[e_i(R)]=8(H+B)^2\mathcal E_\mu(R)$. Let $\mathcal R_{1/n}\subseteq\mathcal R$ be a $1/n$-cover of $\mathcal R$ with $|\mathcal R_{1/n}|\leq N_\infty(\mathcal R,1/n)$. By Bernstein's inequality and a union bound over $\mathcal R_{1/n}$, w.p. at least $1-\zeta$, for any $\tilde R\in\mathcal R_{1/n}$,
    \begin{equation*}
    \begin{aligned}
        &\mathcal E_\mu(\tilde R)-\mathcal E_D(\tilde R)\\
        &\leq \sqrt{\frac{16(H+B)^2\mathcal E_\mu(\tilde R)\log(N_\infty(\mathcal R,1/n)/\zeta)}{n}}+\frac{8(H+B)^2\log(N_\infty(\mathcal R,1/n)/\zeta)}{3n}\\
        &\leq \frac{1}{2}\mathcal E_\mu(\tilde R)+\frac{32}{3}\frac{(H+B)^2\log(N_\infty(\mathcal R,1/n)/\zeta)}{n},
    \end{aligned}
    \end{equation*}
    where the second inequality is by $2\sqrt{ab}\leq a+b$. Now for any $R\in\mathcal R$, pick $\tilde R\in\mathcal R_{1/n}$ with $\|R-\tilde R\|_\infty\leq 1/n$. Writing $z_j:=R^*(\tau^{ij})-R(\tau^{ij})$, $\tilde z_j:=R^*(\tau^{ij})-\tilde R(\tau^{ij})$, and $\bar z,\bar{\tilde z}$ for their averages over $j$, we have $|z_j-\bar z|,|\tilde z_j-\bar{\tilde z}|\leq 4(H+B)$ and $|(z_j-\bar z)-(\tilde z_j-\bar{\tilde z})|\leq 2/n$, so that
    \begin{equation*}
        |e_i(R)-e_i(\tilde R)|\leq \frac{1}{m-1}\sum_{j=1}^m\left|(z_j-\bar z)^2-(\tilde z_j-\bar{\tilde z})^2\right|\leq \frac{m}{m-1}\cdot\frac{16(H+B)}{n}\leq \frac{32(H+B)}{n},
    \end{equation*}
    and the same bound holds for $|\mathcal E_D(R)-\mathcal E_D(\tilde R)|$ and $|\mathcal E_\mu(R)-\mathcal E_\mu(\tilde R)|$. Combining,
    \begin{equation*}
    \begin{aligned}
        -\mathcal E_D(R)&\leq -\mathcal E_D(\tilde R)+\frac{32(H+B)}{n}\\
        &\leq -\frac{1}{2}\mathcal E_\mu(\tilde R)+\frac{32}{3}\frac{(H+B)^2\log(N_\infty(\mathcal R,1/n)/\zeta)}{n}+\frac{32(H+B)}{n}\\
        &\leq -\frac{1}{2}\mathcal E_\mu(R)+\frac{32}{3}\frac{(H+B)^2\log(N_\infty(\mathcal R,1/n)/\zeta)}{n}+\frac{48(H+B)}{n}.
    \end{aligned}
    \end{equation*}
\end{proof}

\begin{lemma}\label{lemma:uniform_convergence_concentration}
    Assume $\beta\leq 1$ and $\sup_{R\in\mathcal R}\|R\|_\infty\leq B$. Then for any fixed policy $\pi$ supported on $\pi_{\textnormal{ref}}$ with $\sup_{s_1\in \textnormal{supp}(\rho)}\beta \textnormal{KL}(\mathbb P^\pi(\cdot|s_1)\|\mathbb P^{\pi_{\textnormal{ref}}}(\cdot|s_1))\leq B$, w.p. at least $1-\zeta$,
    \begin{equation*}
    \begin{aligned}
        \sup_{R\in \mathcal R}\left|\mathcal L^\circ_\mu(\mathbb P^\pi,R)-\mathcal L^\circ_D(\mathbb P^\pi,R)\right|&\lesssim  (H+B)\sqrt{\frac{\log(N_\infty(\mathcal R,\beta/n)/\zeta)}{n}}, \\
       \sup_{R\in \mathcal R}\left|\mathcal L^\circ_\mu(\mathbb P^{\pi_R},R^*)-\mathcal L^\circ_D(\mathbb P^{\pi_R},R^*)\right|&\lesssim  (H+B)\sqrt{\frac{\log(N_\infty(\mathcal R,\beta/n)/\zeta)}{n}}.
    \end{aligned}
    \end{equation*}
\end{lemma}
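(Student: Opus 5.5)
The plan is to write both differences as averages of i.i.d. per-group terms, apply Hoeffding's inequality to each element of a finite cover of $\mathcal R$, and transfer to all of $\mathcal R$ by a Lipschitz argument. With the $\beta\textnormal{KL}(\mathbb P^\mu\|\mathbb P^{\pi_{\textnormal{ref}}})$ term removed, $\mathcal L^\circ_D(\mathbb P,R)=\frac1n\sum_i X_i(\mathbb P,R)$, where
\[
X_i(\mathbb P,R):=\mathbb E_{\tau\sim\mathbb P(\cdot|s_1^i)}[R(\tau)]-\frac1m\sum_{j}R(\tau^{ij})-\beta\textnormal{KL}(\mathbb P(\cdot|s_1^i)\|\mathbb P^{\pi_{\textnormal{ref}}}(\cdot|s_1^i)).
\]
The groups $(s_1^i,\tau^{i1:m})$ are i.i.d. across $i$, and $\mathbb E[X_i(\mathbb P,R)]=\mathcal L^\circ_\mu(\mathbb P,R)$ for any fixed $\mathbb P,R$.

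For the first bound, fix $\pi$. For each $R$, $|X_i|\le 2B+B=3B$, using $\|R\|_\infty\le B$ and the assumption $\beta\textnormal{KL}(\mathbb P^\pi(\cdot|s_1)\|\mathbb P^{\pi_{\textnormal{ref}}}(\cdot|s_1))\le B$. Hoeffding plus a union bound over a $\beta/n$-cover $\mathcal R_{\beta/n}\subseteq\mathcal R$ gives deviation $\lesssim B\sqrt{\log(N_\infty(\mathcal R,\beta/n)/\zeta)/n}$ uniformly on the cover. The map $R\mapsto X_i(\mathbb P^\pi,R)$ is $2$-Lipschitz in $\|\cdot\|_\infty$, and so is its mean. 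Passing to arbitrary $R$ therefore costs $O(\beta/n)\le O(1/n)$, which is dominated by the main term.

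For the second bound, $R^*$ is fixed but the distribution $\mathbb P^{\pi_R}$ varies with $R$. By \cref{lemma:KL_optimal_trajectory_sol}, $\mathbb P^{\pi_R}(\tau|s_1)=\mathbb P^{\pi_{\textnormal{ref}}}(\tau|s_1)e^{R(\tau)/\beta}/Z_R(s_1)$, so $\beta\log\frac{\mathbb P^{\pi_R}}{\mathbb P^{\pi_{\textnormal{ref}}}}=R-\beta\log Z_R(s_1)\in[-2B,2B]$. Consequently:
\begin{itemize}
\item $0\le\beta\textnormal{KL}(\mathbb P^{\pi_R}\|\mathbb P^{\pi_{\textnormal{ref}}})\le 2B$;
\item each summand satisfies $|X_i(\mathbb P^{\pi_R},R^*)|\le 2H+2B$, and Hoeffding on the cover gives the $(H+B)\sqrt{\cdot}$ rate.
\end{itemize}

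\textbf{Main obstacle: Lipschitz continuity in $R$.} The transfer from the cover to all of $\mathcal R$ requires $R\mapsto X_i(\mathbb P^{\pi_R},R^*)$ to be Lipschitz in $\|\cdot\|_\infty$ with a controlled constant. I would handle this via the log-ratio distance. If $\|R-\tilde R\|_\infty\le\epsilon$, then $|\log Z_R-\log Z_{\tilde R}|\le\epsilon/\beta$, so $\sup_\tau|\log\frac{\mathbb P^{\pi_R}(\tau|s_1)}{\mathbb P^{\pi_{\tilde R}}(\tau|s_1)}|\le 2\epsilon/\beta$. This yields:
\begin{itemize}
\item \emph{Expectation of $R^*$.} Since $\|\mathbb P^{\pi_R}-\mathbb P^{\pi_{\tilde R}}\|_1\le e^{2\epsilon/\beta}-1\lesssim\epsilon/\beta$, the change in $\mathbb E[R^*]$ is at most $H\epsilon/\beta$.
\item \emph{KL term.} Write $\beta\textnormal{KL}(\mathbb P^{\pi_R}\|\mathbb P^{\pi_{\textnormal{ref}}})=\mathbb E_{\mathbb P^{\pi_R}}[R]-\beta\log Z_R$. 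This changes by at most $\epsilon+2B\epsilon/\beta+\epsilon$.
\end{itemize}
With $\epsilon=\beta/n$, the total discretization error is $O((H+B)/n)$. This is exactly why the cover is taken at scale $\beta/n$, and $\beta\le1$ keeps the additive terms lower order. The same Lipschitz bound holds for the population version $\mathcal L^\circ_\mu$, so combining the cover bound, the two discretization errors, and a union bound over the two events (adjusting $\zeta$) finishes the proof.
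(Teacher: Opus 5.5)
Your proposal is correct and follows essentially the same route as the paper's proof. Both apply Hoeffding with a union bound over a $\beta/n$-cover of $\mathcal R$ (elements taken in $\mathcal R$), then transfer to all of $\mathcal R$ by a pointwise Lipschitz bound. For the second deviation, both use the closed form $\mathbb P^{\pi_R}\propto\mathbb P^{\pi_{\textnormal{ref}}}e^{R/\beta}$ to get an $e^{\pm 2/n}$ density-ratio bound, which gives an $O((H+B)/n)$ discretization error. The only cosmetic difference is that you track the KL term as $\mathbb E_{\mathbb P^{\pi_R}}[R]-\beta\log Z_R$, whereas the paper folds it into $\mathbb E_{\mathbb P^{\pi_R}}[R^*-R]+\beta\log Z_R$; the two are equivalent.
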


\begin{proof}
    For any $(\mathbb P,R)$, define
    \begin{equation*}
    \begin{aligned}
        \ell_i(\mathbb P, R):=&\mathbb E_{\tau\sim \mathbb P(\cdot|s_1^i)}\left[R(\tau)\right]-\frac{1}{m}\sum_{j=1}^m R(\tau^{ij})-\beta\textrm{KL}\left(\mathbb P(\cdot|s_1^i)\|\mathbb P^{\pi_{\textrm{ref}}}(\cdot|s_1^i)\right),
    \end{aligned}
    \end{equation*}
    so that $\mathcal L^\circ_D(\mathbb P,R)=\frac{1}{n}\sum_{i=1}^n\ell_i(\mathbb P,R)$ and $\mathbb E_\mu[\ell_i(\mathbb P,R)]=\mathcal L^\circ_\mu(\mathbb P,R)$. Under the assumptions, $|\ell_i(\mathbb P^\pi,R)|\lesssim H+B$ for any $R\in\mathcal R$. Moreover, by \cref{lemma:KL_optimal_trajectory_sol}, $\beta\textrm{KL}(\mathbb P^{\pi_R}(\cdot|s_1)\|\mathbb P^{\pi_{\textrm{ref}}}(\cdot|s_1))=\mathbb E_{\mathbb P^{\pi_R}(\cdot|s_1)}[R(\tau)]-\beta\log Z_R(s_1)$, so that
    \begin{equation*}
    \begin{aligned}
        \ell_i(\mathbb P^{\pi_R},R^*)=&\mathbb E_{\tau\sim\mathbb P^{\pi_R}(\cdot|s_1^i)}\left[R^*(\tau)-R(\tau)\right]+\beta\log Z_R(s_1^i)-\frac{1}{m}\sum_{j=1}^m R^*(\tau^{ij}),
    \end{aligned}
    \end{equation*}
    and $|\beta\log Z_R(s_1)|\leq\|R\|_\infty$ gives $|\ell_i(\mathbb P^{\pi_R},R^*)|\lesssim H+B$.

    Let $\mathcal R_{\beta/n}\subseteq\mathcal R$ be a $\beta/n$-cover of $\mathcal R$ under $\|\cdot\|_\infty$ with $|\mathcal R_{\beta/n}|\leq N_\infty(\mathcal R,\beta/n)$. For each fixed $\tilde R\in\mathcal R_{\beta/n}$, the variables $\{\ell_i(\mathbb P^\pi,\tilde R)\}_{i\in[n]}$ are i.i.d.\ with mean $\mathcal L^\circ_\mu(\mathbb P^\pi,\tilde R)$ and bounded by $O(H+B)$, and likewise for $\{\ell_i(\mathbb P^{\pi_{\tilde R}},R^*)\}_{i\in[n]}$. By Hoeffding's inequality and a union bound over $\mathcal R_{\beta/n}$, w.p. at least $1-\zeta$,
    \begin{equation*}
    \begin{aligned}
        \max_{\tilde R\in\mathcal R_{\beta/n}}\left|\mathcal L^\circ_\mu(\mathbb P^\pi,\tilde R)-\mathcal L^\circ_D(\mathbb P^\pi,\tilde R)\right|&\lesssim (H+B)\sqrt{\frac{\log(N_\infty(\mathcal R,\beta/n)/\zeta)}{n}},\\
        \max_{\tilde R\in\mathcal R_{\beta/n}}\left|\mathcal L^\circ_\mu(\mathbb P^{\pi_{\tilde R}},R^*)-\mathcal L^\circ_D(\mathbb P^{\pi_{\tilde R}},R^*)\right|&\lesssim (H+B)\sqrt{\frac{\log(N_\infty(\mathcal R,\beta/n)/\zeta)}{n}}.
    \end{aligned}
    \end{equation*}
    Now fix $R\in\mathcal R$ and pick $\tilde R\in\mathcal R_{\beta/n}$ with $\|R-\tilde R\|_\infty\leq\beta/n\leq 1/n$. For every $i$ and every realization of the data, $|\ell_i(\mathbb P^\pi,R)-\ell_i(\mathbb P^\pi,\tilde R)|\leq 2\|R-\tilde R\|_\infty\leq 2/n$. For the second bound, $\exp((R(\tau)-\tilde R(\tau))/\beta)\in[e^{-1/n},e^{1/n}]$ pointwise, and integrating against $\mathbb P^{\pi_{\textnormal{ref}}}(\cdot|s_1)$ gives $Z_{\tilde R}(s_1)/Z_R(s_1)\in[e^{-1/n},e^{1/n}]$ as well. Hence, by \cref{lemma:KL_optimal_trajectory_sol}, the density ratio $\frac{\mathbb P^{\pi_R}(\tau|s_1)}{\mathbb P^{\pi_{\tilde R}}(\tau|s_1)}=\exp\left(\frac{R(\tau)-\tilde R(\tau)}{\beta}\right)\frac{Z_{\tilde R}(s_1)}{Z_R(s_1)}$ lies in $[e^{-2/n},e^{2/n}]$, so $\|\mathbb P^{\pi_R}(\cdot|s_1)-\mathbb P^{\pi_{\tilde R}}(\cdot|s_1)\|_1\leq e^{2/n}-1\leq 4/n$ for $n\geq 2$, and by the expression of $\ell_i(\mathbb P^{\pi_R},R^*)$ above,
    \begin{equation*}
    \begin{aligned}
        &|\ell_i(\mathbb P^{\pi_R},R^*)-\ell_i(\mathbb P^{\pi_{\tilde R}},R^*)|\\
        &\leq \|R-\tilde R\|_\infty+\|R^*-\tilde R\|_\infty\cdot\|\mathbb P^{\pi_R}(\cdot|s_1^i)-\mathbb P^{\pi_{\tilde R}}(\cdot|s_1^i)\|_1+|\beta\log Z_R(s_1^i)-\beta\log Z_{\tilde R}(s_1^i)|\\
        &\leq \frac{1}{n}+\frac{4(H+B)}{n}+\frac{1}{n},
    \end{aligned}
    \end{equation*}
    where the first inequality writes $\mathbb E_{\pi_R}[R^*-R]-\mathbb E_{\pi_{\tilde R}}[R^*-\tilde R]=\mathbb E_{\pi_R}[\tilde R-R]+(\mathbb E_{\pi_R}-\mathbb E_{\pi_{\tilde R}})[R^*-\tilde R]$, and the last term is bounded by $\beta/n$ by the bound on $Z_{\tilde R}/Z_R$. Averaging over $i$ and taking expectations, the same two bounds hold for $\mathcal L^\circ_D$ and $\mathcal L^\circ_\mu$. Therefore, on the above event, for every $R\in\mathcal R$ the two deviations exceed those of its nearest $\tilde R$ by at most $4/n$ and $(4+8(H+B))/n$ respectively, which are dominated by the $O((H+B)\sqrt{\log(N_\infty(\mathcal R,\beta/n)/\zeta)/n})$ bound on the cover. This gives the claim.
\end{proof}

Now we provide the regret bound.
\begin{theorem}\label{thm:guarantee_from_max_min_game}
    Let $\beta\leq 1$ and $\eta=\sqrt{n}$. Assume \textnormal{(1)} $\sup_{R\in\mathcal R}\|R\|_\infty\leq B$; \textnormal{(2)} there exists $\bar R\in\mathcal R$ such that $\bar R(\tau)-R^*(\tau)$ depends on $\tau$ only through $s_1$; \textnormal{(3)} there exists $\hat \pi\in\Pi_{\mathcal R}$ such that
    \begin{equation*}
        \mathbb P^{\hat \pi}\in \argmax_{\mathbb P\in\mathcal P}\inf_{R\in\mathcal R}\phi_D(\mathbb P, R).
    \end{equation*}
    Then for any comparator policy $\pi$ supported on $\pi_{\textnormal{ref}}$ with $\sup_{s_1\in \textnormal{supp}(\rho)}\beta \textnormal{KL}(\mathbb P^\pi(\cdot|s_1)\|\mathbb P^{\pi_{\textnormal{ref}}}(\cdot|s_1))\leq B$, w.p. at least $1-\zeta$,
    \begin{equation*}
        J_\beta(\pi)-J_\beta(\hat \pi)\lesssim \left(\sum_{h=1}^H\chi^2(d_h^\pi\|d_h^\mu)+(H+B)^2\log\frac{N_\infty(\mathcal R,\beta/n)}{\zeta}\right)\frac{1}{\sqrt n}.
    \end{equation*}
\end{theorem}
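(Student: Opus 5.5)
The plan is the standard ATAC/RPO-style regret decomposition, carried out at the trajectory level with $\phi_D$ and the concentration lemmas above. I work with $\phi^\circ_D$, which has the same max-min solutions as $\phi_D$. Fix the comparator $\pi$ and write
\begin{equation*}
J_\beta(\pi)-J_\beta(\hat\pi)=\mathcal L^\circ_\mu(\mathbb P^\pi,R^*)-\mathcal L^\circ_\mu(\mathbb P^{\hat\pi},R^*).
\end{equation*}
By \cref{lemma:s1_shift_invariance} and assumption (2), $\mathcal L^\circ_\mu(\cdot,R^*)=\mathcal L^\circ_\mu(\cdot,\bar R)$ and $\mathcal E_\mu(\bar R)=\mathcal E_D(\bar R)=0$. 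So $\bar R$ is a data-consistent reward in $\mathcal R$ that is equivalent to the truth.

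\textbf{Lower bound on $\hat\pi$.} Since $\hat\pi=\pi_{\hat R}$ for some $\hat R\in\mathcal R$, the second bound of \cref{lemma:uniform_convergence_concentration} gives
\begin{equation*}
\mathcal L^\circ_\mu(\mathbb P^{\hat\pi},R^*)\ge \mathcal L^\circ_D(\mathbb P^{\hat\pi},R^*)-\epsilon_{\mathrm{stat}},\qquad \epsilon_{\mathrm{stat}}\asymp (H+B)\sqrt{\log(N_\infty/\zeta)/n}.
\end{equation*}
Next,
\begin{equation*}
\mathcal L^\circ_D(\mathbb P^{\hat\pi},R^*)=\mathcal L^\circ_D(\mathbb P^{\hat\pi},\bar R)=\phi^\circ_D(\hat\pi,\bar R)\ge\inf_R\phi^\circ_D(\hat\pi,R).
\end{equation*}
By optimality of $\hat\pi$ in the max-min problem, this is at least $\inf_R\phi^\circ_D(\pi,R)$. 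Call the minimizer, or a near-minimizer, $R_\pi$.

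\textbf{Upper bound on the comparator.} It remains to bound
\begin{equation*}
\mathcal L^\circ_\mu(\mathbb P^\pi,R^*)-\mathcal L^\circ_D(\mathbb P^\pi,R_\pi)-\eta\,\mathcal E_D(R_\pi).
\end{equation*}
I split it as
\begin{equation*}
\big[\mathcal L^\circ_\mu(\pi,R^*)-\mathcal L^\circ_\mu(\pi,R_\pi)\big]+\big[\mathcal L^\circ_\mu(\pi,R_\pi)-\mathcal L^\circ_D(\pi,R_\pi)\big]-\eta\,\mathcal E_D(R_\pi).
\end{equation*}
The middle term is at most $\epsilon_{\mathrm{stat}}$ by the first bound of \cref{lemma:uniform_convergence_concentration}, applied uniformly over $R$ because $\pi$ is fixed. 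The first term is
\begin{equation*}
\mathbb E_{\rho}\big[\mathbb E_\pi[R^*-R_\pi]-\mathbb E_\mu[R^*-R_\pi]\big],
\end{equation*}
since the KL terms cancel. I apply \cref{lemma:change_of_trajectory} to the additive function $g=r^*-r_\pi$, centred at step one by subtracting $\mathbb E_\mu[R^*-R_\pi\mid s_1]$ exactly as in the proof of \cref{lemma:KL_to_variance}. This bounds the first term by $\sqrt{(1+C_\pi)\mathcal E_\mu(R_\pi)}$, where $C_\pi=\sum_h\chi^2(d^\pi_h\|d^\mu_h)$.

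\textbf{Combining.} \cref{lemma:variance_loss_concentration} gives $-\eta\mathcal E_D(R_\pi)\le-\tfrac{\eta}{2}\mathcal E_\mu(R_\pi)+\eta\,\epsilon'$, where $\epsilon'\asymp (H+B)^2\log(N/\zeta)/n$. Then $\sqrt{(1+C_\pi)x}-\tfrac{\eta}{2}x\le (1+C_\pi)/(2\eta)$. With $\eta=\sqrt n$ the total is
\begin{equation*}
\lesssim \frac{1+C_\pi}{\sqrt n}+\frac{(H+B)^2\log(N/\zeta)}{\sqrt n}+\epsilon_{\mathrm{stat}},
\end{equation*}
and $\epsilon_{\mathrm{stat}}$ is dominated by the second term. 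A union bound over the three concentration events, each at level $\zeta/3$, and taking the larger covering scale $\beta/n\le 1/n$, which only increases $N$, gives the claim.

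\textbf{Main obstacle.} The delicate point is the careful change of measure with the $s_1$-centering. I must make sure the $\chi^2$ step bounds the difference of $\pi$- and $\mu$-expectations of $R^*-R_\pi$ by the conditional variance rather than the second moment. This works because the centering term is a function of $s_1$ only, so it can be absorbed into $g_1$. Handling an infimum that is not attained needs only an arbitrarily small slack.
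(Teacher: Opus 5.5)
Your proposal is correct and follows the paper's proof essentially step for step. Your ``lower bound on $\hat\pi$'' is the paper's Term (B): plug in $\bar R$, use \cref{lemma:s1_shift_invariance} and $\mathcal E_D(R^*)=0$, then apply the second bound of \cref{lemma:uniform_convergence_concentration} via $\hat\pi\in\Pi_{\mathcal R}$. Your comparator bound is Term (A), with the same $s_1$-centred change of measure, \cref{lemma:variance_loss_concentration}, and the $\sqrt{ax}-\frac{\eta}{2}x\le a/(2\eta)$ step; the only cosmetic difference is that you use a (near-)minimizer $R_\pi$ where the paper takes a supremum over $R\in\mathcal R$.
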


\begin{proof}
    Recall that $\mathcal L^\circ_\mu,\mathcal L^\circ_D,\phi^\circ_D$ are $\mathcal L_\mu,\mathcal L_D,\phi_D$ with the term $\beta\textnormal{KL}(\mathbb P^\mu(\cdot|s_1)\|\mathbb P^{\pi_{\textnormal{ref}}}(\cdot|s_1))$ removed, which depends on neither $\mathbb P$ nor $R$; hence $\mathbb P^{\hat\pi}$ is also a max-min solution of $\phi^\circ_D$, and $J_\beta(\pi)-J_\beta(\hat\pi)=\mathcal L^\circ_\mu(\mathbb P^{\pi},R^*)-\mathcal L^\circ_\mu(\mathbb P^{\hat\pi},R^*)$. We decompose
    \begin{equation*}
        J_\beta(\pi)-J_\beta(\hat \pi)=\underbrace{\mathcal L^\circ_\mu(\mathbb P^\pi, R^*)-\inf_{R\in \mathcal R}\phi^\circ_D(\mathbb P^{\hat \pi},R)}_{:=\textrm{Term}(A)}+\underbrace{ \inf_{R\in \mathcal R}\phi^\circ_D(\mathbb P^{\hat \pi},R)-\mathcal L^\circ_\mu(\mathbb P^{\hat \pi},R^*)}_{:=\textrm{Term}(B)}.
    \end{equation*}
    For Term (A), we have
    \begin{equation*}
    \begin{aligned}
        \textrm{Term}(A)&\leq \mathcal L^\circ_\mu(\mathbb P^\pi, R^*)-\inf_{R\in\mathcal R}\phi^\circ_D(\mathbb P^{\pi},R)\\
        &=\sup_{R\in \mathcal R}\left\{\mathcal L^\circ_\mu(\mathbb P^\pi,R^*)-\mathcal L^\circ_\mu(\mathbb P^\pi, R)-\eta{\mathcal E}_D(R)+\mathcal L^\circ_\mu(\mathbb P^\pi,R)-\mathcal L^\circ_D(\mathbb P^\pi,R)\right\}\\
        &\leq \sup_{R\in \mathcal R}\left\{\mathcal L^\circ_\mu(\mathbb P^\pi,R^*)-\mathcal L^\circ_\mu(\mathbb P^\pi, R)-\eta{\mathcal E}_D(R)\right\}+\sup_{R\in\mathcal R}\left(\mathcal L^\circ_\mu(\mathbb P^\pi,R)-\mathcal L^\circ_D(\mathbb P^\pi,R)\right),
    \end{aligned}
    \end{equation*}
    where the first inequality is by the optimality of $\mathbb P^{\hat \pi}$ over $\mathbb P^\pi\in\mathcal P$. For the two supremum above, the second  is a uniform deviation that we control at the end; we now bound the first supremum. Define $g_1(s_1,a_1):=r^*_1(s_1,a_1)-r_1(s_1,a_1)-\mathbb E_\mu\left[\sum_{h=1}^H\left(r^*_h(s_h,a_h)-r_h(s_h,a_h)\right)\big|s_1\right]$ and $g_h:=r^*_h-r_h$ for $h\geq 2$, so that $\mathbb E_\mu[\sum_{h=1}^Hg_h(s_h,a_h)|s_1]=0$. Then
    \begin{equation*}
    \begin{aligned}
        &\mathcal L^\circ_\mu(\mathbb P^\pi,R^*)-\mathcal L^\circ_\mu(\mathbb P^\pi, R)\\
        &=\mathbb E_{\pi}\left[\sum_{h=1}^H\left(r^*_h(s_h,a_h)-r_h(s_h,a_h)\right)\right]-\mathbb E_{\mu}\left[\sum_{h=1}^H\left(r^*_h(s_h,a_h)-r_h(s_h,a_h)\right)\right]\\
        &=\mathbb E_{\pi}\left[\sum_{h=1}^Hg_h(s_h,a_h)\right]
        \leq \sqrt{\left(1+\sum_{h=1}^H\chi^2(d^\pi_h\|d^\mu_h)\right)\mathbb E_\mu\left[\Big(\sum_{h=1}^Hg_h(s_h,a_h)\Big)^2\right]}\\
        &=\sqrt{\left(1+\sum_{h=1}^H\chi^2(d^\pi_h\|d^\mu_h)\right)\mathcal E_\mu(R)},
    \end{aligned}
    \end{equation*}
    in which the second equation is because $s_1\sim\rho$ under both $\pi$ and $\mu$, the inequality is by \cref{lemma:change_of_trajectory}, and the last equation is because $\mathbb E_\mu[\sum_{h=1}^Hg_h(s_h,a_h)|s_1]=0$. Notice that, by \cref{lemma:variance_loss_concentration}, we also have w.p. at least $1-\zeta$, $-\eta\mathcal E_D(R)\leq-\frac{\eta}{2}\mathcal E_\mu(R)+\eta\epsilon_n$ for all $R\in\mathcal R$, where $\epsilon_n:=\frac{32}{3}\frac{(H+B)^2\log(N_\infty(\mathcal R,1/n)/\zeta)}{n}+\frac{48(H+B)}{n}$. Combining the two bounds,
    \begin{equation*}
    \begin{aligned}
        &\sup_{R\in \mathcal R}\left\{\mathcal L^\circ_\mu(\mathbb P^\pi,R^*)-\mathcal L^\circ_\mu(\mathbb P^\pi, R)-\eta{\mathcal E}_D(R)\right\}\\
        &\leq \sup_{R\in\mathcal R}\left\{\sqrt{\left(1+\sum_{h=1}^H\chi^2(d^\pi_h\|d^\mu_h)\right)\mathcal E_\mu(R)}-\frac{\eta}{2}\mathcal E_\mu(R)\right\}+\eta\epsilon_n\\
        &\leq \left(1+\sum_{h=1}^H\chi^2(d^\pi_h\|d^\mu_h)\right)\frac{1}{2\eta}+\eta\epsilon_n,
    \end{aligned}
    \end{equation*}
    where the last inequality is by $\sqrt{ax}-\frac{\eta}{2}x\leq\frac{a}{2\eta}$ for $x\geq 0$. Therefore,
    \begin{equation*}
        \textrm{Term}(A)\leq \left(1+\sum_{h=1}^H\chi^2(d^\pi_h\|d^\mu_h)\right)\frac{1}{2\eta}+\eta\epsilon_n+\sup_{R\in\mathcal R}\left(\mathcal L^\circ_\mu(\mathbb P^\pi,R)-\mathcal L^\circ_D(\mathbb P^\pi,R)\right).
    \end{equation*}
    For Term (B), we have
    \begin{equation*}
    \begin{aligned}
        \textrm{Term}(B)&\leq \phi^\circ_D(\mathbb P^{\hat \pi},\bar R)-\mathcal L^\circ_\mu(\mathbb P^{\hat \pi},R^*)\\
        &=\mathcal L^\circ_D(\mathbb P^{\hat \pi},R^*)+\eta\mathcal E_D(R^*)-\mathcal L^\circ_\mu(\mathbb P^{\hat \pi},R^*)\\
        &=\mathcal L^\circ_D(\mathbb P^{\hat \pi},R^*)-\mathcal L^\circ_\mu(\mathbb P^{\hat \pi},R^*)\\
        &\leq \sup_{R\in \mathcal R}\left|\mathcal L^\circ_D(\mathbb P^{\pi_R},R^*)-\mathcal L^\circ_\mu(\mathbb P^{\pi_R},R^*)\right|,
    \end{aligned}
    \end{equation*}
    where the first inequality takes $R=\bar R\in\mathcal R$ from assumption (2) in the infimum; the first equation is by \cref{lemma:s1_shift_invariance}, since $\bar R-R^*$ depends only on $s_1$; the second equation is because $\mathcal E_D(R^*)=0$; and the last inequality is by assumption (3). Finally, controlling the two uniform deviations by \cref{lemma:uniform_convergence_concentration}, using $N_\infty(\mathcal R,1/n)\leq N_\infty(\mathcal R,\beta/n)$ and plugging in $\eta=\sqrt n$, we obtain
    \begin{equation*}
    \begin{aligned}
        J_\beta(\pi)-J_\beta(\hat \pi)&\lesssim \left(1+\sum_{h=1}^H\chi^2(d_h^\pi\|d_h^\mu)\right)\frac{1}{\sqrt n}+\frac{(H+B)^2\log(N_\infty(\mathcal R,\beta/n)/\zeta)}{\sqrt n}+\frac{H+B}{\sqrt n}\\
        &\quad+(H+B)\sqrt{\frac{\log(N_\infty(\mathcal R,\beta/n)/\zeta)}{n}},
    \end{aligned}
    \end{equation*}
    which gives the claim after adjusting $\zeta$ by a constant factor for the union bound.
\end{proof}

\subsection{Connecting AsymRE to the Max-Min Game}\label{sec:proof_thm_single_policy}
In this subsection, we show that the SFT term in the AsymRE update is equivalent to an adversarial regularizer. The connection is established via
\begin{equation*}
    \textnormal{AsymRE}
    =
    \textnormal{GRPG}+\textnormal{SFT}
    \overset{\textnormal{proportional gradient}}{\sim}
    \textnormal{VM}+\textnormal{SFT}\overset{\textnormal{set equivalence}}{\sim} \textrm{max-min game}.
\end{equation*}
All of those equivalences are established at the sample level, with $\mathcal L_{D,\textrm{SFT+VM}}$ defined in \cref{eq:SFT+Vairance_minimize}. For a policy $\pi$ supported on $\pi_{\textnormal{ref}}$, define the policy-induced reward
\begin{equation}\label{eq:policy_induced_reward}
    R_\pi(\tau):=\beta\log\frac{\mathbb P^\pi(\tau|s_1)}{\mathbb P^{\pi_{\textnormal{ref}}}(\tau|s_1)}=\sum_{h=1}^H \beta\log\frac{\pi_h(a_h|s_h)}{\pi_{\textnormal{ref},h}(a_h|s_h)},\qquad \mathcal R_\Pi:=\{R_\pi:\pi\in\Pi\}.
\end{equation}
It inverts the map $R\mapsto\pi_R$ in \cref{eq:reward_induced_policy}, which is the trajectory-level analogue of the reparameterization in \citet{rafailov2023direct,liu2024provably}.
\begin{lemma}\label{lemma:policy_induced_reward}
    For any $\pi$ supported on $\pi_{\textnormal{ref}}$, $R_\pi$ is additive, $Z_{R_\pi}(s_1)=1$, and $\pi_{R_\pi}=\pi$. Consequently, for any policy class $\Pi$ supported on $\pi_{\textnormal{ref}}$, $\Pi_{\mathcal R_\Pi}=\Pi$, and since $\|R_\pi-R_{\pi'}\|_\infty=\beta \textnormal{dist}(\pi,\pi')$, $N_\infty(\mathcal R_\Pi,\beta\epsilon)=N(\Pi,\epsilon)$ for any $\epsilon>0$.
\end{lemma}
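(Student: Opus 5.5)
The plan is to verify each claim directly from the definitions, in the order stated.

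\emph{Additivity.} This is immediate: $R_\pi(\tau)=\sum_{h}\beta\log\frac{\pi_h(a_h|s_h)}{\pi_{\textnormal{ref},h}(a_h|s_h)}$, so we may take $r_h(s,a):=\beta\log\frac{\pi_h(a|s)}{\pi_{\textnormal{ref},h}(a|s)}$. This is well defined on the support of $\pi_{\textnormal{ref}}$. Off the support of $\pi$ it equals $-\infty$; I will either adopt the convention $e^{-\infty}=0$ or note that only trajectories in $\textnormal{supp}(\mathbb P^{\pi_{\textnormal{ref}}})$ matter.

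\emph{Normalizer.} Since $\pi$ is supported on $\pi_{\textnormal{ref}}$, we have
\[
Z_{R_\pi}(s_1)=\int \mathbb P^{\pi_{\textnormal{ref}}}(\tau|s_1)\frac{\mathbb P^{\pi}(\tau|s_1)}{\mathbb P^{\pi_{\textnormal{ref}}}(\tau|s_1)}\,d\tau=\int\mathbb P^\pi(\tau|s_1)\,d\tau=1 .
\]

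\emph{Inverse map.} To show $\pi_{R_\pi}=\pi$, I will work step by step rather than only at the trajectory level, since equality of trajectory distributions alone does not pin down the policy at unreachable states. I run the backward recursion of \cref{eq:reward_induced_policy} with $r_h=\beta\log(\pi_h/\pi_{\textnormal{ref},h})$ and show by induction from $h=H$ downward that $V_{R_\pi,h}\equiv 0$.
\begin{itemize}
\item Suppose $V_{R_\pi,h+1}=0$. Then $Q_{R_\pi,h}(s,a)=r_h(s,a)$.
\item Hence $V_{R_\pi,h}(s)=\beta\log\sum_a\pi_{\textnormal{ref},h}(a|s)\frac{\pi_h(a|s)}{\pi_{\textnormal{ref},h}(a|s)}=\beta\log 1=0$.
\item Therefore $\pi_{R_\pi,h}(a|s)=\pi_{\textnormal{ref},h}(a|s)\exp(r_h(s,a)/\beta)=\pi_h(a|s)$ at every state.
\end{itemize}
This gives $\pi_{R_\pi}=\pi$, and it also recovers $Z_{R_\pi}=e^{V_{R_\pi,1}/\beta}=1$, consistent with \cref{lemma:KL_optimal_trajectory_sol}. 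This step is the only one requiring care.

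\emph{Class identity.} Since $\Pi_{\mathcal R_\Pi}=\{\pi_{R_\pi}:\pi\in\Pi\}$, the previous step gives $\Pi_{\mathcal R_\Pi}=\Pi$.

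\emph{Covering numbers.} First,
\[
R_\pi(\tau)-R_{\pi'}(\tau)=\beta\log\frac{\mathbb P^\pi(\tau|s_1)}{\mathbb P^{\pi'}(\tau|s_1)},
\]
so the sup over $s_1$ and $\tau\in\textnormal{supp}(\mathbb P^{\pi_{\textnormal{ref}}}(\cdot|s_1))$ gives $\|R_\pi-R_{\pi'}\|_\infty=\beta\,\textnormal{dist}(\pi,\pi')$. Here $\|\cdot\|_\infty$ is read over trajectories in the reference support, as in the definition of $\textnormal{dist}$. The map $\pi\mapsto R_\pi$ is a surjection from $\Pi$ onto $\mathcal R_\Pi$ that scales distances exactly by $\beta$. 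It is a bijection on the relevant equivalence classes, since $\textnormal{dist}(\pi,\pi')=0$ iff $R_\pi=R_{\pi'}$.

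Therefore a subset $C\subseteq\Pi$ is an $\epsilon$-cover of $\Pi$ under $\textnormal{dist}$ if and only if $\{R_\pi:\pi\in C\}$ is a $\beta\epsilon$-cover of $\mathcal R_\Pi$, with cover elements in the class in both cases. Taking minimal sizes yields $N_\infty(\mathcal R_\Pi,\beta\epsilon)=N(\Pi,\epsilon)$.

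One subtlety is that infinite log-ratios off the support of $\pi$ could make both distances $+\infty$. This is harmless: the isometry still holds in the extended sense, and the covering claims are unaffected.
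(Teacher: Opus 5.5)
Your proposal is correct and follows the paper's proof essentially step for step. It uses the same direct computation of $Z_{R_\pi}(s_1)=1$, the same backward induction showing $V_{R_\pi,h}\equiv 0$ so that $\pi_{R_\pi,h}=\pi_h$ at every state, and the same identity $R_\pi-R_{\pi'}=\beta\log(\mathbb P^{\pi}/\mathbb P^{\pi'})$ for the covering claim. Your extra care about unreachable states, and about translating covers in both directions (which gives the equality even if $\pi\mapsto R_\pi$ is not injective at the trajectory level), fills in details the paper leaves implicit.
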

\begin{proof}
    Additivity is by the definition of $R_\pi$, whose step-$h$ component is $\beta\log\frac{\pi_h(a|s)}{\pi_{\textnormal{ref},h}(a|s)}$, and $Z_{R_\pi}(s_1)=\int \mathbb P^{\pi_{\textnormal{ref}}}(\tau|s_1)\frac{\mathbb P^\pi(\tau|s_1)}{\mathbb P^{\pi_{\textnormal{ref}}}(\tau|s_1)}d\tau=1$. For the recursion in \cref{eq:reward_induced_policy}, we show $V_{R_\pi,h}\equiv 0$ by backward induction. It holds for $h=H+1$ by definition, and if $V_{R_\pi,h+1}\equiv 0$, then
    \begin{equation*}
        Q_{R_\pi,h}(s,a)=\beta\log\frac{\pi_h(a|s)}{\pi_{\textnormal{ref},h}(a|s)},\qquad V_{R_\pi,h}(s)=\beta\log\int_{\mathcal A}\pi_{\textnormal{ref},h}(a|s)\frac{\pi_h(a|s)}{\pi_{\textnormal{ref},h}(a|s)}da=0.
    \end{equation*}
    Therefore $\pi_{R_\pi,h}(a|s)=\pi_{\textnormal{ref},h}(a|s)\exp(Q_{R_\pi,h}(s,a)/\beta)=\pi_h(a|s)$. The last claim is because $R_\pi(\tau)-R_{\pi'}(\tau)=\beta\log\frac{\mathbb P^\pi(\tau|s_1)}{\mathbb P^{\pi'}(\tau|s_1)}$.
\end{proof}

The key structural fact is that, under \cref{assum:geometric_mixture_closure}, $\mathcal R_\Pi$ is convex up to shifts depending only on $s_1$.
\begin{lemma}\label{lemma:geometric_mixture_convexity}
    Let $\Pi$ be supported on $\pi_{\textnormal{ref}}$ with $\sup_{\pi\in\Pi}\|R_\pi\|_\infty<\infty$, and satisfy \cref{assum:geometric_mixture_closure}. Then for any $R_1,R_2\in\mathcal R_\Pi$ and $\alpha\in[0,1]$, there exist $R_3\in\mathcal R_\Pi$ and $c:\mathcal S\rightarrow\mathbb R$ such that $\alpha R_1+(1-\alpha)R_2=R_3+c(s_1)$. Consequently, for any $\mathbb P\in\mathcal P$ and dataset $D$, $\phi_D(\mathbb P,R_3)\leq \alpha\phi_D(\mathbb P,R_1)+(1-\alpha)\phi_D(\mathbb P,R_2)$.
\end{lemma}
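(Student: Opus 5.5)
Write $R_1=R_{\pi_1}$, $R_2=R_{\pi_2}$ with $\pi_1,\pi_2\in\Pi$, and apply \cref{assum:geometric_mixture_closure} to obtain $\pi_3\in\Pi$ with $\mathbb P^{\pi_3}(\tau|s_1)=\mathbb P^{\pi_1}(\tau|s_1)^\alpha\mathbb P^{\pi_2}(\tau|s_1)^{1-\alpha}/C(s_1)$. The normalizer is
\begin{equation*}
C(s_1):=\int\mathbb P^{\pi_1}(\tau|s_1)^\alpha\mathbb P^{\pi_2}(\tau|s_1)^{1-\alpha}d\tau.
\end{equation*}
It is positive because both distributions are supported on $\pi_{\textnormal{ref}}$ and have bounded log-ratios to it, so each $\mathbb P^{\pi_i}(\cdot|s_1)$ has the same support as $\mathbb P^{\pi_{\textnormal{ref}}}(\cdot|s_1)$. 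By Hölder it satisfies $C(s_1)\le 1$.

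Taking $\beta\log$ of the ratio to $\mathbb P^{\pi_{\textnormal{ref}}}$ on that common support gives
\begin{equation*}
R_{\pi_3}(\tau)=\alpha R_{\pi_1}(\tau)+(1-\alpha)R_{\pi_2}(\tau)-\beta\log C(s_1).
\end{equation*}
So we may set $R_3:=R_{\pi_3}\in\mathcal R_\Pi$ and $c(s_1):=\beta\log C(s_1)$, which depends only on $s_1$. The function $c$ is finite because the $\|R_\pi\|_\infty$ are uniformly bounded. Off the support, all functions are irrelevant to every objective. I expect the only delicate point to be this support and finiteness bookkeeping. The main work is the convexity step below, which is easy once it is organized correctly.

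For the inequality, \cref{lemma:s1_shift_invariance} gives $\phi_D(\mathbb P,R_3)=\phi_D(\mathbb P,R_3+c)=\phi_D(\mathbb P,\alpha R_1+(1-\alpha)R_2)$. This step only needs $\phi_D$ to be well defined on additive $R$, which holds since a convex combination of additive rewards is additive. It then suffices to show that $R\mapsto\phi_D(\mathbb P,R)$ is convex on additive rewards. The term $\mathcal L_D(\mathbb P,R)$ is affine in $R$: it consists of an expectation of $R$ under $\mathbb P$, minus an empirical average of $R$, plus KL terms independent of $R$. The term $\mathcal E_D(R)$ is a nonnegative combination of squares of quantities affine in $R$, namely $(R^*-R)(\tau^{ij})$ minus its group mean, and is therefore convex. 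Since $\eta\ge0$, $\phi_D(\mathbb P,\cdot)$ is convex. Applying convexity to $\alpha R_1+(1-\alpha)R_2$ yields the stated inequality.
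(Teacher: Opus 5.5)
Your proposal is correct and follows the paper's proof essentially step for step. It uses the same geometric-mixture identity $\alpha R_{\pi_1}+(1-\alpha)R_{\pi_2}=R_{\pi_3}+\beta\log C(s_1)$, then \cref{lemma:s1_shift_invariance} to absorb the $s_1$-dependent shift, then convexity of $\phi_D(\mathbb P,\cdot)$ as an affine term plus a convex quadratic. Your extra bookkeeping on the positivity and finiteness of the normalizer, via the common support implied by bounded log-ratios, is left implicit in the paper but is sound.
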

\begin{proof}
    Write $R_1=R_{\pi_1}$, $R_2=R_{\pi_2}$, and let $\pi_3\in\Pi$ be given by \cref{assum:geometric_mixture_closure} with normalizing constant $Z_\alpha(s_1)$, i.e., $\mathbb P^{\pi_3}(\tau|s_1)=\mathbb P^{\pi_1}(\tau|s_1)^{\alpha}\mathbb P^{\pi_2}(\tau|s_1)^{1-\alpha}/Z_\alpha(s_1)$. Then
    \begin{equation*}
        \alpha R_{\pi_1}(\tau)+(1-\alpha)R_{\pi_2}(\tau)=\beta\log\frac{\mathbb P^{\pi_1}(\tau|s_1)^\alpha\mathbb P^{\pi_2}(\tau|s_1)^{1-\alpha}}{\mathbb P^{\pi_{\textnormal{ref}}}(\tau|s_1)}=R_{\pi_3}(\tau)+\beta\log Z_\alpha(s_1).
    \end{equation*}
    For the second claim, $\phi_D(\mathbb P,R)$ is convex in $R$, since $\mathcal L_D(\mathbb P,R)$ is linear in $R$ and $\mathcal E_D(R)$ is a sample variance of $R^*-R$ and thus a convex quadratic in $R$. Therefore, by \cref{lemma:s1_shift_invariance}, $\phi_D(\mathbb P,R_3)=\phi_D(\mathbb P,\alpha R_1+(1-\alpha)R_2)\leq\alpha\phi_D(\mathbb P,R_1)+(1-\alpha)\phi_D(\mathbb P,R_2)$.
\end{proof}

\begin{lemma}[Minimax theorem {\citep{fan1953minimax}}]\label{lem:minimax_fan}
Let $\mathcal X$ be a nonempty compact topological space and $\mathcal Y$ a nonempty set (not necessarily topologized). Let $g:\mathcal X\times\mathcal Y\rightarrow \mathbb R$ be upper semicontinuous on $\mathcal X$ for every $y\in\mathcal Y$. Suppose that $g$ is concave-like on $\mathcal X$ and convex-like on $\mathcal Y$, i.e., for any $x_1,x_2\in\mathcal X$ and $\alpha\in[0,1]$, there exists $x_3\in\mathcal X$ such that $g(x_3,\cdot)\geq\alpha g(x_1,\cdot)+(1-\alpha)g(x_2,\cdot)$ on $\mathcal Y$, and for any $y_1,y_2\in\mathcal Y$ and $\alpha\in[0,1]$, there exists $y_3\in\mathcal Y$ such that $g(\cdot,y_3)\leq\alpha g(\cdot,y_1)+(1-\alpha)g(\cdot,y_2)$ on $\mathcal X$. Then
\begin{equation*}
    \max_{x\in\mathcal X}\inf_{y\in\mathcal Y} g(x,y)
    =
    \inf_{y\in\mathcal Y}\max_{x\in\mathcal X} g(x,y).
\end{equation*}
\end{lemma}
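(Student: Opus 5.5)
We prove the two inequalities separately. The direction $\max_{x}\inf_{y} g\le \inf_{y}\max_{x} g$ is elementary. First we check that both maxima exist. For each $y$, $g(\cdot,y)$ is upper semicontinuous on the compact space $\mathcal X$, so $\max_x g(x,y)$ is attained. The map $x\mapsto \inf_y g(x,y)$ is an infimum of upper semicontinuous functions, hence upper semicontinuous, so its maximum over $\mathcal X$ is also attained. Then for every $x'$ and $y'$ we have $\inf_y g(x',y)\le g(x',y')\le \max_x g(x,y')$, and the inequality follows.

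For the reverse direction, fix any real $c<\inf_y\max_x g(x,y)$; if no such $c$ exists the claim is trivial. It suffices to find a single $x$ with $g(x,y)\ge c$ for all $y$. For each $y$ define $F_y:=\{x\in\mathcal X: g(x,y)\ge c\}$. Each $F_y$ is closed by upper semicontinuity. By compactness of $\mathcal X$, $\bigcap_y F_y\neq\emptyset$ as soon as the family has the finite intersection property. So the whole proof reduces to the finite claim: for any $y_1,\dots,y_n\in\mathcal Y$ there is an $x$ with $\min_i g(x,y_i)\ge c$.

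The finite claim is the main step, and we prove it by contradiction using a separating hyperplane in $\mathbb R^n$. Suppose it fails. The function $x\mapsto\min_i g(x,y_i)$ is upper semicontinuous, so it attains its maximum on $\mathcal X$, and this maximum equals $c-\varepsilon$ for some $\varepsilon>0$. Define
\[
K:=\{z\in\mathbb R^n:\exists x\in\mathcal X,\ z_i\le g(x,y_i)-c\ \ \forall i\}.
\]
Concave-likeness on $\mathcal X$ makes $K$ convex. Indeed, if $z\in K$ via $x_1$ and $z'\in K$ via $x_2$, the point $x_3$ supplied by concave-likeness dominates the convex combination in every coordinate. By the choice of $\varepsilon$, $K$ is disjoint from the convex open set $\{z: z_i>-\varepsilon\ \forall i\}$. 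Separation gives a nonzero $\lambda$ that is bounded on $K$, and because $K$ is unbounded below in every coordinate, $\lambda\ge0$. Normalizing so that $\sum_i\lambda_i=1$, we obtain
\[
\sum_i\lambda_i\big(g(x,y_i)-c\big)\le-\varepsilon\quad\text{for all }x\in\mathcal X.
\]
Next, apply convex-likeness on $\mathcal Y$ to produce a single $y^\circ$ with $g(\cdot,y^\circ)\le\sum_i\lambda_i g(\cdot,y_i)$. This extends from two points to $n$ points by induction, writing the $n$-point combination as a two-point combination of $y_n$ and a point that handles the first $n-1$ with the renormalized weights. Then $\max_x g(x,y^\circ)\le c-\varepsilon<c$, which contradicts the choice of $c$.

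We expect this separation step to be the main obstacle. The delicate points are to use compactness and upper semicontinuity to obtain the strict gap $\varepsilon$, and to verify that the separating functional is nonnegative. Once the finite claim holds, the finite intersection property follows, so there is an $x$ with $\inf_y g(x,y)\ge c$. Letting $c\uparrow\inf_y\max_x g$ completes the proof.
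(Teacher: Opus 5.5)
The paper does not prove this lemma. It quotes the result from \citet{fan1953minimax} and uses it as a black box in \cref{prop:maxmin_closed_form}, so there is no in-paper argument to compare against.

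Your proof is correct and self-contained, and it follows essentially Fan's classical route:
\begin{itemize}
\item The superlevel sets $\{x: g(x,y)\ge c\}$ are closed, so compactness and the finite intersection property reduce the problem to finitely many $y_1,\dots,y_n$.
\item Concave-likeness makes the downward-closed set $K\subset\mathbb R^n$ convex, and you separate it from the shifted open orthant $\{z: z_i>-\varepsilon\ \forall i\}$.
\item Downward closedness of $K$ forces the weights $\lambda$ to be nonnegative.
\item The $n$-point extension of convex-likeness, proved by induction, collapses $\sum_i\lambda_i g(\cdot,y_i)$ to a single $y^\circ$.
\end{itemize}

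The delicate steps are all handled correctly: the strict gap $\varepsilon$ (from upper semicontinuity plus compactness), $\lambda\ge 0$, and the renormalized induction. One phrase should be tightened. Separation gives $\sup_{z\in K}\lambda\cdot z\le\inf_{w\in U}\lambda\cdot w$, and after normalizing $\sum_i\lambda_i=1$ the right-hand side equals $-\varepsilon$; "bounded on $K$" should read "bounded above on $K$", and this inequality is exactly the one you then use.

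Two minor observations. First, your argument never uses the Hausdorff property that Fan's original statement assumes, so it covers the lemma exactly as the paper states it (an arbitrary compact topological space). Second, you may take $c=\inf_y\max_x g(x,y)$ directly instead of letting $c\uparrow$ it, since the contradiction $\max_x g(x,y^\circ)\le c-\varepsilon<c$ goes through unchanged.
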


The following proposition shows that minimizing the SFT+VM objective over $\Pi$ is equivalent to solving the max-min game with the reward class $\mathcal R_\Pi$.
\begin{proposition}\label{prop:maxmin_closed_form}
    Let $\mathcal A$ be finite, and let $\Pi$ be supported on $\pi_{\textnormal{ref}}$ with $\sup_{\pi\in\Pi}\|R_\pi\|_\infty<\infty$, and satisfy \cref{assum:geometric_mixture_closure}. Given $\eta,\beta>0$, if $\hat \pi\in\argmin_{\pi\in \Pi}\mathcal L_{D,\textnormal{SFT+VM}}(\pi)$, then
    \begin{equation*}
        \mathbb P^{\hat \pi}\in \argmax_{\mathbb P\in\mathcal P}\inf_{R\in \mathcal R_\Pi}\phi_D(\mathbb P, R).
    \end{equation*}
\end{proposition}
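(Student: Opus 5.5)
Fix the dataset and write $\mathcal X$ for the set of conditional trajectory distributions restricted to the observed initial states $\{s_1^i\}$ with support in that of $\mathbb P^{\pi_{\textnormal{ref}}}(\cdot|s_1^i)$. Since $\mathcal A$ is finite, $\mathcal X$ is a finite product of simplices, hence compact. The payoff $\phi_D$ depends on $\mathbb P$ only through these coordinates. In those coordinates, $\phi_D(\cdot,R)$ is continuous and concave: it is linear in $\mathbb P$, minus $\beta$ times a KL term, which is continuous on the simplex with the convention $0\log 0=0$.

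The plan is to compute the min-max value in closed form, identify its minimizers with $\hat\pi$, and then swap to max-min using \cref{lem:minimax_fan}. The compactness, upper semicontinuity and concavity just noted cover the $\mathcal X$ side. The convex-like property on $\mathcal Y=\mathcal R_\Pi$ is exactly \cref{lemma:geometric_mixture_convexity}. Hence
\[
\max_{\mathbb P}\inf_{R\in\mathcal R_\Pi}\phi_D=\inf_{R\in\mathcal R_\Pi}\max_{\mathbb P}\phi_D.
\]

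\textbf{Step 1: closed form of the inner max.} Fix $R=R_\pi$. By \cref{lemma:KL_optimal_trajectory_sol}, the inner maximum is attained at $\mathbb P^{\pi_R}=\mathbb P^\pi$, using \cref{lemma:policy_induced_reward}, and the maximal value is
\[
\tfrac1n\sum_i\Big[\beta\log Z_{R_\pi}(s_1^i)-\tfrac1m\sum_j R_\pi(\tau^{ij})+\beta\,\textnormal{KL}(\mathbb P^\mu\|\mathbb P^{\pi_{\textnormal{ref}}})\Big]+\eta\,\mathcal E_D(R_\pi).
\]
Here $Z_{R_\pi}=1$, and
\[
-\tfrac1{nm}\sum_{i,j}R_\pi(\tau^{ij})=\beta\cdot\tfrac1{nm}\sum_{i,j}\Big[-\sum_h\log\pi_h(a_h^{ij}|s_h^{ij})\Big]+\text{const},
\]
where the constant collects the $\pi_{\textnormal{ref}}$ log-terms. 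For the variance term, $R^*-R_\pi=R^*_{\pi,\beta}$, so $\mathcal E_D(R_\pi)=\mathcal E_{D,\textnormal{VM}}(\pi)$. Therefore
\[
\max_{\mathbb P}\phi_D(\mathbb P,R_\pi)=\mathcal L_{D,\textnormal{SFT+VM}}(\pi)+C_D,
\]
with $C_D$ independent of $\pi$. It follows that $\inf_R\max_{\mathbb P}\phi_D=\min_\pi\mathcal L_{D,\textnormal{SFT+VM}}+C_D$, and this infimum is attained at $R_{\hat\pi}$.

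\textbf{Step 2: $\mathbb P^{\hat\pi}$ attains the max-min.} Let $v$ be the common value from the minimax equality. For any $R\in\mathcal R_\Pi$, I want to show $\phi_D(\mathbb P^{\hat\pi},R)\ge v$. Suppose instead that some $R_1$ gives $\phi_D(\mathbb P^{\hat\pi},R_1)<v$. For $\alpha\in(0,1)$, \cref{lemma:geometric_mixture_convexity} gives $R_\alpha\in\mathcal R_\Pi$ with $R_\alpha\simeq\alpha R_1+(1-\alpha)R_{\hat\pi}$ up to an $s_1$-only shift. By \cref{lemma:s1_shift_invariance}, the map $g(\alpha):=\max_{\mathbb P}\phi_D(\mathbb P,\alpha R_1+(1-\alpha)R_{\hat\pi})$ is a well-defined function of the mixture, and it is at least $v$ because $R_\alpha\in\mathcal R_\Pi$.

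\begin{itemize}
\item $g$ is convex, being a max of functions that are convex in $R$.
\item Its value at $\alpha=0$ is $v$, attained uniquely at $\mathbb P^{\hat\pi}$. Uniqueness holds by strict concavity in $\mathbb P$ from the KL term.
\item By Danskin's theorem, its right derivative at $0$ is the derivative of $\phi_D(\mathbb P^{\hat\pi},\cdot)$ along the segment from $R_{\hat\pi}$ toward $R_1$.
\end{itemize}

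Since $g\ge v=g(0)$ on $[0,1]$, that derivative is nonnegative. Convexity of $\phi_D(\mathbb P^{\hat\pi},\cdot)$ along the segment then gives $\phi_D(\mathbb P^{\hat\pi},R_1)\ge\phi_D(\mathbb P^{\hat\pi},R_{\hat\pi})=v$, a contradiction. Hence $\inf_R\phi_D(\mathbb P^{\hat\pi},R)\ge v=\max_{\mathbb P}\inf_R\phi_D$, which proves the claim.

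\textbf{Main obstacle.} Step 2 is the main obstacle. It converts a min-max minimizer into a max-min maximizer without assuming the infimum over $\mathcal R_\Pi$ is attained or that $\mathcal R_\Pi$ is convex. It therefore relies on uniqueness of the best response and on a Danskin argument on the finite-dimensional compact set $\mathcal X$; the finiteness of $\mathcal A$ is used here. If the derivative argument becomes delicate, my fallback is to argue directly that $(\mathbb P^{\hat\pi},R_{\hat\pi})$ is a saddle point. The fact that $\mathbb P^{\hat\pi}$ is a best response to $R_{\hat\pi}$ gives one side, and first-order optimality of $R_{\hat\pi}$ for the convex map $R\mapsto\max_{\mathbb P}\phi_D$ along mixtures gives the other.
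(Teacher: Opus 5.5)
Your proof is correct. Step 1 and your check of the hypotheses of \cref{lem:minimax_fan} match the paper. The difference lies in how you pass from the min-max solution $R_{\hat\pi}$ to a max-min solution.

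The paper first takes a max-min maximizer $\hat{\mathbb P}$, which exists because $\inf_{R\in\mathcal R_\Pi}\phi_D(\cdot,R)$ is upper semicontinuous on the compact set $\mathcal X$. It then uses the minimax equality to show that $(\hat{\mathbb P},R_{\hat\pi})$ is a saddle point, so $\hat{\mathbb P}$ is a best response to $R_{\hat\pi}$. Uniqueness of that best response (\cref{lemma:KL_optimal_trajectory_sol}) then forces $\hat{\mathbb P}=\mathbb P^{\hat\pi}$ on the observed initial states.

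You instead certify $\mathbb P^{\hat\pi}$ directly. The convex value function $g$ along shift-corrected geometric mixtures is minimized at $\alpha=0$. Danskin's formula at the unique best response transfers $g'(0^+)\ge 0$ to the map $\alpha\mapsto\phi_D(\mathbb P^{\hat\pi},\alpha R_1+(1-\alpha)R_{\hat\pi})$, and convexity gives $\phi_D(\mathbb P^{\hat\pi},R_1)\ge v$.

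Your last step needs only weak duality, $\max_{\mathbb P}\inf_R\phi_D\le\inf_R\max_{\mathbb P}\phi_D=v$. So your argument actually proves the minimax equality rather than using it, and the appeal to \cref{lem:minimax_fan} can be dropped. The contradiction framing is also unnecessary, since the argument gives $\phi_D(\mathbb P^{\hat\pi},R_1)\ge v$ for every $R_1$ directly.

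Each route has a cost. Yours needs the regularity for Danskin: joint continuity of $\partial_\alpha\phi_D$ on the compact $\mathcal X$. That holds here because $\phi_D$ is quadratic in $R$ and the KL term does not depend on $\alpha$. Yours also uses uniqueness more essentially, since without it Danskin only gives a maximum of directional derivatives over the set of best responses. The paper's route is shorter and more modular, using uniqueness only to identify the saddle-point component once Fan's theorem is available. Yours is self-contained and avoids the abstract minimax theorem altogether.
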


\begin{proof}
    We first show that $R_{\hat\pi}$ minimizes $\max_{\mathbb P\in\mathcal P}\phi_D(\mathbb P,R)$ over $R\in\mathcal R_\Pi$. By \cref{lemma:KL_optimal_trajectory_sol}, for any additive $R$ the maximizer is $\mathbb P^{\pi_R}$ and $R(\tau)-\beta\log\frac{\mathbb P^{\pi_R}(\tau|s_1)}{\mathbb P^{\pi_{\textnormal{ref}}}(\tau|s_1)}=\beta\log Z_R(s_1)$, hence
    \begin{equation}\label{eq:maxmin_sft_vm_identity}
    \begin{aligned}
        \max_{\mathbb P\in\mathcal P}\phi_D(\mathbb P,R)&=\phi_D(\mathbb P^{\pi_R},R)\\
        &=\frac{1}{n}\sum_{i=1}^n\left[\beta\log Z_R(s_1^i)-\frac{1}{m}\sum_{j=1}^mR(\tau^{ij})\right]+\frac{\beta}{n}\sum_{i=1}^n\textrm{KL}\left(\mathbb P^\mu(\cdot|s_1^i)\|\mathbb P^{\pi_{\textrm{ref}}}(\cdot|s_1^i)\right)\\
        &\quad+\eta\cdot{\mathcal E}_D(R)\\
        &=\frac{\beta}{nm}\sum_{i=1}^n\sum_{j=1}^m\left[-\log \frac{\mathbb P^{\pi_R}(\tau^{ij}|s_1^i)}{\mathbb P^{\pi_{\text{ref}}}(\tau^{ij}|s_1^i)}\right]+\eta\cdot {\mathcal E}_D\left(\beta\log \frac{\mathbb P^{\pi_R}(\cdot|\cdot)}{\mathbb P^{\pi_{\text{ref}}}(\cdot|\cdot)}\right)+\textrm{const}\\
        &=\beta\cdot \frac{1}{nm}\sum_{i=1}^n\sum_{j=1}^m\left[-\sum_{h=1}^H\log \pi_{R,h}(a_h^{ij}|s_h^{ij})\right]+\eta\cdot\mathcal E_{D,\textrm{VM}}(\pi_R)+\textrm{const}\\
        &=\mathcal L_{D,\textrm{SFT+VM}}(\pi_R)+\textrm{const},
    \end{aligned}
    \end{equation}
    where the first equation is because $\mathbb E_{\mathbb P^{\pi_R}(\cdot|s_1^i)}[R(\tau)]-\beta\textrm{KL}(\mathbb P^{\pi_R}(\cdot|s_1^i)\|\mathbb P^{\pi_{\textrm{ref}}}(\cdot|s_1^i))=\beta\log Z_R(s_1^i)$; the second equation is by \cref{lemma:s1_shift_invariance} applied to the shift $\beta\log Z_R(s_1)$, with const collecting $\frac{\beta}{n}\sum_i\textrm{KL}(\mathbb P^\mu(\cdot|s_1^i)\|\mathbb P^{\pi_{\textrm{ref}}}(\cdot|s_1^i))+\frac{\beta}{nm}\sum_{i,j}\log\mathbb P^{\pi_{\textrm{ref}}}(\tau^{ij}|s_1^i)$, which does not depend on $R$; the third equation is because $R^*(\tau)-\beta\log\frac{\mathbb P^{\pi_R}(\tau|s_1)}{\mathbb P^{\pi_{\textrm{ref}}}(\tau|s_1)}=R^*_{\pi_R,\beta}(\tau)$ and the transition is deterministic.

    Now take $R=R_\pi$ for $\pi\in\Pi$ in \cref{eq:maxmin_sft_vm_identity}. Since $\pi_{R_\pi}=\pi$ by \cref{lemma:policy_induced_reward}, we get $\max_{\mathbb P\in\mathcal P}\phi_D(\mathbb P,R_\pi)=\mathcal L_{D,\textnormal{SFT+VM}}(\pi)+\textnormal{const}$ for every $\pi\in\Pi$. Taking $\pi=\hat\pi$ and using the optimality of $\hat\pi$ over $\Pi$,
    \begin{equation*}
        R_{\hat\pi}\in\argmin_{R\in\mathcal R_\Pi}\max_{\mathbb P\in\mathcal P}\phi_D(\mathbb P,R),
    \end{equation*}
    i.e., $R_{\hat\pi}$ solves the min-max problem. It remains to convert this into a max-min solution.

    Next we verify the conditions of \cref{lem:minimax_fan} with $\mathcal X=\mathcal P$, $\mathcal Y=\mathcal R_\Pi$, and $g=\phi_D$. Since $\phi_D$ depends on $\mathbb P$ only through $\{\mathbb P(\cdot|s_1^i)\}_{i\in[n]}$, and each $\mathbb P(\cdot|s_1^i)$ is a distribution over the finitely many trajectories from $s_1^i$ ($\mathcal A$ and $H$ are finite and transitions are deterministic), we may take $\mathcal X$ to be the set of such $n$-tuples, which is compact. For each $R$, $\phi_D(\cdot,R)$ is continuous and concave on this set, being linear in $\mathbb P$ minus $\beta$ times a KL divergence to $\mathbb P^{\pi_{\textnormal{ref}}}$, and $\phi_D$ is convex-like on $\mathcal R_\Pi$ by \cref{lemma:geometric_mixture_convexity}. Hence \cref{lem:minimax_fan} gives
    \begin{equation*}
        \max_{\mathbb P\in\mathcal P}\inf_{R\in\mathcal R_\Pi}\phi_D(\mathbb P,R)=\inf_{R\in\mathcal R_\Pi}\max_{\mathbb P\in\mathcal P}\phi_D(\mathbb P,R),
    \end{equation*}
    and the maximum on the left is attained by some $\hat{\mathbb P}\in\mathcal P$, since $\inf_{R\in\mathcal R_\Pi}\phi_D(\cdot,R)$ is upper semicontinuous on a compact set. Then
    \begin{equation*}
    \begin{aligned}
        0&=\left(\max_{\mathbb P\in\mathcal P}\phi_D(\mathbb P,R_{\hat\pi})-\inf_{R\in\mathcal R_\Pi}\max_{\mathbb P\in\mathcal P}\phi_D(\mathbb P,R)\right)\\
        &\quad+\left(\max_{\mathbb P\in\mathcal P}\inf_{R\in\mathcal R_\Pi}\phi_D(\mathbb P,R)-\inf_{R\in\mathcal R_\Pi}\phi_D(\hat{\mathbb P},R)\right)\\
        &=\max_{\mathbb P\in\mathcal P}\phi_D(\mathbb P,R_{\hat\pi})-\inf_{R\in\mathcal R_\Pi}\phi_D(\hat{\mathbb P},R)\\
        &=\left(\max_{\mathbb P\in\mathcal P}\phi_D(\mathbb P,R_{\hat\pi})-\phi_D(\hat{\mathbb P},R_{\hat\pi})\right)+\left(\phi_D(\hat{\mathbb P},R_{\hat\pi})-\inf_{R\in\mathcal R_\Pi}\phi_D(\hat{\mathbb P},R)\right),
    \end{aligned}
    \end{equation*}
    where in the first equation both brackets vanish, by the min-max optimality of $R_{\hat\pi}$ and the max-min optimality of $\hat{\mathbb P}$ respectively, and the second equation is by the minimax equality above. Both terms in the last line are nonnegative and hence zero, so $\hat{\mathbb P}$ maximizes $\phi_D(\cdot,R_{\hat\pi})$. By \cref{lemma:KL_optimal_trajectory_sol,lemma:policy_induced_reward}, this maximizer equals $\mathbb P^{\pi_{R_{\hat\pi}}}=\mathbb P^{\hat\pi}$ on $\{s_1^i\}_{i\in[n]}$, and since $\phi_D$ depends on $\mathbb P$ only through these initial states, $\mathbb P^{\hat\pi}$ also attains $\max_{\mathbb P\in\mathcal P}\inf_{R\in\mathcal R_\Pi}\phi_D(\mathbb P,R)$.
\end{proof}

Now we connect AsymRE to SFT+VM by computing the gradient. Writing $\bar R^*_{\pi,\beta}(s_1^i):=\frac{1}{m}\sum_{j=1}^m R^*_{\pi,\beta}(\tau^{ij})$ and using $\nabla R^*_{\pi,\beta}(\tau^{ij})=-\beta\sum_{h=1}^H\nabla\log\pi_h(a_h^{ij}|s_h^{ij})$, we have
\begin{equation*}
\begin{aligned}
   \nabla \mathcal E_{D,\textrm{VM}}(\pi)&=\frac{2}{n(m-1)}\sum_{i=1}^n\sum_{j=1}^m \left(R^*_{\pi,\beta}(\tau^{ij})-\bar R^*_{\pi,\beta}(s_1^{i})\right)\nabla R^*_{\pi,\beta}(\tau^{ij})\\
   &=-\frac{2\beta}{n(m-1)}\sum_{i=1}^n\sum_{j=1}^m\left(R^*_{\pi,\beta}(\tau^{ij})-\bar R^*_{\pi,\beta}(s_1^i)\right)\sum_{h=1}^H\nabla\log \pi_h(a_h^{ij}|s_h^{ij}),\\
    \nabla \mathcal L_{D,\textrm{SFT+VM}}(\pi)&=-\beta\cdot\frac{1}{nm}\sum_{i=1}^n\sum_{j=1}^m\sum_{h=1}^H \nabla\log\pi_h(a_h^{ij}|s_h^{ij})+\eta\nabla \mathcal E_{D,\textrm{VM}}(\pi)\\
    &=-\frac{2\eta\beta}{n}\sum_{i=1}^n\frac{1}{m-1}\sum_{j=1}^m\left(R^*_{\pi,\beta}(\tau^{ij})-\bar R^*_{\pi,\beta}(s_1^i)+\frac{m-1}{2m\eta}\right)\\
    &\qquad\times\sum_{h=1}^H\nabla\log \pi_h(a_{h}^{ij}|s_h^{ij}),
\end{aligned}
\end{equation*}
where the first equation is because $\sum_{j=1}^m(R^*_{\pi,\beta}(\tau^{ij})-\bar R^*_{\pi,\beta}(s_1^i))=0$, so the term with $\nabla\bar R^*_{\pi,\beta}(s_1^i)$ vanishes. Comparing the first two lines with the definition of $\nabla\mathcal E_{D,\textnormal{GRPG}}$ gives
\begin{equation}\label{eq:propto_gra_GRPG_VM}
    \nabla \mathcal E_{D,\textrm{VM}}(\pi)=2\beta\nabla\mathcal E_{D,\textrm{GRPG}}(\pi).
\end{equation}
Notice that if $\delta=\frac{m-1}{2m\eta}=\Theta(\frac{1}{\sqrt n})$, then it holds,
\begin{equation}\label{eq:propto_gra_AsymRE_SFT+VM}
    \nabla   \mathcal L_{D,\textrm{SFT+VM}}(\pi)=2\eta\beta\nabla\mathcal E_{D,\textrm{AsymRE}}(\pi).
\end{equation}

Now we can directly prove \cref{thm:bound_single_policy_coverage}.

\paragraph{Proof of \cref{thm:bound_single_policy_coverage}}

\begin{proof}
    Condition (ii) reads $\sup_{\pi\in\Pi}\|R_\pi\|_\infty\leq B$, so by \cref{prop:maxmin_closed_form}, $\mathbb P^{\hat\pi}\in\argmax_{\mathbb P\in\mathcal P}\inf_{R\in\mathcal R_\Pi}\phi_D(\mathbb P,R)$. We now apply \cref{thm:guarantee_from_max_min_game} with $\mathcal R=\mathcal R_\Pi$, whose assumption (1) is again condition (ii). By \cref{eq:trajectory_reward_KL_optimality}, $R_{\pi^*}(\tau)=R^*(\tau)-V^*_{\beta,1}(s_1)$, so condition (i) gives $\bar R=R_{\pi^*}\in\mathcal R_\Pi$. Finally, $\hat\pi\in\Pi=\Pi_{\mathcal R_\Pi}$ and $N_\infty(\mathcal R_\Pi,\beta/n)=N(\Pi,1/n)$ by \cref{lemma:policy_induced_reward}. The claim follows.
\end{proof}

\subsection{Discussion of \texorpdfstring{\cref{assum:geometric_mixture_closure}}{Assumption 2}}\label{sec:discussion_assumption2}
\cref{assum:geometric_mixture_closure} is what allows the min-max problem solved by AsymRE to be swapped into the max-min game in \cref{prop:maxmin_closed_form}; it is the policy-side counterpart of the convexity of the reward class in \citet{liu2024provably}. We discuss when it holds and what it costs.

\paragraph{Reward-induced policy classes}
If one starts from a convex class $\mathcal R$ of additive rewards, then $\Pi_{\mathcal R}$ satisfies the assumption automatically: by \cref{lemma:KL_optimal_trajectory_sol}, $\mathbb P^{\pi_R}(\tau|s_1)\propto\mathbb P^{\pi_{\textnormal{ref}}}(\tau|s_1)\exp(R(\tau)/\beta)$, so the sequence-level geometric mixture of $\mathbb P^{\pi_{R_1}}$ and $\mathbb P^{\pi_{R_2}}$ is $\mathbb P^{\pi_{\bar R}}$ with $\bar R=\alpha R_1+(1-\alpha)R_2\in\mathcal R$. However, $\Pi_{\mathcal R}$ is not a tractable policy class: evaluating a single $\pi_R$ requires the backward recursion in \cref{eq:reward_induced_policy}, i.e., computing $V_{R,h}(s)$ at every state reachable at step $h$. In the autoregressive setting there are $|\mathcal A|^{h-1}$ such states for each $s_1$, so the cost is of order $|\mathcal A|^H$, the same as summing over all trajectories to compute $Z_R(s_1)$.

\paragraph{Token-level parameterizations}
Starting instead from the policy side, for a general parametric class with explicit per-step policies $\pi_{\theta,h}(a|s)$ and $H>1$, the assumption may not hold. When $H=1$ it reduces to closure under per-step geometric mixtures, which holds for any exponential family with a convex parameter set, the setting of \citet{liu2024provably}. We do not know how to remove the assumption within the present framework, and closing this gap is left as an open problem.

\subsection{Proof of \texorpdfstring{\cref{prop:equi_between_zhaoran_and_maxmin}}{Proposition 4}}

\begin{lemma}\label{lemma:pointwise_traj_to_value}
    For fixed $\pi, f$, it holds,
    \begin{equation*}
        \mathcal L_\mu(\pi, R^\pi_f)=\tilde{\mathcal L}_\mu(\pi,f),\ {\mathcal E}_\mu(R^\pi_f)=\tilde{\mathcal E}_\mu(\pi, f).
    \end{equation*}
\end{lemma}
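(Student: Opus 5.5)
The plan is to verify both identities by a single telescoping computation that rewrites the induced reward $R^\pi_f$ along a trajectory in terms of $f$ and the soft values $V^\pi_{f,h}$. By definition $r^\pi_{f,h}(s_h,a_h)=f_h(s_h,a_h)-V^\pi_{f,h+1}(s_{h+1})$ with $s_{h+1}=T_h(s_h,a_h)$. Along any trajectory, I would write
\begin{equation*}
R^\pi_f(\tau)=\sum_{h=1}^H\big(f_h(s_h,a_h)-V^\pi_{f,h+1}(s_{h+1})\big).
\end{equation*}
Both claims then follow by comparing this expression with the two objects on the value side.

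For the variance identity, I would note that $(f_h-\mathcal T^\pi_{\beta,h}f)(s_h,a_h)=f_h(s_h,a_h)-r^*_h(s_h,a_h)-V^\pi_{f,h+1}(s_{h+1})=r^\pi_{f,h}(s_h,a_h)-r^*_h(s_h,a_h)$. Summing over $h$ gives $\sum_h(f_h-\mathcal T^\pi_{\beta,h}f)=R^\pi_f(\tau)-R^*(\tau)$ pathwise. The conditional variance given $s_1$ is invariant under sign flip, so $\mathcal E_\mu(R^\pi_f)=\tilde{\mathcal E}_\mu(\pi,f)$ immediately.

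For the $\mathcal L$ identity, I would first compute $\mathbb E_{\mathbb P^\pi(\cdot|s_1)}[R^\pi_f(\tau)]-\beta\textnormal{KL}(\mathbb P^\pi(\cdot|s_1)\|\mathbb P^{\pi_{\textnormal{ref}}}(\cdot|s_1))$. Using the chain rule for the trajectory KL, this equals
\begin{equation*}
\mathbb E_\pi\Big[\sum_h\Big(f_h(s_h,a_h)-\beta\log\tfrac{\pi_h(a_h|s_h)}{\pi_{\textnormal{ref},h}(a_h|s_h)}-V^\pi_{f,h+1}(s_{h+1})\Big)\Big|s_1\Big].
\end{equation*}
Conditioning on $s_h$ turns the first two terms into $V^\pi_{f,h}(s_h)$, so the sum telescopes to $V^\pi_{f,1}(s_1)$.

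The same manipulation for $\mu$ leaves a residual, because actions are drawn from $\mu$ rather than $\pi$. Here I would handle the terms as follows:
\begin{align*}
&\mathbb E_\mu[R^\pi_f\mid s_1]-\beta\textnormal{KL}(\mathbb P^\mu\|\mathbb P^{\pi_{\textnormal{ref}}})\\
&\quad=\mathbb E_\mu\Big[\sum_h\big(V^\mu_{f,h}(s_h)-V^\pi_{f,h+1}(s_{h+1})\big)\Big|s_1\Big]\\
&\quad=V^\mu_{f,1}(s_1)+\sum_{h\ge2}\mathbb E_\mu\big[V^\mu_{f,h}(s_h)-V^\pi_{f,h}(s_h)\mid s_1\big].
\end{align*}
Combining the two parts and averaging over $s_1\sim\rho$, I would obtain
\begin{equation*}
\mathcal L_\mu(\pi,R^\pi_f)=\mathbb E_\rho\big[V^\pi_{f,1}-V^\mu_{f,1}\big]+\sum_{h\ge2}\mathbb E_{d^\mu_h}\big[V^\pi_{f,h}-V^\mu_{f,h}\big]=\sum_{h=1}^H\mathbb E_{s_h\sim d^\mu_h}\big[V^\pi_{f,h}(s_h)-V^\mu_{f,h}(s_h)\big].
\end{equation*}
This is exactly $\tilde{\mathcal L}_\mu(\pi,f)$.

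The only delicate step is this bookkeeping of the $\mu$ part, so that the mismatch terms at $h\ge2$ align with the state marginals $d^\mu_h$. I would also check that the KL terms are finite, which holds since $\pi$ and $\mu$ are supported on $\pi_{\textnormal{ref}}$.
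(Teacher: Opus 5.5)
Your proposal is correct and follows essentially the same route as the paper. You telescope the $\pi$-part to $\mathbb E_\rho[V^\pi_{f,1}(s_1)]$, and you rewrite the $\mu$-part as $\sum_h\big(V^\mu_{f,h}(s_h)-V^\pi_{f,h+1}(s_{h+1})\big)$ and regroup. You split off $V^\mu_{f,1}$ and sum over $h\ge 2$, whereas the paper splits off $V^\pi_{f,1}$ and sums over all $h$; this is the same bookkeeping. The variance identity is the direct substitution $\sum_h(f_h-\mathcal T^\pi_{\beta,h}f)=R^\pi_f-R^*$, which the paper leaves as ``just plug in.''
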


\begin{proof}
    The second equation is straightforward, just plug in. To prove the first equation, notice that,
    \begin{align*}
        &\mathbb E_{s_1\sim \rho}\left[\mathbb E_{\tau\sim \mathbb P^\pi(\cdot|s_1)}[R^\pi_f(\tau)]-\beta\textrm{KL}(\mathbb P^\pi(\cdot|s_1)\|\mathbb P^{\pi_{\textrm{ref}}}(\cdot|s_1))\right]\\
        &=\mathbb E_{s_1\sim \rho, \tau\sim \mathbb P^\pi(\cdot|s_1)}\left[\sum_{h=1}^H\left(f_h(s_h,a_h)-\beta\log\frac{\pi_h(a_h|s_h)}{\pi_{\textrm{ref},h}(a_h|s_h)}-V^\pi_{f,h+1}(s_{h+1})\right)\right]\\
        &=\mathbb E_{\pi}\left[\sum_{h=1}^H \left(V^\pi_{f,h}(s_h)-V^\pi_{f,h+1}(s_{h+1})\right)\right]=\mathbb E_{s_1\sim\rho}[V^\pi_{f,1}(s_1)],
    \end{align*}
    where in the second equation we use the definition of $V^{\pi}_{f,h}$. Similarly, we have,
    \begin{align*}
        &\mathbb E_{s_1\sim \rho}\left[\mathbb E_{\tau\sim \mathbb P^\mu(\cdot|s_1)}[R^\pi_f(\tau)]-\beta\textrm{KL}(\mathbb P^\mu(\cdot|s_1)\|\mathbb P^{\pi_{\textrm{ref}}}(\cdot|s_1))\right]\\
        &=\mathbb E_\mu\left[\sum_{h=1}^H\left(V^\mu_{f,h}(s_h)-V^\pi_{f,h+1}(s_{h+1})\right)\right]\\
        &=\mathbb E_{s_1\sim\rho}[V_{f,1}^\pi(s_1)]+\mathbb E_\mu\left[\sum_{h=1}^H\left(V^\mu_{f,h}(s_h)-V^\pi_{f,h}(s_h)\right)\right].
    \end{align*}
    Then,
    \begin{align*}
        \mathcal L_\mu(\pi, R^\pi_f)&=\mathbb E_{s_1\sim\rho}[V^\pi_{f,1}(s_1)]-\left(\mathbb E_{s_1\sim\rho}[V_{f,1}^\pi(s_1)]+\mathbb E_\mu\left[\sum_{h=1}^H\left(V^\mu_{f,h}(s_h)-V^\pi_{f,h}(s_h)\right)\right]\right)\\
        &=\mathbb E_\mu\left[\sum_{h=1}^H\left(V^\pi_{f,h}(s_h)-V^\mu_{f,h}(s_h)\right)\right]=\tilde{\mathcal L}_\mu(\pi,f).
    \end{align*}
\end{proof}

\paragraph{Proof of \texorpdfstring{\cref{prop:equi_between_zhaoran_and_maxmin}}{Proposition 4}}

\begin{proof}
 Fix any $\pi\in \Pi$. By assumption,
\begin{equation*}
    \mathcal R(\pi,\mathcal F)=\{R_f^\pi: f\in \mathcal F\}=\mathcal R.
\end{equation*}
Therefore,
\begin{equation*}
    \min_{R\in \mathcal R}\left(\mathcal L_\mu(\pi,R)+\eta {\mathcal E}_\mu(R)\right)
    =
    \min_{f\in \mathcal F}\left(\mathcal L_\mu(\pi,R_f^\pi)+\eta {\mathcal E}_\mu(R_f^\pi)\right).
\end{equation*}
By the construction of $R_f^\pi$ and \cref{lemma:pointwise_traj_to_value}, we have
\begin{equation*}
    \mathcal L_\mu(\pi,R_f^\pi)=\tilde{\mathcal L}_\mu(\pi,f),\ 
    {\mathcal E}_\mu(R_f^\pi)=\tilde{\mathcal E}_\mu(\pi,f).
\end{equation*}
Hence,
\begin{equation*}
    \min_{R\in \mathcal R}\left(\mathcal L_\mu(\pi,R)+\eta {\mathcal E}_\mu(R)\right)
    =
    \min_{f\in \mathcal F}\left(\tilde{\mathcal L}_\mu(\pi,f)+\eta \tilde{\mathcal E}_\mu(\pi,f)\right).
\end{equation*}
Since this equality holds for every $\pi\in \Pi$, the two outer maximization problems have the same objective function on $\Pi$. Therefore,
\begin{equation*}
    \argmax_{\pi\in \Pi}\min_{R\in\mathcal R} \left(\mathcal L_\mu(\pi, R)+\eta {\mathcal E}_\mu(R)\right)
    =
    \argmax_{\pi\in\Pi}\min_{f\in \mathcal F}\left(\tilde{\mathcal L}_\mu(\pi, f)+\eta\tilde{\mathcal E}_\mu(\pi,f)\right).
\end{equation*}
This completes the proof.
\end{proof}

\section{Experiments}
\label{app:synthetic}
We test the prediction that separates \cref{thm:bound_all_policy_coverage} from \cref{thm:bound_single_policy_coverage}: GRPG
($\delta=0$) needs all-policy coverage, whereas AsymRE ($\delta>0$) is
pessimistic and needs only single-policy coverage. All objectives, gradients,
and metrics are computed exactly.

\subsection{Setup}

\paragraph{Environment}
All experiments use the same MDP, a deterministic complete binary tree with
horizon $H=6$. The states at step $h$ are the $2^{h-1}$ nodes at depth $h$,
indexed so that $s\in\{2^{h-1},\dots,2^h-1\}$, and the initial state is the
root $s_1=1$. The action space is $\mathcal{A}=\{0,1\}$, and the transition
is deterministic: $T(s,a)=2s+a$ moves to the left or right child. Each
internal state $s$ has a preferred action $a^+(s)$ with reward
$r^*(s,a^+(s))=+1$, and the other action has reward $-1$. The tree has
$2^H-1=63$ internal states and $2^H=64$ distinct trajectories. The reference
policy is uniform, $\pi_{\mathrm{ref}}(\cdot\mid s)=(1/2,1/2)$. The behavior
policy takes the preferred action with probability $1-\epsilon$, i.e.\
$\mu(a^+(s)\mid s)=1-\epsilon$. Every subtree rooted at the same depth has the
same soft value. The KL-regularized optimal policy is therefore identical at
every state, with $\pi^*(a^+(s)\mid s)=(1+e^{-2/\beta})^{-1}$; for example,
this probability is $0.9933$ for $\beta=0.4$. Hence $\epsilon$ controls how
far $\mu$ is from $\pi^*$. We report two coverage coefficients, both computed
exactly by forward propagation of occupancies. The first is the all-policy
coefficient
$C_{\mathrm{all}}:=1+\sup_\pi\chi^2(d_H^\pi\|d_H^\mu)=\epsilon^{-H}$,
whose supremum is attained by the policy that always takes the non-preferred
action; it enters \cref{thm:bound_all_policy_coverage}. The second is the single-policy coefficient
$C_{\pi^*}:=\sum_{h=1}^H\chi^2(d_h^{\pi^*}\|d_h^\mu)$, where $d_h^\pi$ is the
state-action occupancy at step $h$; it enters \cref{thm:bound_single_policy_coverage} with comparator
$\pi^*$.

\paragraph{Experimental details}
\emph{Offline data.} Each dataset $\mathcal{D}$ consists of $n$ groups of
$m=8$ trajectories, sampled i.i.d.\ from $\mu$ starting at $s_1$. We use
$n=128$ unless stated otherwise.  We use a tabular softmax policy
$\pi_\theta(a\mid s)\propto\exp(\theta_{s,a})$, with the logits projected
onto $|\theta_{s,a}|\le 8$ for compactness. 
\emph{Methods.} GRPG minimizes the empirical variance objective
$\mathcal{E}_{\mathcal{D},\mathrm{VM}}$. AsymRE with shift $\delta$ minimizes
$\mathcal{E}_{\mathcal{D},\mathrm{VM}}+\frac{2m\beta\delta}{m-1}\,\mathcal{L}_{\mathcal{D},\mathrm{SFT}}$,
whose gradient is $2\beta\,\nabla\mathcal{E}_{\mathcal{D},\mathrm{AsymRE}}$ by
\cref{eq:propto_gra_AsymRE_SFT+VM}. \cref{thm:bound_single_policy_coverage} prescribes
$\delta_{\mathrm{thm}}=\frac{m-1}{2m\sqrt n}$. Behavior cloning (BC), the
limit $\delta\to\infty$, is the empirical
maximum-likelihood policy, uniform at unvisited states.
\emph{Optimization.} Policies are initialized at $\pi_{\mathrm{ref}}$. We
train with full-batch projected Adam  for at most $3000$ steps, or until
$\|\nabla\|_\infty<10^{-10}$. The objective depends on
$\mathcal{D}$ only through the counts of the $64$ trajectories and of the
within-group trajectory pairs, so the cost per step does not depend on $n$.
\emph{Evaluation.} We report the KL-regularized regret
$J_\beta(\pi^*)-J_\beta(\hat\pi)$, computed exactly by dynamic programming.
Tables report the median over independent datasets, with $5\%$--$95\%$
quantiles in brackets. Regret is in units of $10^{-3}$. All tables are
reproduced exactly, in about two minutes on a CPU.

\subsection{Ablations and Results}

\paragraph{Baseline shift (\cref{tab:synthetic-shift})}
We fix $\epsilon=0.03$, $\beta=0.4$, and $n=128$, and vary the shift
$\delta$. To measure pessimism directly, \cref{tab:synthetic-shift}
also reports the unsupported mass
$U(\hat\pi):=\sum_h\Pr_{\hat\pi}(N(s_h,a_h)=0)$, the expected number of steps
at which $\hat\pi$ takes an action never observed in $\mathcal{D}$.
The regret is U-shaped in $\delta$. Without a shift, GRPG puts substantial
probability on unobserved actions ($U=0.85$) and incurs large regret. Any
$\delta>0$ removes this failure; at $\delta=0.1\,\delta_{\mathrm{thm}}$ the regret is already smaller by a
factor of about $500$. The minimum lies around the prescribed value $\delta_{\mathrm{thm}}$.
Larger shifts drive $U$ toward zero but bias the solution toward $\mu$, and
the regret approaches that of BC.

\begin{table}[h]
  \centering
  \caption{Baseline-shift ablation ($\epsilon=0.03$, $\beta=0.4$, $n=128$;
  50 datasets), for which $\delta_{\mathrm{thm}}=0.0387$. Regret and $U$ are in
  units of $10^{-3}$.}
  \label{tab:synthetic-shift}
  \small
  \begin{tabular}{lrrrr}
    \toprule
    Method & $\delta$ & $\delta/\delta_{\mathrm{thm}}$ & Regret $\downarrow$ & $U(\hat\pi)$ \\
    \midrule
    GRPG               & $0$      & $0$      & $1366$ \;[1180, 1523]            & $853$ \\
    AsymRE             & $0.0039$ & $0.1$    & $2.69$ \;[2.19, 3.12]            & $3.33$ \\
    AsymRE             & $0.0193$ & $0.5$    & $0.314$ \;[0.216, 0.385]         & $0.576$ \\
    \textbf{AsymRE (Thm.~5)} & $0.0387$ & $1$ & $\mathbf{0.239}$ \;[0.178, 0.276] & $0.279$ \\
    AsymRE             & $0.0773$ & $2$      & $0.414$ \;[0.335, 0.483]         & $0.138$ \\
    AsymRE             & $0.309$  & $8$      & $4.06$ \;[3.26, 4.77]            & $0.032$ \\
    AsymRE             & $3.87$   & $100$    & $42.5$ \;[34.5, 51.8]            & $0.003$ \\
    BC                 & $\infty$ & $\infty$ & $57.4$ \;[46.9, 70.5]            & $0$ \\
    \bottomrule
  \end{tabular}
\end{table}

\paragraph{All-policy coverage (\cref{tab:synthetic-coverage})}
We fix $\beta=0.8$ (so $\pi^*$ is fixed) and $n=128$, and vary the behavior
parameter $\epsilon$. A smaller $\epsilon$ means that $\mu$ rarely takes the
non-preferred action, so the all-policy coefficient
$C_{\mathrm{all}}=\epsilon^{-H}$ grows rapidly as $\epsilon$ decreases, from
$1.4\times10^{3}$ at $\epsilon=0.3$ to $10^{12}$ at $\epsilon=0.01$. GRPG's
regret grows accordingly, from $0.012$ to $1.33$, as predicted by \cref{thm:bound_all_policy_coverage}.
AsymRE's regret stays at least an order of magnitude below GRPG's (except at
$\epsilon=0.3$, where both are small) and does not track $C_{\mathrm{all}}$.
Since changing $\epsilon$ also changes how well $\mu$ covers $\pi^*$,
AsymRE's regret varies with $C_{\pi^*}$ instead; it is largest at
$\epsilon=0.01$, where $C_{\pi^*}$ is largest.

\begin{table}[h]
  \centering
  \caption{All-policy coverage ablation ($\beta=0.8$, $n=128$; 30 datasets).
  Regret is in units of $10^{-3}$.}
  \label{tab:synthetic-coverage}
  \small
  \begin{tabular}{lrrrr}
    \toprule
    $\epsilon$ & $C_{\mathrm{all}}$ & $C_{\pi^*}$ & GRPG regret $\downarrow$ & AsymRE regret $\downarrow$ \\
    \midrule
    $0.01$ & $10^{12}$          & $19.8$  & $1329$ \;[1170, 1395] & $\mathbf{59.1}$ \;[51.6, 65.6] \\
    $0.1$  & $10^{6}$           & $0.14$  & $905$ \;[777, 1138]   & $\mathbf{2.84}$ \;[1.44, 4.56] \\
    $0.2$  & $1.6\times10^{4}$  & $2.38$  & $253$ \;[119, 343]    & $\mathbf{0.890}$ \;[0.657, 1.27] \\
    $0.3$  & $1.4\times10^{3}$  & $7.58$  & $12.2$ \;[0.00, 35.7] & $\mathbf{5.06}$ \;[3.70, 6.42] \\
    \bottomrule
  \end{tabular}
\end{table}

\paragraph{KL parameter and single-policy coverage (\cref{tab:synthetic-kl})}
We now fix $\epsilon=0.01$, so the behavior policy $\mu$, and hence
$C_{\mathrm{all}}=10^{12}$, is fixed. We vary $\beta$ instead. A larger
$\beta$ pulls $\pi^*$ toward the uniform $\pi_{\mathrm{ref}}$, so $\pi^*$
takes the non-preferred action more and more often: with probability
$6.7\times10^{-3}$ at $\beta=0.4$, but $0.22$ at $\beta=1.6$. Because $\mu$
takes this action only with probability $0.01$, the actions that $\pi^*$
uses become increasingly rare in the data. As a result, the single-policy
coefficient $C_{\pi^*}$ grows from $0.023$ at $\beta=0.4$ to
$3.6\times10^{4}$ at $\beta=1.6$. GRPG performs poorly throughout, since
$C_{\mathrm{all}}$ is huge for every $\beta$. AsymRE's regret grows with
$C_{\pi^*}$, as predicted by \cref{thm:bound_single_policy_coverage}. It is negligible while $\pi^*$ is well
covered ($\beta\le0.4$), and at $\beta=1.6$, where single-policy coverage
also fails, AsymRE is no better than GRPG.

\begin{table}[h]
  \centering
  \caption{KL-parameter ablation ($\epsilon=0.01$, so
  $C_{\mathrm{all}}=10^{12}$; $n=128$; 20 datasets). $\pi^*(\bar a)$ is the
  probability that $\pi^*$ takes the non-preferred action. Regret is in units
  of $10^{-3}$.}
  \label{tab:synthetic-kl}
  \small
  \begin{tabular}{lrrrr}
    \toprule
    $\beta$ & $\pi^*(\bar a)$ & $C_{\pi^*}$ & GRPG regret $\downarrow$ & AsymRE regret $\downarrow$ \\
    \midrule
    $0.2$ & $4.5\times10^{-5}$ & $0.21$            & $886$ \;[676, 991]    & $\mathbf{0.278}$ \;[0.205, 0.358] \\
    $0.4$ & $6.7\times10^{-3}$ & $0.023$           & $1538$ \;[1329, 1628] & $\mathbf{0.240}$ \;[0.217, 0.278] \\
    $0.8$ & $0.076$            & $19.8$            & $1321$ \;[1199, 1397] & $\mathbf{58.4}$ \;[51.6, 64.5] \\
    $1.6$ & $0.22$             & $3.6\times10^{4}$ & $782$ \;[714, 843]    & $769$ \;[693, 831] \\
    \bottomrule
  \end{tabular}
\end{table}

\newpage
\paragraph{Large $n$ (\cref{tab:synthetic-large-n})}
Although $\delta_{\mathrm{thm}}\to0$ as $n$ grows, the benefit of the shift
does not vanish over the range of $n$ we simulate. We fix $\epsilon=0.001$ and $\beta=0.4$ and increase $n$.
GRPG's regret stays around $1.5$ up
to $n=10^6$. The reason is that the empirical GRPG objective has many
minimizers: in every run we inspected, GRPG and $\pi^*$ both attain
$\mathcal{E}_{\mathcal{D},\mathrm{VM}}<10^{-13}$. The data constrain only the
observed trajectories, so probability can be moved toward unobserved actions
without changing the loss. Any $\delta>0$ breaks this tie in favor of the
policy supported by the data, regardless of the size of $\delta$.

\begin{table}[h]
  \centering
  \caption{Large-$n$ regime ($\epsilon=0.001$, $\beta=0.4$; 5 datasets per
  $n$). Regret is in units of $10^{-3}$.}
  \label{tab:synthetic-large-n}
  \small
  \begin{tabular}{lrrr}
    \toprule
    $n$ & $\delta_{\mathrm{thm}}$ & GRPG regret $\downarrow$ & AsymRE regret $\downarrow$ \\
    \midrule
    $10^{2}$ & $4.4\times10^{-2}$ & $1532$ \;[1470, 2778] & $\mathbf{3.27}$ \;[3.24, 7.91] \\
    $10^{4}$ & $4.4\times10^{-3}$ & $1497$ \;[1427, 1561] & $\mathbf{0.262}$ \;[0.233, 0.269] \\
    $10^{6}$ & $4.4\times10^{-4}$ & $1534$ \;[1533, 1542] & $\mathbf{0.186}$ \;[0.180, 0.197] \\
    \bottomrule
  \end{tabular}
\end{table}

\end{document}